\def\eqref#1{equation~\ref{#1}}
\def\1{\bm{1}}
\DeclareMathAlphabet{\mathsfit}{\encodingdefault}{\sfdefault}{m}{sl}
\SetMathAlphabet{\mathsfit}{bold}{\encodingdefault}{\sfdefault}{bx}{n}
\newtheorem{theorem}{Theorem}[section]
\newtheorem{lemma}[theorem]{Lemma}
\newtheorem{definition}[theorem]{Definition}
\newtheorem{assumption}[theorem]{Assumption}
\newtheorem{hypothesis}[theorem]{Hypothesis}
\newtheorem{remark}[theorem]{Remark}
\definecolor{limegreen}{rgb}{0.2, 0.7, 0.2}
\definecolor{darkgreen}{rgb}{0,0.6,0}
\definecolor{darkpurple}{rgb}{0.6,0.354, 0.6}
\definecolor{coral}{rgb}{1.0, 0.5, 0.3}
\definecolor{tomato}{rgb}{1.0, 0.39, 0.28}
\definecolor{orangered}{rgb}{1.0, 0.27, 0}
\definecolor{gold}{rgb}{1.0, 0.84, 0}
\definecolor{orange}{rgb}{1.0, 0.64, 0}
\definecolor{darkorange}{rgb}{1.0, 0.55, 0}
\definecolor{someorange}{rgb}{0.95, 0.45, 0.18}
\definecolor{someblue}{rgb}{0.18, 0.45, 0.95}
\definecolor{green}{rgb}{0, 0.5, 0}
\definecolor{limegreen}{rgb}{0.2, 0.8, 0.2}
\definecolor{forestgreen}{rgb}{0.13, 0.54, 0.13}
\begin{document}

\title{FedGuCci: Making Local Models More Connected in Landscape for Federated Learning}

\author{Zexi Li}
\authornote{Equal contributions.}
\affiliation{%
  \institution{University of Cambridge}
  \city{Cambridge}
  \country{United Kingdom}
}
\affiliation{%
  \institution{Zhejiang University}
  \city{Hangzhou}
  \country{China}
}
\email{zexi.li@zju.edu.cn}

\author{Jie Lin}
\authornotemark[1]
\affiliation{%
  \institution{Zhejiang University}
  \city{Hangzhou}
  \country{China}
}
\email{lj7674@zju.edu.cn}

\author{Zhiqi Li}
\authornotemark[1]
\affiliation{%
  \institution{Georgia Institute of Technology}
  \city{Atlanta}
  \country{United States}
}
\email{zli3167@gatech.edu}

\author{Didi Zhu}
\affiliation{%
  \institution{Zhejiang University}
  \city{Hangzhou}
  \country{China}
}
\email{didi_zhu@zju.edu.cn}

\author{Tao Shen}
\affiliation{%
  \institution{Zhejiang University}
  \city{Hangzhou}
  \country{China}
}
\email{tao.shen@zju.edu.cn}

\author{Tao Lin}
\authornote{Corresponding authors.}
\affiliation{%
  \institution{Westlake University}
  \city{Hangzhou}
  \country{China}
}
\email{lintao@westlake.edu.cn}

\author{Chao Wu}
\authornotemark[2]
\affiliation{%
  \institution{Zhejiang University}
  \city{Hangzhou}
  \country{China}
}
\email{chao.wu@zju.edu.cn}

\author{Nicholas D. Lane}
\affiliation{%
  \institution{University of Cambridge}
  \city{Cambridge}
  \country{United Kingdom}
}
\email{ndl32@cam.ac.uk}

\renewcommand{\shortauthors}{Li et al.}

\begin{abstract}
Federated learning (FL) involves multiple heterogeneous clients collaboratively training a global model via iterative local updates and model fusion. The generalization of FL's global model has a large gap compared with centralized training, which is its bottleneck for broader applications. 
In this paper, we study and improve FL's generalization through a fundamental ``connectivity'' perspective, which means how the local models are connected in the parameter region and fused into a generalized global model. The term ``connectivity'' is derived from linear mode connectivity (LMC), studying the interpolated loss landscape of two different solutions (e.g., modes) of neural networks. Bridging the gap between LMC and FL, in this paper, we leverage fixed anchor models to empirically and theoretically study the transitivity property of connectivity from two models (LMC) to a group of models (model fusion in FL). Based on the findings, we propose FedGuCci(+), improving group connectivity for better generalization. It is shown that our methods can boost the generalization of FL under client heterogeneity across various tasks (4 CV datasets and 6 NLP datasets) and model architectures (e.g., ViTs and PLMs). The code is available here: \href{https://github.com/ZexiLee/fedgucci}{\faGithub~FedGuCci Codebase}. \looseness=-1
\end{abstract}

\begin{CCSXML}
<ccs2012>
   <concept>
       <concept_id>10010147.10010919</concept_id>
       <concept_desc>Computing methodologies~Distributed computing methodologies</concept_desc>
       <concept_significance>500</concept_significance>
       </concept>
   <concept>
       <concept_id>10002978.10003029</concept_id>
       <concept_desc>Security and privacy~Human and societal aspects of security and privacy</concept_desc>
       <concept_significance>500</concept_significance>
       </concept>
   <concept>
       <concept_id>10002951.10003227</concept_id>
       <concept_desc>Information systems~Information systems applications</concept_desc>
       <concept_significance>300</concept_significance>
       </concept>
 </ccs2012>
\end{CCSXML}

\ccsdesc[500]{Computing methodologies~Distributed computing methodologies}
\ccsdesc[500]{Security and privacy~Human and societal aspects of security and privacy}
\ccsdesc[300]{Information systems~Information systems applications}

\keywords{Linear mode connectivity, federated learning, generalization, data heterogeneity, foundation models}


\maketitle

\section{Introduction}
\begin{figure}[!t]
    \centering
    \includegraphics[width=0.999\columnwidth]{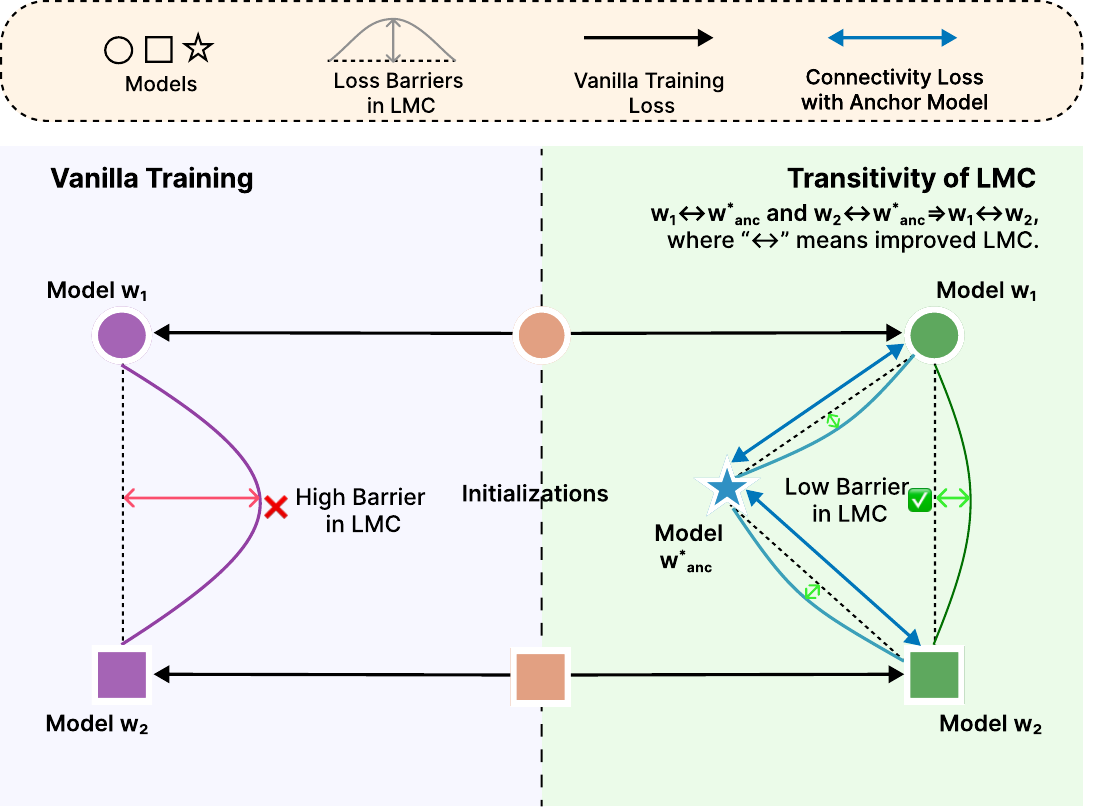}
    \vspace{-0.1cm}
    \caption{\textbf{Illustration on transitivity of linear mode connectivity.} \textcolor{darkpurple}{Left: vanilla training}, where models have high barriers in LMC. \textcolor{darkgreen}{Right: transitivity of LMC.} Models $\mathbf{w}_1$ and $\mathbf{w}_2$ are independently trained, and they are all learned to have good LMC with anchor model $\mathbf{w}_\text{anc}^*$. At the end of the training, models $\mathbf{w}_1$ and $\mathbf{w}_2$ have improved LMC, showing the transitivity of LMC.}
    \label{fig:transitivity}
\end{figure}
Federated learning (FL) is a privacy-preserving and communication-efficient distributed training paradigm that enables multiple data owners to collaboratively train a global model without sharing their data~\cite{mcmahan2017communication,li2022can}. However, clients always have heterogeneous data~\cite{li2020federated,lin2020ensemble,wu2023learning,li2023edge,li2022towards}, and in each round, they conduct local training of multiple epochs based on the data, causing model drifts of local models~\cite{karimireddy2020scaffold,wang2020tackling,wu2025towards,gao2023fedios}, further resulting in generalization degradation of the fused global model~\cite{pmlr-v202-li23s,acar2020federated}. Previous works improve the generalization by seeking flatter minima~\cite{fedsam-eccv,fedsam-icml} or using local proximal regularization~\cite{li2020federated} to remedy the model drifts. 
While in this paper, we take a more fundamental perspective on how the local models are \textbf{connected} with each other under model drifts (\textbf{group connectivity}) and how they are fused into a \textbf{generalized} global model based on such connectivity. 

The notion of group connectivity is inspired by linear mode connectivity (LMC), which studies the interpolated loss landscape of two SGD solutions (e.g., modes)~\citep{draxler2018essentially,zhang2021understanding,entezari2021role,li2024training}. It is found that two trained models with different random seeds of batch orders (depicted by \textit{\textbf{SGD noise}}), even if have the \textit{\textbf{same initialization}}, may cause a barrier along their \textit{\textbf{linear interpolation path}} (i.e., the LMC path), indicating the two SGD solutions are not in the same loss landscape basin~\citep{draxler2018essentially,garipov2018loss,ainsworth2022git}. This observation is quite analogous to model drift in FL, where multiple local models are \textit{\textbf{initialized the same}}, but due to \textit{\textbf{SGD noise and bias}}~\cite{li2020federated,karimireddy2020scaffold} caused by heterogeneous data and asynchronous training, local models drift from each other and have inferior generalization after \textit{\textbf{linear model fusion}}. 
This analogy inspires us to think about whether we can leverage the insights and techniques from LMC to improve the generalization of FL through the lens of connectivity. 

Previous works propose to learn neural network subspaces for increasing LMC between two models when simultaneously training them~\citep{wortsman2021learning,garipov2018loss}. They use the midpoints of the improved LMC for ensembling. In this paper, we aim to leverage the idea of increasing LMC to improve the connectivity among the local models in FL. However, there is a crucial gap between LMC and FL. In \citet{wortsman2021learning}, they can retain and train two models simultaneously, while in each round of FL, every local model is independently trained for several epochs. In addition, LMC only considers two models, while FL requires the connectivity of multiple models. 

Therefore, we utilize a fixed anchor model to study the transitivity property of LMC and hypothesize that: if LMC between model $\mathbf{w}_1$ and anchor model $\mathbf{w}_\text{anc}^*$, as well as between model $\mathbf{w}_2$ and anchor model $\mathbf{w}_\text{anc}^*$, is independently enhanced, then the LMC between models $\mathbf{w}_1$ and $\mathbf{w}_2$ will also improve (an illustration of the transitivity is in \autoref{fig:transitivity}). Through theoretical and empirical analyses, we verify the transitivity of LMC and then extend it to the group connectivity of multiple models. 

Based on the above findings, we propose \textbf{Fed}erated Learning with Improved \textbf{G}ro\textbf{u}p \textbf{C}onne\textbf{c}tiv\textbf{i}ty (\textbf{FedGuCci}), which leverage the global models as the anchor models for improving group connectivity of local models. Further, due to data heterogeneity in FL, clients' local loss landscapes are different and shifted. Thus, we propose a strengthened version, FedGuCci+, by incorporating some heterogeneity-resistant modules for aligning local loss landscapes. Our contributions are listed below.
\begin{itemize}[leftmargin=*,nosep]
    \item We study FL from the connectivity perspective, which is novel and fundamental to understanding the generalization of FL's global model.
    \item We theoretically and empirically verify the transitivity of LMC and the group connectivity of multiple models.
    \item We propose FedGuCci and FedGuCci+. Extensive experiments show that our methods can improve the generalization of FL across various settings. 
\end{itemize}

The rest of the paper is organized as follows. In \autoref{sec:prelim_related_works}, we provide the preliminaries of FL and LMC and the most related works. In \autoref{sec:transitivity}, we give the hypothesis about the transitivity of connectivity and the theoretical and empirical analyses. Based on the findings, in \autoref{sec:methods}, we propose FedGuCci(+) in FL, and then the experimental results are in \autoref{sec:exp}. Lastly, we conclude the paper in \autoref{sec:conclusion}.

\section{Preliminaries and Related Works} \label{sec:prelim_related_works}

In this section, we present the preliminaries of FL and LMC and the most relevant works to this paper.

\subsection{Preliminary of Federated Learning}\label{subsec:pre_fl}

FL includes a server and $M$ clients to collaboratively learn a global model without data sharing \citep{mcmahan2017communication}. 
Denote the set of clients by $\mathcal{S}$, the local dataset of client $i$ by $\mathcal{D}_i=\{(x_j, y_j)\}_{j=1}^{|\mathcal{D}_{i}|}$, the sum of clients' data by $\mathcal{D} = \bigcup_{i\in\mathcal{S}} \mathcal{D}_i$. The IID data distributions of clients refer to each client's distribution $\mathcal{D}_i$ is IID sampled from $\mathcal{D}$. However, in practical FL scenarios, \textit{heterogeneity} exists among clients whose data are \textit{non-IID} with each other, causing model drifts. During FL training, clients iteratively conduct local updates and communicate with the server for model fusion. In the local updates, \textit{the number of local epochs is $E$}; when $E$ is larger, the communication is more efficient, but the updates are more asynchronous, also the model drifts are more severe. The total number of communication rounds is $T$.\looseness=-1

Denote the global model and the client $i$'s local model in communication round $t\in [T]$ by $\mathbf{w}_{g}^{t}$ and $\mathbf{w}_{i}^{t}$. 
In each round, clients' local models are initialized as the global model that $\mathbf{w}_{i}^{t} \leftarrow \mathbf{w}_{g}^{t}$, and clients conduct local training in parallel. In each local training epoch, clients conduct SGD update with a local learning rate $\eta_l$, and each SGD iteration shows as
\begin{align}
\mathbf{w}_{i}^{t} \leftarrow \mathbf{w}_{i}^{t} - \eta_l \nabla \ell(B_b, \mathbf{w}_{i}^{t}), \text{ for }  b=1,2,\cdots,B, \label{eqt_client}
\end{align}
where $\ell$ is the batch-level loss function and $B_b$ is the mini-batch sampled from $\mathcal{D}_i$ at the $b$-th iteration. 
After local updates, the server samples a set $\mathcal{S}^t$ of $K$ clients and conducts \textit{linear model fusion} to generate a new global model. The participation ratio is $\rho=\frac{K}{M}$. The model fusion process is as\looseness=-1
\begin{align}
\mathbf{w}_{g}^{t+1} = \sum_{i\in \mathcal{S}^t} \mu_{i}\textbf{w}_{i}^{t}, \text{ s.t. } \mu_{i} \geq 0,
\label{eqt_FedAvg}
\end{align}
where $\boldsymbol{\mu} = [\mu_i]_{i \in\mathcal{S}^t}$ is the fusion weights. For vanilla FedAvg, it adopts normalized weights proportional to the data sizes, $\mu_{i}=\frac{|\mathcal{D}_{i}|}{|\mathcal{D}^t|}, \mathcal{D}^t = \bigcup_{i\in\mathcal{S}^t} \mathcal{D}_i$. A recent study shows that the sum of fusion weights can be smaller than 1 to improve generalization by global weight decay regularization~\cite{pmlr-v202-li23s}.

\begin{figure*}
    \centering
    \setlength{\subfigcapskip}{-7pt} 
    \subfigure[]{\includegraphics[width=0.49\columnwidth]{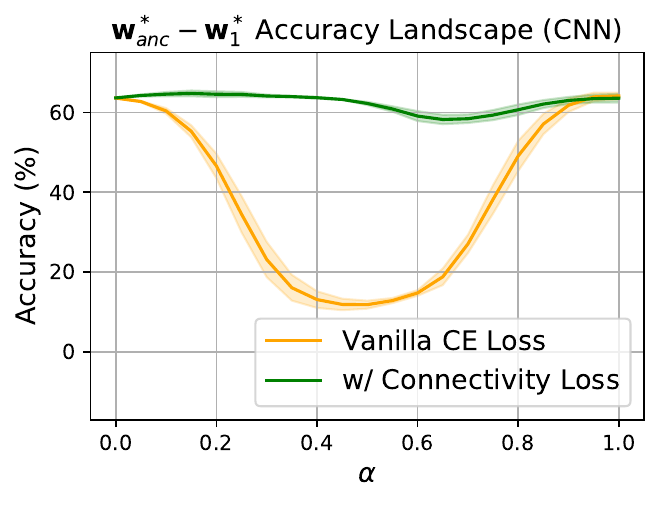}}
    \subfigure[]{\includegraphics[width=0.49\columnwidth]{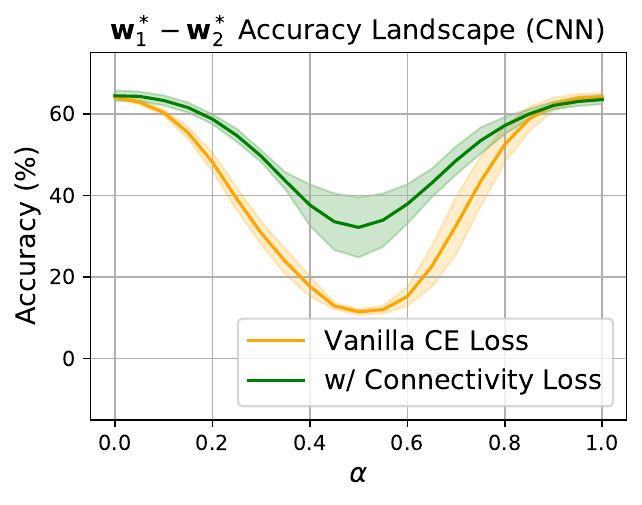}}
    \subfigure[]{\includegraphics[width=0.49\columnwidth]{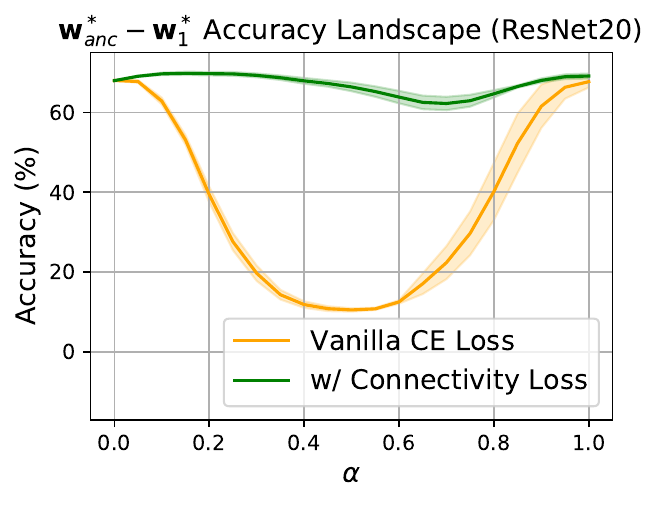}}
    \subfigure[]{\includegraphics[width=0.49\columnwidth]{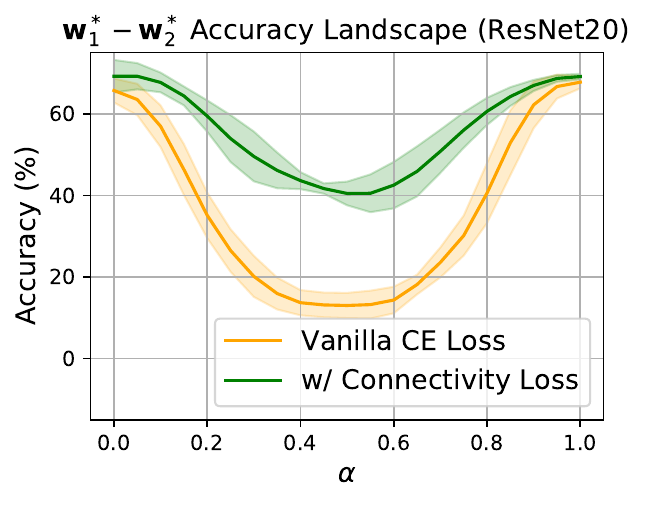}}
    \vspace{-0.2cm}
    \caption{\textbf{Linear mode connectivity landscapes of test accuracy, showcasing the transitivity.} The accuracy barrier is shown as the maximal accuracy drop along the landscape. \textbf{(a) and (c):} LMC between one trained model and the anchor model, and the barrier is eliminated for connectivity loss. \textbf{(b) and (d):} LMC between two trained models, connectivity loss has the lower barriers, showing the transitivity of LMC. CIFAR-10 is used.}
    \label{fig:transitivity_lmc}
\end{figure*}

\subsection{Preliminary of Linear Mode Connectivity}
\textbf{Linear mode connectivity (LMC).} LMC refers to the loss landscape where two models $\mathbf{w}_1$ and $\mathbf{w}_2$ are linearly interpolated by $\mathbf{w} = \alpha \mathbf{w}_1 + (1 - \alpha)\mathbf{w}_2$, for $\alpha \in [0, 1]$. Usually, there are 
three forms of LMC regarding different $\mathbf{w}_1$ and $\mathbf{w}_2$. (1) LMC between two SGD solutions with the same initialization but different random seeds (batch orders)~\citep{ainsworth2022git}; (2) LMC between two SGD solutions with different initializations~\citep{entezari2021role}; (3) LMC from the initialization and the trained model~\citep{vlaar2022can}. LMC is depicted by the barriers in the landscape, the lower the barriers, the better the LMC. We present the definitions of loss and accuracy barriers below.

\begin{definition}\label{def:loss_barrier}
\textbf{Loss and accuracy barriers.} Let $f_\mathbf{w}(\cdot)$ be a function represented by a neural network with parameter vector $\mathbf{w}$ that includes all parameters. $\mathcal{L}(\mathbf{w})$ is the given loss (e.g., train or test error) of $f_\mathbf{w}(\cdot)$ and $\mathcal{A}(\mathbf{w})$ is its accuracy function. Given two independently trained networks $\mathbf{w}_1$ and $\mathbf{w}_2$, let $\mathcal{L}(\alpha \mathbf{w}_1 + (1 - \alpha)\mathbf{w}_2)$ be the averaged loss of the linearly interpolated network and $\mathcal{A}(\alpha \mathbf{w}_1 + (1 - \alpha)\mathbf{w}_2)$ be its accuracy, for $\alpha \in [0, 1]$. 
The loss barrier $B_{loss}(\mathbf{w}_1, \mathbf{w}_2)$ and accuracy barrier $B_{acc}(\mathbf{w}_1, \mathbf{w}_2)$ along the linear path between $\mathbf{w}_1$ and $\mathbf{w}_2$ are defined as: 
\end{definition}
\vspace{-0.5cm}
\begin{small}
\begin{align}
B_{loss}(\mathbf{w}_1, \mathbf{w}_2) =& \sup_\alpha \left\{[\mathcal{L}(\alpha \mathbf{w}_1 + (1 - \alpha)\mathbf{w}_2)]\right.\nonumber \\ 
& \left.-[ \alpha\mathcal{L}(\mathbf{w}_1)+(1 - \alpha) \mathcal{L}(\mathbf{w}_2)]\right\}.\label{eqn:loss_barrier}\\
B_{acc}(\mathbf{w}_1, \mathbf{w}_2) =& \sup_\alpha \left[1 - \frac{\mathcal{A}(\alpha \mathbf{w}_1 + (1 - \alpha)\mathbf{w}_2)}{\alpha\mathcal{A}(\mathbf{w}_1)+(1 - \alpha) \mathcal{A}(\mathbf{w}_2)}\right].\label{eqn:acc_barrier}
\end{align}
\end{small}
The loss barrier is not bounded, while the accuracy barrier is bounded within $[0, 1]$.

\textbf{Reducing the barriers in LMC.} In~\citet{wortsman2021learning}, the authors train two SGD solutions simultaneously while also learning a line of connected subspace between the two models. It also adds a regularization loss to make the two solutions orthogonal so that the midpoints of the LMC path can have diversity for ensembling. While in our paper, we also use similar techniques for improving LMC, but we do not require orthogonality. Also, instead of simultaneously training two models, we individually train models, improve their LMC with a fixed anchor model, and verify the themselves LMC's transitivity.

\subsection{Most Related Works}
\textbf{LMC and FL.} In \citet{hahn2022connecting}, the authors propose to train two models (one for personalization and another for generalization) at clients and learn a connected subspace between the two models for better personalization. Recently, a concurrent work \cite{zhou2023mode} empirically and theoretically verifies that when clients' data are more heterogeneous, the local loss landscapes will be more shifted, causing worse LMC. However, they haven't proposed effective algorithm in FL based on LMC insights, where our contributions lie. To the best of our knowledge, our paper may be the first paper to study and improve the generalization of FL from the connectivity perspective.

\textbf{Comparison with FedProx.} FedProx~\cite{li2020federated} adopts the current round's global model as a regularization term for tackling heterogeneity. Instead, we utilize the historical global models as the anchor models and learn to improve the connectivity between the local model with these anchor models. Thus, our methods and FedProx have fundamental differences in leveraging the global models regarding the motivation and implementations.

Due to space limit, we include more related works in \autoref{app_sec:more_related}, e.g., generalization of FL and LMC basics.

\section{Towards the Transitivity of Connectivity}\label{sec:transitivity}

In this section, we verify the transitivity of LMC and group connectivity by leveraging fixed anchor models, paving the way for improving generalization in FL.

\subsection{Transitivity of Linear Mode Connectivity}

We first give the hypothesis on the transitivity of LMC.

\begin{hypothesis} \label{hyp:hypothesis}
    \textbf{Transitivity of linear mode connectivity (informal).} There are three models $\{\mathbf{w}_1, \mathbf{w}_2, {\mathbf{w}_{\text{anc}}^*}\}$. If the linear mode connectivity between $\mathbf{w}_1$ and ${\mathbf{w}_{\text{anc}}^*}$, as well as the one between $\mathbf{w}_2$ and ${\mathbf{w}_{\text{anc}}^*}$, are independently improved, then, the linear mode connectivity between $\mathbf{w}_1$ and $\mathbf{w}_2$ is also improved.
\end{hypothesis}

We make a theoretical analysis to prove the transitivity of LMC. We make the assumption below, following the Assumption 7 in \citet{ferbach2023proving} and the Assumption 1 in \citet{li2019convergence}.
\begin{assumption}\label{ass:assumption_1}
    $\forall y \in \mathbb{Y}$, the loss function $L\mathbb(\cdot,y)$ is convex and 1-Lipschitz for each $y$ and the loss $\mathcal{L}(\cdot)$ is $\gamma$-smooth, where $\mathcal{L}(\mathbf{w})=\mathbb{E}[L(f_\mathbf{w}(x),y)]$ and the expectation $\mathbb{E}$ is taken over the dataset.
\end{assumption}

\begin{lemma}\label{lemma:lemma1}
    Set the uniform and bounded domain for network $\mathbf{w}$ as $\mathcal{E}_\epsilon=\{\mathbf{w}\in \Omega |\mathcal{L}(\mathbf{w})<\epsilon\}$.  Define a random event $D_\epsilon({\mathbf{w}_{\text{anc}}^*})$ as $D_\epsilon({\mathbf{w}_{\text{anc}}^*})=\{\mathbf{w} \in \mathcal{E}_\epsilon |\forall \alpha \in [0,1], \mathcal{L}(\alpha {\mathbf{w}_{\text{anc}}^*}+(1-\alpha)\mathbf{w})\le \epsilon\}$.  Consider an anchor model ${\mathbf{w}_{\text{anc}}^*}$ and an arbitrary network $\mathbf{w}$  and for $\epsilon>0$.  Then for $\Vert \mathbf{w}-{\mathbf{w}_{\text{anc}}^*} \Vert_\infty \le \frac{d}{2}$,
    \begin{equation}
            P(D_\epsilon({\mathbf{w}_{\text{anc}}^*})) \le (\frac{d_\epsilon}{d})^S,
    \end{equation}
    where $d_\epsilon=\left|\mathcal{E}_\epsilon\right|^\frac{1}{S}$ represents the average diameter of region $\mathcal{E}_\epsilon$, $S$ represents the number of parameters of the neural network and the equality holds if and only if $\mathcal{E}_\epsilon\subset \{\mathbf{w} | \Vert\mathbf{w}-{\mathbf{w}_{\text{anc}}^*}\Vert_\infty \le d\}$ is a star domain centered at ${\mathbf{w}_{\text{anc}}^*}$.  Thus, when $P(D_\epsilon({\mathbf{w}_{\text{anc}}^*})))>1-\delta$, it holds $d<{\frac{d_\epsilon}{(1-\delta)^\frac{1}{S}}}$.
\end{lemma}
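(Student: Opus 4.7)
The plan is to reinterpret $P(D_\epsilon({\mathbf{w}_{\text{anc}}^*}))$ as a ratio of Lebesgue volumes under the uniform prior on the bounding hypercube $H := \{\mathbf{w}: \Vert\mathbf{w}-{\mathbf{w}_{\text{anc}}^*}\Vert_\infty \le d/2\}$, whose volume is exactly $d^S$. The upper bound will then drop out of an elementary set containment $D_\epsilon({\mathbf{w}_{\text{anc}}^*}) \subseteq \mathcal{E}_\epsilon$ together with the defining identity $|\mathcal{E}_\epsilon| = d_\epsilon^S$, and the stated bound on $d$ will be a one-line algebraic rearrangement.

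\textbf{Key steps.} First, I would observe that taking $\alpha = 0$ in the definition of $D_\epsilon({\mathbf{w}_{\text{anc}}^*})$ forces $\mathcal{L}(\mathbf{w}) \le \epsilon$, so $D_\epsilon({\mathbf{w}_{\text{anc}}^*}) \subseteq \mathcal{E}_\epsilon$ unconditionally. Since $\mathbf{w}$ is drawn uniformly from $H$, I would then write
\[
P(D_\epsilon({\mathbf{w}_{\text{anc}}^*})) \;=\; \frac{|D_\epsilon({\mathbf{w}_{\text{anc}}^*}) \cap H|}{d^S} \;\le\; \frac{|\mathcal{E}_\epsilon|}{d^S} \;=\; \left(\frac{d_\epsilon}{d}\right)^S,
\]
invoking the above containment together with the trivial bound $|A \cap B| \le |A|$. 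Finally, combining $P(D_\epsilon({\mathbf{w}_{\text{anc}}^*})) > 1 - \delta$ with this upper bound gives $(d_\epsilon/d)^S > 1 - \delta$, which after one root extraction yields $d < d_\epsilon/(1-\delta)^{1/S}$.

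\textbf{Equality case and main obstacle.} The more delicate piece will be characterizing when the inequality is tight. Both estimates above must be simultaneously saturated: the first requires $D_\epsilon({\mathbf{w}_{\text{anc}}^*}) = \mathcal{E}_\epsilon$, meaning every low-loss $\mathbf{w}$ admits a linear path to ${\mathbf{w}_{\text{anc}}^*}$ that stays inside $\mathcal{E}_\epsilon$---exactly the star-domain property of $\mathcal{E}_\epsilon$ centered at ${\mathbf{w}_{\text{anc}}^*}$; the second requires the containment $\mathcal{E}_\epsilon \subseteq H$ (so that $|\mathcal{E}_\epsilon \cap H| = |\mathcal{E}_\epsilon|$). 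I expect this to be the main subtlety, because \autoref{ass:assumption_1} only gives convexity of $L(\cdot, y)$ in the network \emph{output}, not of $\mathcal{L}$ in the \emph{weights}: since $f_\mathbf{w}$ is nonlinear in $\mathbf{w}$, $\mathcal{E}_\epsilon$ need not be convex, so one cannot shortcut to convexity and must instead invoke the weaker star-domain property as the precise geometric condition for tightness, matching the lemma's statement.
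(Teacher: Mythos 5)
Your main argument coincides with the paper's: both treat $P(D_\epsilon(\mathbf{w}_{\text{anc}}^*))$ as a Lebesgue-volume ratio over the cube $\mathcal{V}_d=\{\mathbf{w}:\Vert\mathbf{w}-\mathbf{w}_{\text{anc}}^*\Vert_\infty\le d/2\}$ of volume $d^S$, use $D_\epsilon(\mathbf{w}_{\text{anc}}^*)\subseteq\mathcal{E}_\epsilon$ to get $P(D_\epsilon(\mathbf{w}_{\text{anc}}^*))\le|\mathcal{E}_\epsilon|/d^S=(d_\epsilon/d)^S$, and obtain $d<d_\epsilon/(1-\delta)^{1/S}$ by rearrangement. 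You also correctly identify the two conditions that must hold at equality ($\mathcal{E}_\epsilon\subseteq\mathcal{V}_d$ and the star-domain property), and your remark that one cannot shortcut via convexity of $\mathcal{L}$ in the weights is apt.

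The one genuine gap is the ``only if'' direction of the equality characterization, which you flag as the delicate piece but do not actually carry out. Saturation of the volume bound only forces $|D_\epsilon(\mathbf{w}_{\text{anc}}^*)|=|\mathcal{E}_\epsilon\cap\mathcal{V}_d|=|\mathcal{E}_\epsilon|$, i.e.\ equality of the sets \emph{up to Lebesgue-null sets}; to conclude that the star-domain property and the containment must hold, you need to show that their failure excludes a set of \emph{positive measure}. The paper does exactly this: if some $\mathbf{w}_0\in\mathcal{E}_\epsilon$ has a barrier point $\mathbf{w}_1=\alpha_1\mathbf{w}_{\text{anc}}^*+(1-\alpha_1)\mathbf{w}_0$ with $\mathcal{L}(\mathbf{w}_1)>\epsilon$, then $\gamma$-smoothness of $\mathcal{L}$ (Assumption~\ref{ass:assumption_1}) gives $\mathcal{L}>\epsilon$ on a ball $U_{\epsilon_1}(\mathbf{w}_1)$, and openness of $\mathcal{E}_\epsilon$ then yields an open ball $U_{\epsilon_2}(\mathbf{w}_0)\subset\mathcal{E}_\epsilon$ entirely disjoint from $D_\epsilon(\mathbf{w}_{\text{anc}}^*)$ (since the map $\mathbf{w}_2\mapsto\alpha_1\mathbf{w}_{\text{anc}}^*+(1-\alpha_1)\mathbf{w}_2$ contracts distances into $U_{\epsilon_1}(\mathbf{w}_1)$), forcing a strict volume deficit; an analogous openness argument handles $\mathcal{E}_\epsilon\not\subseteq\mathcal{V}_d$. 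Without this positive-measure step your equality claim is an assertion, not a proof; supplying it would complete the argument and bring it in line with the paper's.
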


\begin{remark}
    This lemma links the distance between parameters to LMC, describing that the greater the probability of LMC (i.e., a small loss barrier) existing between the network $\mathbf{w}$ and the anchor model ${\mathbf{w}_{\text{anc}}^*}$, the smaller the distance should be between $\mathbf{w}$ and ${\mathbf{w}_{\text{anc}}^*}$.
\end{remark}

Then, we provide the following theorem.

\begin{theorem}\label{theorem:theorem1}
 We define a two-layer neural network with ReLU activation, and the function is $f_{\mathbf{v},\mathbf{U}}(\mathbf{x})=\mathbf{v}^\top\sigma(\mathbf{U}\mathbf{x})$ where $\sigma(\cdot)$ is the ReLU activation function. $\mathbf{v}\in \mathbb{R}^h$ and $\mathbf{U} \in \mathbb{R}^{h\times l}$ are parameters\footnote{For simplicity and without loss of generality, we omit the bias terms.} and $\mathbf{x} \in \mathbb{R}^l$ is the input which is taken from $\mathbb{X}=\{\mathbf{x}\in \mathbb{R}^l| \Vert\mathbf{x}\Vert_2<b\}$ uniformly.  Denote the deterministic anchor model as ${\mathbf{w}_{\text{anc}}^*}=\{\mathbf{U}_{\text{anc}}^*,\mathbf{v}_{\text{anc}}^*\}$, with $\Vert \mathbf{v}_{\text{anc}}^* \Vert_2<d_{\text{anc}}$ and consider two different networks $\mathbf{w}_1,\mathbf{w}_2$ parameterized with $\{\mathbf{U}_1,\mathbf{v}_1\}$ and $\{\mathbf{U}_2,\mathbf{v}_2\}$ respectively.  Each element of $\mathbf{U}_1$ and $\mathbf{U}_2$, $\mathbf{v}_1$ and $\mathbf{v}_2$ is sampled from a uniform distribution centered at $ \mathbf{U}_{\text{anc}}^* $ and $ \mathbf{v}_{\text{anc}}^* $ with an interval length of $ d $.  If with probability $1-\delta$, $\sup_\alpha \mathcal{L}(\alpha {\mathbf{w}_{\text{anc}}^*}+(1-\alpha)\mathbf{w}_1)<\epsilon$ and $\sup_\alpha \mathcal{L}(\alpha {\mathbf{w}_{\text{anc}}^*}+(1-\alpha)\mathbf{w}_2)<\epsilon$, then with probability $1-\delta$, it has,
    \begin{equation}
       B_{loss}(\mathbf{w}_1,\mathbf{w}_2) \le \frac{\sqrt{2h}b}{2(1-\delta)^\frac{2}{hl+h}} d_\epsilon(d_\epsilon+d_{\text{anc}})\log(12h/\delta),
    \end{equation}
    where $B_{loss}(\mathbf{w}_1,\mathbf{w}_2)$ is the loss barrier as \autoref{eqn:loss_barrier}.
\end{theorem}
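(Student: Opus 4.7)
The plan is to combine \autoref{lemma:lemma1} with a direct decomposition of the two-layer ReLU output to translate both anchor-LMC hypotheses into a barrier bound between $\mathbf{w}_1$ and $\mathbf{w}_2$. First I would invoke \autoref{lemma:lemma1} on each pair $(\mathbf{w}_i,\mathbf{w}_{\text{anc}}^*)$ for $i=1,2$. The hypothesis that $\sup_\alpha \mathcal{L}(\alpha\mathbf{w}_{\text{anc}}^* + (1-\alpha)\mathbf{w}_i)<\epsilon$ with probability $1-\delta$ forces the sampled coordinates of $(\mathbf{U}_i,\mathbf{v}_i)$ to lie within $d/2$ of the anchor coordinates, where $d\le d_\epsilon/(1-\delta)^{1/(hl+h)}$. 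Triangle inequalities then yield $\|\mathbf{U}_1-\mathbf{U}_2\|_\infty \le d$, $\|\mathbf{v}_1-\mathbf{v}_2\|_\infty\le d$, and $\|\mathbf{v}_i\|_2\le d_{\text{anc}}+\sqrt{h}\,d/2$, which will drive the subsequent bounds.

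Second, I would reduce the loss barrier to an output-level quantity. Using convexity of $L(\cdot,y)$ to lower-bound $\alpha\mathcal{L}(\mathbf{w}_1)+(1-\alpha)\mathcal{L}(\mathbf{w}_2)$ by $\mathbb{E}[L(\alpha f_{\mathbf{w}_1}(x)+(1-\alpha)f_{\mathbf{w}_2}(x),y)]$, and then the $1$-Lipschitz property of $L(\cdot,y)$, I obtain
\begin{align*}
B_{loss}(\mathbf{w}_1,\mathbf{w}_2)\le \sup_\alpha \mathbb{E}\bigl|f_{\mathbf{w}_\alpha}(x) - \alpha f_{\mathbf{w}_1}(x) - (1-\alpha)f_{\mathbf{w}_2}(x)\bigr|.
\end{align*}
For the two-layer architecture, with $\mathbf{v}_\alpha=\alpha\mathbf{v}_1+(1-\alpha)\mathbf{v}_2$ and $\mathbf{U}_\alpha=\alpha\mathbf{U}_1+(1-\alpha)\mathbf{U}_2$, a direct expansion gives $g_\alpha(x)=\alpha\mathbf{v}_1^\top[\sigma(\mathbf{U}_\alpha x)-\sigma(\mathbf{U}_1 x)]+(1-\alpha)\mathbf{v}_2^\top[\sigma(\mathbf{U}_\alpha x)-\sigma(\mathbf{U}_2 x)]$. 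Coordinate-wise $1$-Lipschitzness of $\sigma$ together with Cauchy--Schwarz then yields $|g_\alpha(x)|\le \alpha(1-\alpha)(\|\mathbf{v}_1\|_2+\|\mathbf{v}_2\|_2)\|(\mathbf{U}_1-\mathbf{U}_2)x\|_2 \le \tfrac14(\|\mathbf{v}_1\|_2+\|\mathbf{v}_2\|_2)\|(\mathbf{U}_1-\mathbf{U}_2)x\|_2$.

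Third, I would control $\|(\mathbf{U}_1-\mathbf{U}_2)x\|_2$ by concentration. Each coordinate $((\mathbf{U}_1-\mathbf{U}_2)x)_j$ is a sum of $l$ independent zero-mean random variables with range $[-d|x_k|,d|x_k|]$, so Hoeffding coordinate-wise plus a union bound over $j=1,\ldots,h$ produces a high-probability bound of order $\sqrt{h}\,d\,b\sqrt{\log(h/\delta)}$. Combining this with the $\|\mathbf{v}_i\|_2$ bound from step one, taking the expectation over $x$, and finally substituting $d\le d_\epsilon/(1-\delta)^{1/(hl+h)}$ reproduces the claimed structure $\sqrt{2h}\,b\cdot d_\epsilon(d_\epsilon+d_{\text{anc}})/(1-\delta)^{2/(hl+h)}$, with a final union bound over the two anchor-LMC events and the Hoeffding event keeping the overall failure probability at $\delta$ (the factor $12$ inside the log arises from this triple union bound).

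The main obstacle will be the constant/exponent bookkeeping. Hoeffding naturally yields a $\sqrt{\log(\cdot/\delta)}$ factor whereas the statement has a plain $\log(12h/\delta)$, so I must either absorb the square root using a log-regime argument or replace Hoeffding with a Bernstein-style concentration that delivers genuine logarithmic scaling. Similarly, reconciling the $\sqrt{h}$ arising from the $\ell_\infty$-to-$\ell_2$ conversion on $\mathbf{v}_i$ with the factor $(d_\epsilon+d_{\text{anc}})$ in the bound (rather than $(d_{\text{anc}}+\sqrt{h}\,d_\epsilon)$) requires routing the $\sqrt{h}$ through the prefactor. Beyond this accounting, the conceptual content is straightforward: \autoref{lemma:lemma1} converts LMC hypotheses into parameter-space control, the two-layer ReLU identity converts parameter closeness into output closeness, and $1$-Lipschitz convex loss transports output closeness back into a loss barrier.
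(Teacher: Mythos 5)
Your overall architecture matches the paper's: Lemma~\ref{lemma:lemma1} converts the anchor-LMC hypotheses into $d<d_\epsilon/(1-\delta)^{1/(hl+h)}$, convexity plus $1$-Lipschitzness of $L(\cdot,y)$ reduces $B_{loss}$ to $\mathbb{E}|z_{\mathbf{x}}(\alpha)|$, the identity $z_{\mathbf{x}}(\alpha)=\alpha\mathbf{v}_1^\top[\sigma(\mathbf{U}_\alpha\mathbf{x})-\sigma(\mathbf{U}_1\mathbf{x})]+(1-\alpha)\mathbf{v}_2^\top[\sigma(\mathbf{U}_\alpha\mathbf{x})-\sigma(\mathbf{U}_2\mathbf{x})]$ is exactly the paper's decomposition, and $\|(\mathbf{U}_1-\mathbf{U}_2)\mathbf{x}\|_2$ is controlled by row-wise Hoeffding plus a union bound in both arguments. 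The genuine gap is in how you handle the $\mathbf{v}$ factors. Bounding them deterministically by $\|\mathbf{v}_i\|_2\le d_{\text{anc}}+\sqrt{h}\,d/2$ and applying Cauchy--Schwarz yields $|z_{\mathbf{x}}(\alpha)|\lesssim \sqrt{h}\,b\,(\sqrt{h}\,d^2+d\,d_{\text{anc}})\sqrt{\log(h/\delta)}$, which differs from the claimed bound in two ways you already noticed: an extra $\sqrt{h}$ on the quadratic term and a $\sqrt{\log}$ where the theorem has a plain $\log$. Neither of your proposed repairs works: the $\sqrt{h}$ cannot be ``routed through the prefactor'' because the prefactor $\sqrt{2h}$ is shared by both the $d_\epsilon^2$ and the $d_\epsilon d_{\text{anc}}$ terms, and no Bernstein-style refinement of the $\mathbf{U}$ concentration will upgrade $\sqrt{\log}$ to $\log$ when the other factor in the product is deterministic.

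The single fix that resolves both discrepancies --- and is what the paper does --- is to split $\mathbf{v}_i=\mathbf{v}_{\text{anc}}^*+\tilde{\mathbf{v}}_i$ and treat the two pieces differently. The anchor piece is bounded deterministically via $\|\mathbf{v}_{\text{anc}}^*\|_2<d_{\text{anc}}$ (this is where $d_{\text{anc}}$ enters, with no $\sqrt{h}$). The fluctuation piece $\tilde{\mathbf{v}}_i^\top[\sigma(\mathbf{U}_\alpha\mathbf{x})-\sigma(\mathbf{U}_i\mathbf{x})]$ is a linear form in i.i.d.\ uniform $[-d/2,d/2]$ entries, so a second application of Hoeffding's inequality (conditionally on the coefficient vector) bounds it by $d\sqrt{\tfrac12\log(2k/\delta)}\,\|\sigma(\mathbf{U}_\alpha\mathbf{x})-\sigma(\mathbf{U}_i\mathbf{x})\|_2$, replacing your $\sqrt{h}\,d/2$ with $d\sqrt{\log}$. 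Multiplying this $\sqrt{\log}$ by the $\sqrt{h\log(2hk/\delta)}$ coming from the $\|(\mathbf{U}_2-\mathbf{U}_1)\mathbf{x}\|_2$ concentration produces exactly one factor of $\sqrt{h}$ and a genuine $\log$, and the union bound over the $k=6$ concentration events is what puts the $12$ inside the logarithm. Without this second probabilistic step your argument proves a valid but strictly weaker bound than the one stated.
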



The proofs are in \autoref{app_sec:proof}. \autoref{theorem:theorem1} proves the transitivity of LMC that when $\mathbf{w}_1$ and $\mathbf{w}_2$ have lower LMC barrier with $\mathbf{w}_{\text{anc}}^*$ (the barrier proxy is $\epsilon$) then the barrier between $\mathbf{w}_1$ and $\mathbf{w}_2$ is also reduced and bounded. 

\textbf{\textit{Then, we will empirically validate the transitivity.}}

\begin{table}[t]
    \centering
        \caption{\textbf{Test accuracies and barriers of two trained models w/ and w/o connectivity loss.} ``Ind. Acc.'' refers to $0.5*\mathcal{A}(\mathbf{w}_1)+0.5* \mathcal{A}(\mathbf{w}_2)$, and ``Fused Acc.'' refers to $\mathcal{A}(0.5*\mathbf{w}_1+0.5*\mathbf{w}_2)$. It validates the transitivity of LMC, stating that by leveraging the anchor model, the barriers of LMC are largely reduced. CIFAR-10.}
        \resizebox{\linewidth}{!}{
            \begin{tabular}{@{}l|c|c|c}
            \toprule
            \textbf{Models} & \textbf{Metrics} & \textbf{Vanilla CE Loss} & \textbf{w/ Connectivity Loss} \\
            \hline
            \multirow{4}{2cm}{CNN} 
            & Ind. Acc. & \(64.0\pm0.5\) &\(63.9\pm1.4\)  \\
            & Fused Acc. & \(11.5\pm0.9\)  & \(32.1\pm9.0\) \\
            \cmidrule{2-4}
            & Acc. Barrier & \(0.821\) & \(0.495 \,\textcolor{limegreen}{(39.7\%\downarrow)}\) \\
            \midrule
            \multirow{4}{2cm}{ResNet 20} 
            & Ind. Acc. & \(66.7\pm0.9\) & \(69.1\pm2.4\) \\
            & Fused Acc. & \(13.0\pm3.8\) & \(40.5\pm3.5\) \\
            \cmidrule{2-4}
            & Acc. Barrier & \(0.805\) & \(0.415\,\textcolor{limegreen}{(44.1\%\downarrow)}\) \\
            \midrule
            \multirow{4}{2cm}{Pretrained \\ ResNet18} 
            & Ind. Acc. & \(55.8\pm6.6\) & \(64.5\pm0.3\) \\
            & Fused Acc. & \(10.0\pm0.0\) & \(62.1\pm0.4\) \\
            \cmidrule{2-4}
            & Acc. Barrier & \(0.819\) & \(0.038\,\textcolor{limegreen}{(95.4\%\downarrow)}\) \\
            \bottomrule
            \end{tabular}}
        \label{tab:transitivity_lmc}
\end{table}

We first present the connectivity loss given the anchor model, which is similar to previous literature~\cite{wortsman2021learning,garipov2018loss}. The connectivity loss is as follows,
\begin{align}\label{equ:connecivity_loss}
    \mathcal{L}_{\text{connect}}(\mathbf{w},\mathbf{w}_{\text{anc}}^*) =& \mathbb{E}_{\alpha \sim [0, 1]} \mathcal{L}(\alpha \mathbf{w} + (1-\alpha) \mathbf{w}_{\text{anc}}^*)\nonumber\\
    =&\int_{0}^{1} \mathcal{L}(\alpha \mathbf{w} + (1-\alpha) \mathbf{w}_{\text{anc}}^*) \, d\alpha,
\end{align}
where $\mathbf{w}_{\text{anc}}^*$ is the fixed anchor model and $\mathbf{w}$ is the model for training. Then, we incorporate the connectivity loss into the vanilla cross entropy (CE) loss, formulated into the following overall learning objective,
\begin{align}\label{equ:one_learn_objective}
    \mathbf{w}^{*} = \underset{\mathbf{w}}{\arg\min}\, \mathcal{L}(\mathbf{w}) + \beta \mathcal{L}_{\text{connect}}(\mathbf{w},\mathbf{w}_{\text{anc}}^*),
\end{align}
where $\mathcal{L}(\mathbf{w})$ is the vanilla CE loss and $\beta$ is the hyperparameter controlling the strength of the connectivity loss.

We let $\mathbf{w}_{\text{anc}}^*$ be the fixed trained anchor model and independently train two models $\mathbf{w}_1^*$ and $\mathbf{w}_2^*$ according to \autoref{equ:one_learn_objective}. According to \autoref{theorem:theorem1}, the $\mathbf{w}_1^*$ and $\mathbf{w}_2^*$'s LMC barriers will be reduced if the transitivity holds. Note that $\mathbf{w}_1$ and $\mathbf{w}_2$ can have the same or different initializations, and the transitivity still holds; in the experiments, we make stricter verifications by setting different initializations.

\textbf{Empirical results.} We conduct experiments in \autoref{tab:transitivity_lmc} and \autoref{fig:transitivity_lmc}. The anchor model is a mode independently trained with vanilla CE loss using a different random seed. In \autoref{tab:transitivity_lmc}, training with the connectivity loss can largely reduce the barriers of LMC by utilizing the anchor model, even if two models have different initializations and never communicate with each other. More intuitive landscape visualizations are in \autoref{fig:transitivity_lmc}. It can be seen that the connectivity loss can eliminate the barrier between the anchor model and the trained model, and due to the transitivity of LMC, the barrier between the two independent models is also reduced. The experiments verify the transitivity of LMC between two models, and we will show that this transitivity can be extended to the connectivity of multiple models.

\subsection{Transitivity of Group Connectivity}

We study the group connectivity among multiple models and propose the barrier of group connectivity akin to Definition~\ref{def:loss_barrier} of LMC. For brevity, we only present the definition of accuracy barriers.

\begin{definition}\label{def:group_connectivity}
    \textbf{Group connectivity.} The group connectivity of model set $\{\mathbf{w}_i\}_{i=1}^K$ is depicted by the loss and accuracy barrier defined as:\looseness=-1
    \begin{align} \label{eqn:group_acc_barrier}
     B_{\text{loss}}(\{\mathbf{w}_i\}_{i=1}^K)&=\mathcal{L}(\frac{1}{K}\sum_{i=1}^K\mathbf{w}_i)-\frac{1}{K}\sum_{i=1}^K\mathcal{L}(\mathbf{w}_i),\\
      B_{acc}(\{\mathbf{w}_i\}_{i=1}^K) &= 
       \left[1 - \frac{\mathcal{A}(\frac{1}{K}\sum_{i=1}^K \mathbf{w}_i)}{\frac{1}{K}\sum_{i=1}^K \mathcal{A}(\mathbf{w}_i)}\right],
    \end{align}
    where $\mathcal{L}$ is the loss and $\mathcal{A}$ is the accuracy function. A lower barrier refers to better group connectivity.
\end{definition}

We prove the transitivity of group connectivity that individually training several models and improving the LMC between one common anchor model will result in better group connectivity among the trained ones. In addition, we consider the data heterogeneity of practical FL in group connectivity by giving the following definition.
\begin{definition}\label{theorem:def_theorem2}
\textbf{Data heterogeneity.} Similar to \citet{li2019convergence}, we use the minimum to measure the degree of heterogeneity among the group of individual workers (e.g., clients in FL and modes in LMC). Let $\mathbf{w}^*$ be a global minimum of all workers and $\mathbf{w}_i^*$ is the minimum value of worker $i$ closest to $\mathbf{w}^*$.  We use the term $\Gamma=\max_i \Vert \mathbf{w}_i^*-\mathbf{w}^*\Vert_2,\, i \in [K] $ for quantifying the degree of data heterogeneity. 
\end{definition}
\begin{theorem}\label{theorem:theorem2}
We define a two-layer neural network with ReLU activation, and the function is $f_{\mathbf{v},\mathbf{U}}(\mathbf{x})=\mathbf{v}^\top\sigma(\mathbf{U}\mathbf{x})$ where $\sigma(\cdot)$ is the ReLU activation function. $\mathbf{v}\in \mathbb{R}^h$ and $\mathbf{U} \in \mathbb{R}^{h\times l}$ are parameters and $\mathbf{x} \in \mathbb{R}^l$ is the input which is taken from $\mathbb{X}=\{\mathbf{x}\in \mathbb{R}^l| \Vert\mathbf{x}\Vert_2<b\}$ uniformly.  Denote the deterministic anchor model as $\mathbf{w}_{\text{anc}}^*=\{\mathbf{U}_{\text{anc}}^*,\mathbf{v}_{\text{anc}}^*\}$, with $\Vert \mathbf{v}_{\text{anc}}^* \Vert_2<d_{\text{anc}}$ and consider $K$ different networks $\mathbf{w}_i$ parameterized with $\{\mathbf{U}_i,\mathbf{v}_i\}$ located on $K$ clients respectively.  Each element of $\mathbf{U}_i$ and $\mathbf{v}_i$ is sampled from a uniform distribution centered at $ \mathbf{U}_{\text{anc}}^* $ and $ \mathbf{v}_{\text{anc}}^*$ with an interval length of $d$.  If with probability $1-\delta$, $\sup_\alpha \mathcal{L}_i(\alpha {\mathbf{w}_{\text{anc}}^*}+(1-\alpha)\mathbf{w}_i)<\epsilon$, then with probability $1-\delta$, it has,
\begin{align}
\label{equ:group_connectivity_bound}
     B_{\text{loss}}&(\{\mathbf{w}_i\}_{i=1}^K) \le \\
        &\frac{\sqrt{2h}b}{2(1-\delta)^\frac{2}{hl+h}}d_{\epsilon+\gamma\Gamma^2}(d_{\epsilon+\gamma\Gamma^2}+d_{\text{anc}})\log(4hK^2/\delta).\nonumber
\end{align}
\end{theorem}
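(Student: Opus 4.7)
The plan is to reduce the group-connectivity claim to a pairwise application of \autoref{theorem:theorem1}, after first absorbing the data-heterogeneity gap between each client loss $\mathcal{L}_i$ and the global $\mathcal{L}$ into an inflation of the LMC tolerance $\epsilon$.

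The first step is to lift the per-client hypothesis $\sup_\alpha \mathcal{L}_i(\alpha \mathbf{w}_{\text{anc}}^* + (1-\alpha)\mathbf{w}_i) < \epsilon$ to the global loss. Using \autoref{ass:assumption_1} ($\mathcal{L}$ is $\gamma$-smooth) together with \autoref{theorem:def_theorem2} (worker minima $\mathbf{w}_i^\ast$ are within $\Gamma$ of the common minimum $\mathbf{w}^\ast$), a standard quadratic expansion yields $|\mathcal{L}(\mathbf{w}) - \mathcal{L}_i(\mathbf{w})| \le \tfrac{\gamma}{2}\Gamma^2$ on the relevant compact region. Consequently the hypothesis implies $\sup_\alpha \mathcal{L}(\alpha \mathbf{w}_{\text{anc}}^* + (1-\alpha)\mathbf{w}_i) < \epsilon + \gamma\Gamma^2$ for each $i$, which is exactly what makes $d_{\epsilon}$ get replaced by $d_{\epsilon+\gamma\Gamma^2}$ in the final bound.

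The second step is to invoke \autoref{theorem:theorem1} at the new tolerance $\epsilon' := \epsilon + \gamma\Gamma^2$ for each pair $(i,j)$: since both $\mathbf{w}_i$ and $\mathbf{w}_j$ enjoy good LMC with $\mathbf{w}_{\text{anc}}^\ast$ under $\mathcal{L}$, \autoref{theorem:theorem1} supplies
\begin{equation*}
B_{\text{loss}}(\mathbf{w}_i,\mathbf{w}_j) \le \frac{\sqrt{2h}\,b}{2(1-\delta')^{2/(hl+h)}}\, d_{\epsilon'}\,(d_{\epsilon'}+d_{\text{anc}})\, \log(12h/\delta').
\end{equation*}
Because the original theorem holds with probability $1-\delta'$ \emph{per pair}, a union bound over the $\binom{K}{2}\le K^2/2$ pairs, setting $\delta' \asymp \delta/K^2$, yields the $\log(4hK^2/\delta)$ factor claimed in \eqref{equ:group_connectivity_bound}, with only a mild change in the leading constant.

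The third step, which I expect to be the \textbf{main technical obstacle}, is converting these pairwise linear-interpolation barriers into a single group barrier at the centroid $\bar{\mathbf{w}} = \tfrac{1}{K}\sum_i \mathbf{w}_i$. The cleanest route I see is to combine \autoref{lemma:lemma1} with $\gamma$-smoothness: \autoref{lemma:lemma1} (applied at tolerance $\epsilon'$) controls $\|\mathbf{w}_i - \mathbf{w}_{\text{anc}}^\ast\|$ by $d_{\epsilon'}/(1-\delta)^{1/S}$, hence also $\|\bar{\mathbf{w}} - \mathbf{w}_i\|$ and the pairwise distances; then a second-order bound
\begin{equation*}
\mathcal{L}(\bar{\mathbf{w}}) - \tfrac{1}{K}\textstyle\sum_i \mathcal{L}(\mathbf{w}_i) \le \tfrac{\gamma}{2K}\textstyle\sum_i \|\bar{\mathbf{w}} - \mathbf{w}_i\|^2
\end{equation*}
(which is the Jensen defect for a $\gamma$-smooth function) gives the required group barrier. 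The delicate part is aligning the constants so that the result matches the stated bound exactly, in particular ensuring that the same $(1-\delta)^{2/(hl+h)}$ denominator and the $d_{\epsilon'}(d_{\epsilon'}+d_{\text{anc}})$ product appear, rather than a strictly $d_{\epsilon'}^2$ bound. This suggests that the correct aggregation is not via the smoothness inequality above in isolation, but by running the two-layer ReLU calculation from \autoref{theorem:theorem1} directly with $\bar{\mathbf{w}}$ playing the role of $\mathbf{w}_1$ and $\mathbf{w}_{\text{anc}}^\ast$ playing the role of $\mathbf{w}_2$, and then using the triangle inequality on the $\mathbf{v}_i$-components (whose norm is bounded by $d_{\text{anc}} + d_{\epsilon'}$) to produce the mixed factor $d_{\epsilon'}(d_{\epsilon'}+d_{\text{anc}})$. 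Once the constants are matched, the union bound in step two supplies the $\log(4hK^2/\delta)$ factor and the proof concludes.
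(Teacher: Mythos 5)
Your proposal converges on the paper's actual argument, but only in its final paragraph, and the route you take to get there contains a detour that does not work as stated. Your step one matches the paper: the paper also uses $\gamma$-smoothness together with Definition~\ref{theorem:def_theorem2} to inflate the tolerance to $\epsilon+\gamma\Gamma^2$ before invoking Lemma~\ref{lemma:lemma1}, which is exactly how $d_\epsilon$ becomes $d_{\epsilon+\gamma\Gamma^2}$ and $d$ gets bounded by $d_{\epsilon+\gamma\Gamma^2}/(1-\delta)^{1/S}$. Your step two, however, is a dead end: pairwise barriers $B_{loss}(\mathbf{w}_i,\mathbf{w}_j)$ obtained from Theorem~\ref{theorem:theorem1} do not compose into a bound on $\mathcal{L}(\frac{1}{K}\sum_i\mathbf{w}_i)-\frac{1}{K}\sum_i\mathcal{L}(\mathbf{w}_i)$, since the centroid of $K$ points is not on any of the pairwise segments. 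You correctly flag this as the main obstacle, and your proposed fix --- rerun the two-layer ReLU computation directly --- is what the paper does. The paper defines $z(\mathbf{x})=f_{\bar{\mathbf{v}},\bar{\mathbf{U}}}(\mathbf{x})-\frac{1}{K}\sum_i f_{\mathbf{v}_i,\mathbf{U}_i}(\mathbf{x})$, writes each $\mathbf{v}_i=\mathbf{v}_{\text{anc}}^*+\tilde{\mathbf{v}}_i$ and $\mathbf{U}_i=\mathbf{U}_{\text{anc}}^*+\tilde{\mathbf{U}}_i$, applies Hoeffding to the $\tilde{\mathbf{v}}_i$ terms and the bound $\Vert\mathbf{v}_{\text{anc}}^*\Vert_2<d_{\text{anc}}$ to the anchor terms (this split is what yields the mixed factor $d(d+d_{\text{anc}})$ rather than $d^2$), and then bounds $\sum_i\sum_{j\neq i}|(\mathbf{U}_j-\mathbf{U}_i)\mathbf{x}|$ by a union bound over the $K^2$ ordered pairs --- that, not a union bound over pairwise invocations of Theorem~\ref{theorem:theorem1}, is where $\log(4hK^2/\delta)$ comes from. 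One residual imprecision in your sketch: the quantity to control is not the barrier between $\bar{\mathbf{w}}$ and $\mathbf{w}_{\text{anc}}^*$ (your ``$\bar{\mathbf{w}}$ as $\mathbf{w}_1$, anchor as $\mathbf{w}_2$'' framing), but the deviation of the fused network's output from the average of the individual outputs; the anchor enters only as the common center of the decomposition. Your first suggestion for step three (the Jensen defect for a $\gamma$-smooth function) would indeed produce the wrong functional form, as you yourself observe, so it should simply be dropped.
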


\begin{figure}
    \centering
    \includegraphics[width=0.49\columnwidth]{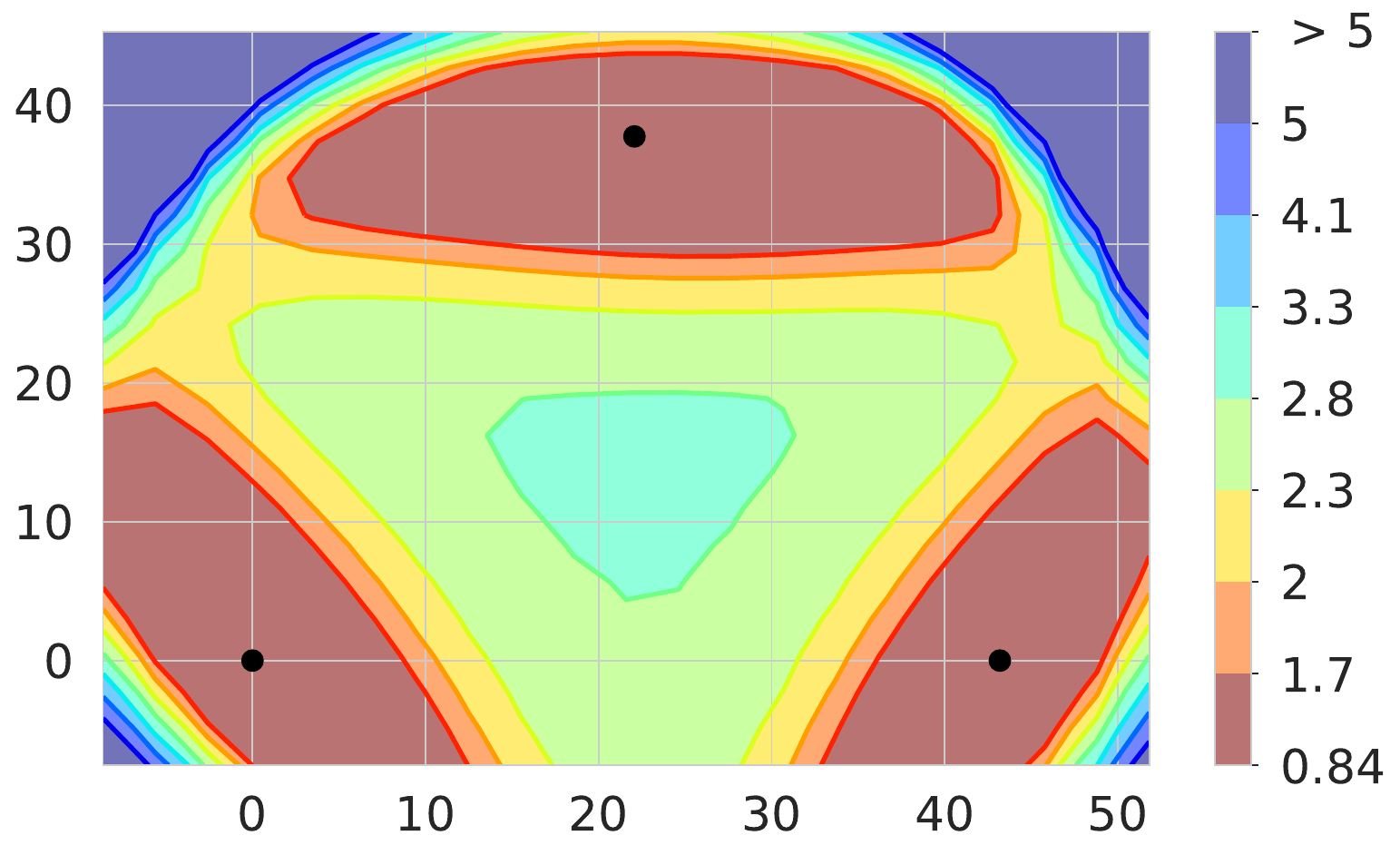}
    \includegraphics[width=0.49\columnwidth]{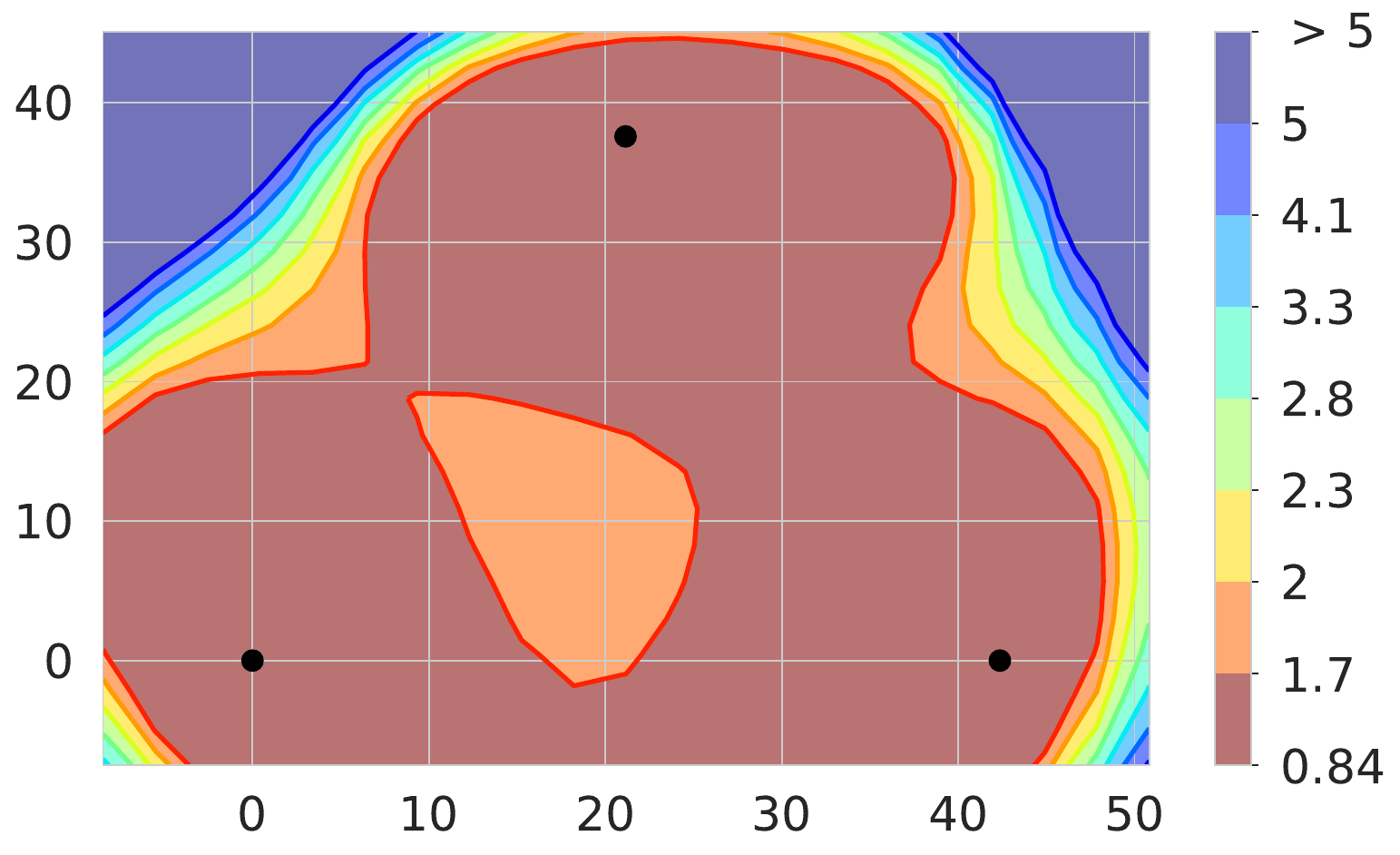}
    \vspace{-0.2cm}
    \caption{\textbf{Test loss landscapes of three trained models w/ and w/o connectivity loss.} Visualization as in \citet{garipov2018loss} with $\mathbf{w}_1^*$ at the origin. $\mathbf{w}_1^*, \mathbf{w}_2^*, \mathbf{w}_3^*$ are marked as the black dots. \textbf{Left:} vanilla CE loss. \textbf{Right:} independently training three models with improved LMC between the same anchor model. From the right figure, it is validated that group connectivity is improved that the three models fall into a more connected low-loss region.
    }
    \label{fig:loss_landscape_three_models}
\end{figure}

\textbf{Landscape visualization.} We empirically study whether the transitivity of LMC can be generalized to group connectivity of multiple models. 
We let $\mathbf{w}_{\text{anc}}^*$ be the anchor model and independently train three models $\mathbf{w}_1^*,\mathbf{w}_2^*, \mathbf{w}_3^*$ according to \autoref{equ:one_learn_objective}. Also, training the three models without connectivity loss is conducted for comparison. Then, we visualize the loss landscapes of $\mathbf{w}_1^*,\mathbf{w}_2^*, \mathbf{w}_3^*$ in \autoref{fig:loss_landscape_three_models}. For vanilla CE loss, the trained models are scattered in different loss basins with high barriers between them. However, with the connectivity loss, the LMC between each model and the anchor model is improved, and as a result of transitivity, the three models fall into a more connected low-loss region, and the barriers are largely eliminated.

\textbf{Group connectivity when vary $K$.} We study the transitivity of group connectivity by scaling up the number of trained models $K$, which is critical for federated learning with numerous clients. The results are in \autoref{fig:group_connectivity}; note that the number of anchor models is still one. We observe that by increasing $K$ for the connectivity loss, the barrier in group connectivity will go up but still lower than the vanilla training. Also, the increase of barriers may converge to a point lower than vanilla training. It indicates that the transitivity of group connectivity may be weakened for larger $K$ but still effective, and when $K$ is relatively large (e.g., \textgreater 8), increasing $K$ will cause little loss of group connectivity. Furthermore, we will show in \autoref{table:num_clients} that our FedGuCci, which incorporates the connectivity loss, can improve the generalization under different large numbers of clients.

\begin{figure}[!t]
    \centering
    \includegraphics[width=0.49\columnwidth]{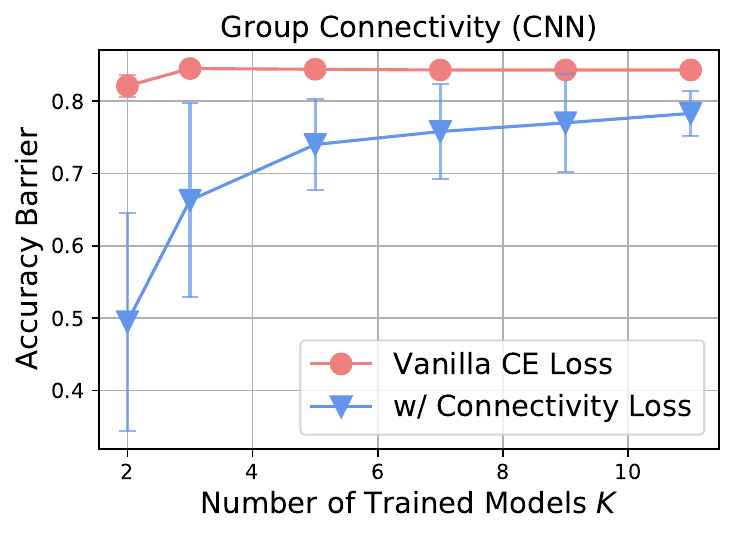}
    \includegraphics[width=0.49\columnwidth]{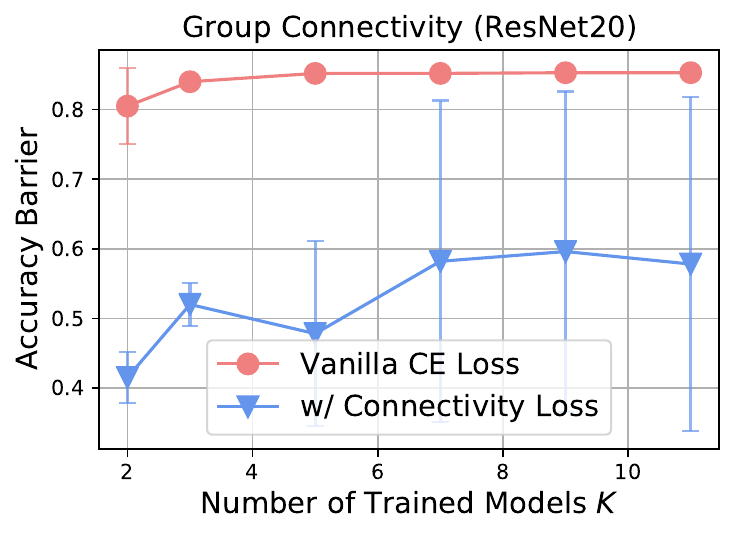}
    \vspace{-0.2cm}
    \caption{\textbf{Accuracy barriers (the lower, the better) of group connectivity by varying numbers of trained models $K$.} There is only one anchor model for all settings. It can be seen that generally, larger $K$ will cause larger barriers, but connectivity loss can still reduce them, reflecting that the transitivity of LMC can improve group connectivity. CIFAR-10 is used.}
    \label{fig:group_connectivity}
\end{figure}

\section{Methods} \label{sec:methods}
\subsection{FedGuCci: FL with Improved Group Connectivity}

In \autoref{sec:transitivity}, we have verified the transitivity of group connectivity by using an anchor model. In this section, we will present FedGuCci, incorporating this property in FL to improve generalization. 

\textbf{Global models as anchor models.} We refer to \autoref{subsec:pre_fl} for the settings and notations. In our FedGuCci, we use the global models as the anchor models for connectivity loss with local clients. Instead of solely using the current round global model as the anchor, we find using several previous rounds' global models can form the clients into a more connected region, so we use $N$ previous global models as the anchors. Specifically, in round $t \in [T]$, the set of anchor models $\mathbf{W}_{\text{anc}^*}^t$ is as follows:
\begin{equation}\label{equ:global_anchors}
    \mathbf{W}_{\text{anc}^*}^t = 
    \begin{cases} 
        \{\mathbf{w}_g^j\}_{j=t-N+1}^t & \text{if } t \geq N, \\
        \{\mathbf{w}_g^j\}_{j=1}^t & \text{if } t < N,
    \end{cases}
\end{equation}
where $\mathbf{w}_g^j$ refers to the global model at round $j$.

\textbf{FedGuCci local updates.} FedGuCci is a client-side algorithm that utilizes the global models as the anchor and improves the group connectivity of clients, without additional communication overhead. FedGuCci has the following update rules. In each round $t$, client $i \in [M]$ conducts local training according to the following objective:
\begin{align}\label{equ:fedgucc_learn_objective}
    \mathbf{w}_i^{t*} =& \underset{\mathbf{w}_i^t}{\arg\min}\, \mathcal{L}_i(\mathbf{w}_i^t) \nonumber\\
    &+ \beta \frac{1}{|\mathbf{W}_{\text{anc}^*}^t|}\sum_{j=1}^{|\mathbf{W}_{\text{anc}^*}^t|} \mathcal{L}_{\text{connect}_i}(\mathbf{w}_i^t,\mathbf{W}_{\text{anc}^*, j}^t),
\end{align}
where $\mathbf{W}_{\text{anc}^*, j}^t$ refers to the $j$-th model in the anchor model set, $\beta$ is the hyperparameter for connectivity loss, $\mathcal{L}_i$ is the client's local CE loss, and $\mathcal{L}_{\text{connect}_i}$ is the connectivity loss regarding \autoref{equ:connecivity_loss}. Clients conduct SGD as \autoref{eqt_client} to update the local models.

By learning to connect with the global anchor models, FedGuCci will improve the group connectivity and achieve better generalization as we will elaborate in \autoref{sec:exp}.

\subsection{FedGuCci+: Aligning Local Loss Landscapes}

In the study of LMC, different modes are trained on the \textbf{\textit{same}} dataset but with different random seeds or initializations~\cite{entezari2021role}. However, in FL, clients have \textbf{\textit{heterogeneous}} data, and it is found that data heterogeneity of clients will cause different curvatures of local loss landscapes~\cite{zhou2023mode}, making the connectivity worse. Therefore, aligning local loss landscapes is essential for better performances of the connectivity loss. In this subsection, we incorporate previous techniques in FedGuCci to align local loss landscapes and propose FedGuCci+. 

\textbf{Bias reduction.} In FL, class imbalance (a.k.a. label skew) is a main cause of data heterogeneity, and previous works propose logit calibration~\cite{zhang2022federated}, balanced softmax~\cite{chen2021bridging}, and other techniques~\cite{fedetf,acar2020federated} for reducing the bias caused by class imbalance. Here, we introduce the logit calibration technique used in FedLC~\cite{zhang2022federated} for bias reduction. The main idea of logit calibration is to add additional terms to the logits to balance the overall class distributions. From \autoref{fig:fedgucci_plus}~(b), it demonstrates that logit calibration and other bias reduction methods can align the landscapes by making the local objectives more consistent. 

\textbf{Flatter minima.} Sharpness-aware minimization~\cite{foret2020sharpness,kwon2021asam} (SAM) find flatter minima to improve generalization. SAM has also been introduced in FL for better generalization~\cite{fedsam-eccv,fedsam-icml}. In our paper, we find SAM can be used to align local loss landscapes by making the landscapes flatter, so we also incorporate it in FedGuCci+. From \autoref{fig:fedgucci_plus}~(c), it can be seen that if the landscapes are flatter, the overlap regions between two clients will increase, therefore, it will have more aligned landscapes. Also, for FedGuCci, SAM makes the connectivity loss to learn a cylinder connected with the anchor model instead of a line~\cite {wen2023optimizing}, improving connectivity robustness and generalization. \looseness=-1

FedGuCci+ incorporates logit calibration and SAM into FedGuCci, achieving better generalization. We note that FedGuCci+ is a showcase of how FedGucCci is compatible with other existing techniques for better results, and more techniques can be integrated.\looseness=-1

\begin{table*}[!t]
    \footnotesize
    \centering
    \caption{\textbf{Results in terms of generalization accuracy (\%) of global models on four datasets under different data heterogeneity.} The best two methods in each setting are highlighted in \textbf{bold} fonts. $M=50, E=3$. }
    \resizebox{1.0\linewidth}{!}{
    \begin{tabular}{l|cc|cc|cc|cc}
    \toprule
    Dataset&\multicolumn{2}{c}{Fashion-MNIST}&\multicolumn{2}{c}{CIFAR-10}&\multicolumn{2}{c}{CIFAR-100}&\multicolumn{2}{c}{Tiny-ImageNet}\\
    \cmidrule(lr){1-3}
    \cmidrule(lr){4-5}
    \cmidrule(lr){6-7}
    \cmidrule(lr){8-9}
    Non-IID hyper. &\multicolumn{1}{c}{100}&\multicolumn{1}{c}{0.5}&\multicolumn{1}{c}{100}&\multicolumn{1}{c}{0.5}&\multicolumn{1}{c}{100}&\multicolumn{1}{c}{0.5}&\multicolumn{1}{c}{100}&\multicolumn{1}{c}{0.5}\\
    \midrule

     Local &76.22±0.16  &62.24±0.35 
   &36.69±0.10  &29.73±0.36 &7.36±{0.14} &6.97±{0.08}  &6.47±{0.12} &6.09±{0.02}\\
    \midrule
    FedAvg  &87.94±0.34 &86.99±0.04 &63.55±0.16 &63.99±0.32 &27.21±{0.96} &25.60±{0.62} &27.43±{1.39} &25.11±{1.82} \\
    \midrule
    FedProx   &10.00±{0.00} &10.00±{0.00} &61.81±0.47 &61.45±0.43 &27.78±{0.41} &28.58±{0.28} &24.58±{0.28} &25.02±{0.19} \\
    FedDyn &88.26±{0.17}  &88.18±{0.36} &64.99±{0.64} &65.73±{0.31} &29.90±{7.13} &28.49±{0.55}&30.89±{0.03}  &24.63±{2.68} \\
    SCAFFOLD &87.95±{0.31}  &86.47±{0.14} &63.20±0.32 &63.96±0.41 &1.07±{0.09} &1.25±{0.07}  &0.529±{0.05}  &0.517±{0.02} \\
    MOON &86.95±{0.09}  &86.02±{0.29} &64.24±0.65 &63.41±0.31 &28.97±{1.69} &27.36±{0.71} &27.88±{1.08} &25.34±{0.66} \\
    FedRoD &87.97±{0.40}  &87.56±{0.60} &62.64±0.20 &62.56±0.46 &26.94±{0.78} &25.90±{1.20} &27.67±{1.64} &25.55±{1.56}\\
    FedLC &87.90±{0.36} &86.79±{0.29} &63.49±0.17 &63.97±0.35  &27.23±{0.69}  &25.36±{0.65} &27.63±{1.62} &25.47±{1.84}\\
    FedSAM &88.41±{0.49}  &87.62±{0.30} &65.10±0.41 &65.02±0.15 &28.11±{0.61} &26.75±{0.74} &31.23±{0.16} &30.44±{0.97}\\
    \midrule
    \rowcolor{limegreen!8}\textbf{FedGuCci} &\textbf{88.85±0.11} &\textbf{88.30±0.39}  &\textbf{65.11±0.11}  &\textbf{65.80±0.22}  &\textbf{30.55±{0.67}} &\textbf{29.33±{0.41}} &\textbf{36.46±0.40} &\textbf{33.61±0.60} \\
    \rowcolor{limegreen!8}\textbf{FedGuCci+} &\textbf{89.38±0.14} &\textbf{88.61±0.40}  &\textbf{68.11±0.27}  &\textbf{66.44±0.69} &\textbf{36.20±{1.06}} &\textbf{35.34±{0.68}} &\textbf{37.42±0.52} &\textbf{34.80±0.35} \\
    \bottomrule
    \end{tabular}
    }
    \label{table:sota_table}
\end{table*}

\begin{figure}[t]
    \centering
    \includegraphics[width=0.99\columnwidth]{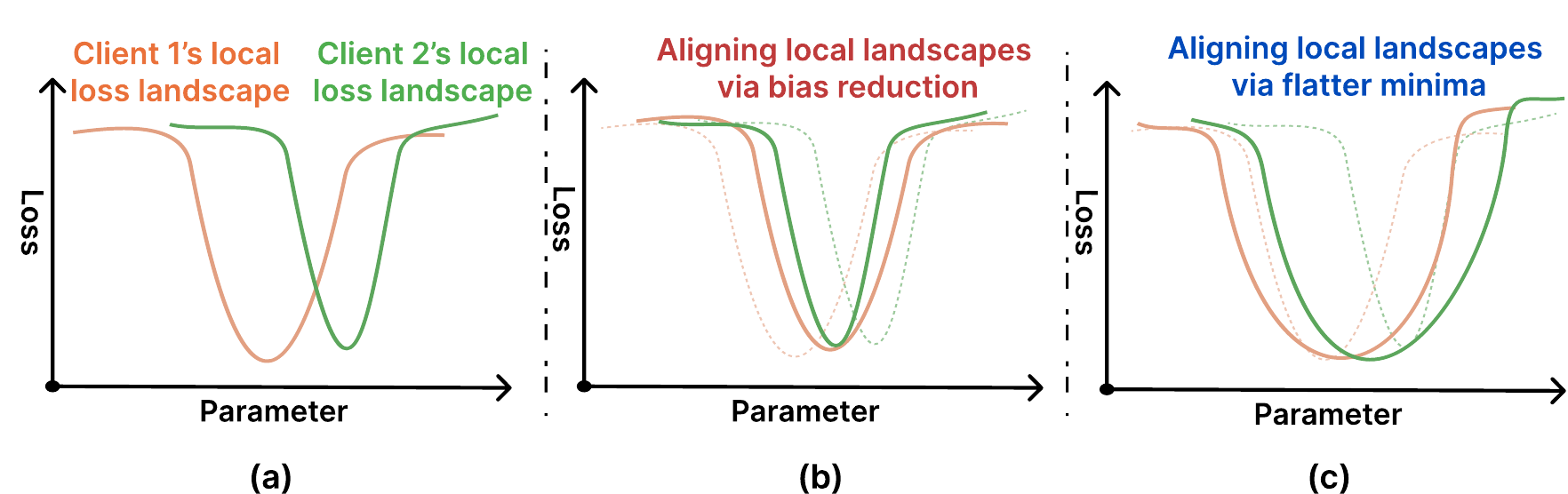}
    \vspace{-0.1cm}
    \caption{\textbf{Illustration of how FedGuCci+ aligns the local loss landscapes.} \textbf{(a):} Due to data heterogeneity, clients have different local loss landscapes. \textbf{(b):} Introducing logit calibration or other FL bias reduction techniques can align the learning objectives. \textbf{(c):} Introducing sharpness-aware minimization can make the landscapes flatter, and as a result, the overlapping regions increase.}
    \label{fig:fedgucci_plus}
\end{figure}

\begin{table*}[!t]
    \footnotesize
    \centering
    \caption{\textbf{Results of pretrained language models on natural language processing (GLUE benchmark).}}
    \resizebox{1.0\linewidth}{!}{
    \begin{tabular}{l|cccccc|c}
    \toprule
    Methods/Tasks &\multicolumn{1}{c}{SST-2}&\multicolumn{1}{c}{MRPC}&\multicolumn{1}{c}{CoLA}&\multicolumn{1}{c}{QNLI}&\multicolumn{1}{c}{RTE}&\multicolumn{1}{c}{STS-B}&\multicolumn{1}{c}{AVG}\\
    \cmidrule(lr){1-1}
    \cmidrule(lr){2-7}
    \cmidrule(lr){8-8}

     Local &92.55±{0.19}  &78.38±{0.37} 
   &47.98±{1.01}  &84.66±{0.10} &55.69±{1.03} &87.11±{0.36} &75.40±{0.51}\\
    \midrule
    FedAvg  &92.79±{0.24} &84.17±{0.38} &53.86±{0.70} &84.52±{0.14} &68.63±{1.53} &\textbf{88.61±{0.34}} &78.76±{0.56}  \\
    \midrule
    FedProx   &50.88±{0.00} &67.26±{0.75} &00.00±{0.00} &50.55±{0.98} &49.39±{3.42} &00.00±{0.00} &54.52±{1.71} \\
    FedDyn &91.19±{0.85}  &84.80±{0.41} &\textbf{55.49±{1.02}}  &\textbf{85.51±{0.54}} &61.40±{3.89}  &24.75±{9.38} &67.19±{2.68} \\
    SCAFFOLD &92.75±{0.12}  &84.11±{0.65} &54.28±{0.31} &84.73±{0.16} &\textbf{69.24±{2.76}}  &88.31±{0.31} &\textbf{78.90±{0.72}} \\
    FedSAM &\textbf{92.79±{0.14}} &\textbf{84.81±{0.08}}  &53.25±{0.43} &82.13±{0.34} &68.14±{2.09} &87.71±{0.42} &78.14±{0.58}\\
    \midrule
    \rowcolor{limegreen!8}\textbf{FedGuCci} &\textbf{93.22±{0.20}} &\textbf{85.77±{0.44}}  &\textbf{55.38±{0.44}}  &\textbf{89.40±{0.40}} &\textbf{70.96±{1.60}}  &\textbf{89.25±{0.44}} &\textbf{80.66±{0.59}} \\
    \bottomrule
    \end{tabular}
    }
    \label{table:glue}
\end{table*}

\begin{table}[!t]
    \footnotesize
    \centering
    \caption{\textbf{Results on different numbers of clients and participation ratios.} Non-IID hyper. is 1.0, and the dataset is CIFAR-10.}
    \resizebox{1.0\linewidth}{!}{
        \begin{tabular}{l|cc|cc|cc|cc}
        \toprule
        $M$&\multicolumn{2}{c}{100}&\multicolumn{2}{c}{200}\\
        \cmidrule(lr){1-3}
        \cmidrule(lr){4-5}
        $\rho$ &\multicolumn{1}{c}{0.3}&\multicolumn{1}{c}{0.6}&\multicolumn{1}{c}{0.3}&\multicolumn{1}{c}{0.6}\\
        \midrule
    
         Local &27.91±{0.24}  &27.53±{0.10} 
       &23.39±{0.18}  &23.20±{0.22}\\
        \midrule
        FedAvg  &63.98±{0.84} &63.41±{0.55} &61.37±{0.79} &61.15±{1.01}\\
        \midrule
        FedProx   &52.43±{0.66} &52.79±{0.73} &44.63±{0.95}   &44.96±{0.78}\\
        FedRoD &61.15±{0.05}  &60.30±{0.02}&58.01±{0.92}    &57.63±{1.44} \\
        FedLC &63.70±{0.69} &63.24±{0.70}&60.99±{0.66}    &60.67±{0.81} \\
        FedSAM &64.87±{0.58}  &64.45±{0.22} &62.33±{0.56} &61.93±{0.90}\\
        \midrule
        \rowcolor{limegreen!8}\textbf{FedGuCci} &\textbf{65.02±{0.41}}  &\textbf{64.54±{0.41}}  &\textbf{62.37±{0.83}} &\textbf{62.13±{0.63}} \\
        \rowcolor{limegreen!8}\textbf{FedGuCci+} &\textbf{65.34±{0.21}} &\textbf{65.50±{0.35}}  &\textbf{63.29±{0.71}}  &\textbf{63.93±{0.81}}\\
        \bottomrule
        \end{tabular}
    }
    \label{table:num_clients}
\end{table}

\section{Experiments} \label{sec:exp}
In this section, we conduct extensive experiments to validate how FedGuCci and FedGuCci+ improve the generalization of FL under various settings and datasets.

\subsection{Settings}

\textbf{Datasets and models.} Following previous works \cite{fedetf,lin2020ensemble,pmlr-v202-li23s}, we use 4 vision datasets to conduct experiments: Fashion-MNIST~\cite{xiao2017fashion}, CIFAR-10 \cite{krizhevsky2009learning}, CIFAR-100 \cite{krizhevsky2009learning}, and Tiny-ImageNet \cite{le2015tiny}. Tiny-ImageNet is a subset of ImageNet~\cite{deng2009imagenet} with 100k samples of 200 classes. We use different models for the datasets as follows: Fashion-MNIST: VGG11 \cite{simonyan2014very}, CIFAR-10: SimpleCNN \cite{pmlr-v202-li23s}, CIFAR-100: ResNet20 \cite{li2018visualizing,he2016deep}, Tiny-ImageNet: ResNet18~\cite{he2016deep}. We also conduct experiments of pretrained language models on 6 datasets from GLUE~\cite{wang2018glue}, and the model is RoBERTa-base~\cite{liu2019roberta}. For the detailed settings, please refer to \autoref{app_sec:implem_details}.

\textbf{Compared methods.} We take the most relevant and the most state-of-the-art FL algorithms as the baselines. \textbf{\textit{(1)~FedAvg}} \cite{mcmahan2017communication} with vanilla local training, a simple but strong baseline; \textit{\textbf{(2)~FedProx}} \cite{li2020federated}, which uses the current round's global model as local regularization term; \textbf{\textit{(3)~FedDyn}} \cite{acar2020federated}, FL based on dynamic regularization; \textit{\textbf{(4)~SCAFFOLD}}~\cite{karimireddy2020scaffold}, using control variates for variance reduction; \textit{\textbf{(5)~MOON}}~\cite{Li_2021_CVPR} with model-contrastive learning; \textbf{\textit{(6)~FedRoD}} \cite{chen2021bridging}, generalization through decoupling and balanced softmax loss; \textit{\textbf{(7)~FedLC}}~\cite{zhang2022federated}, FL with logit calibration for bias reduction; \textit{\textbf{(8)~FedSAM}}~\cite{fedsam-icml,fedsam-eccv}, incorporating sharpness-aware minimization into FL. \looseness=-1

\noindent\textbf{Client Settings.} We adopt the Dirichlet sampling to craft IID and heterogeneous data for clients, which is widely used in FL literature \cite{lin2020ensemble,chen2021bridging,fedetf}. It considers a class-imbalanced data heterogeneity, controlled by non-IID hyperparameter, and smaller value refers to more heterogeneous data of clients. We vary the hyperparameter $\in \{100, 10, 1.0, 0.5, 0.4\}$ with a spectrum from IID to non-IID (heterogeneous). 
The hyperparameters are shown in the captions or in \autoref{app_sec:implem_details}. Except from \autoref{table:num_clients}, we use full client participation.

\noindent\textbf{Evaluation and implementation.} We test the generalization performance, which is validated on the balanced testset after the global model is generated on the server. For all the experiments, we conduct three trials for each setting and present the mean accuracy and the standard deviation in the tables. More implementation details in \autoref{app_sec:implem_details}. 

\begin{figure}
    \centering
    \includegraphics[width=0.49\columnwidth]{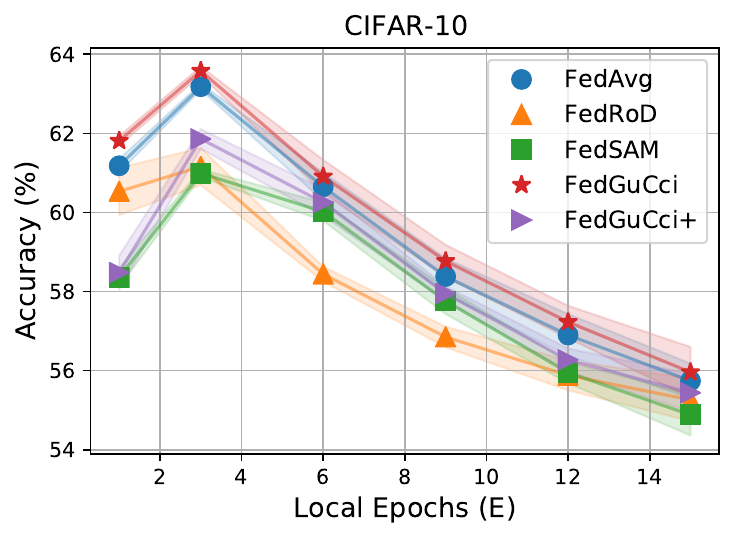}
    \includegraphics[width=0.49\columnwidth]{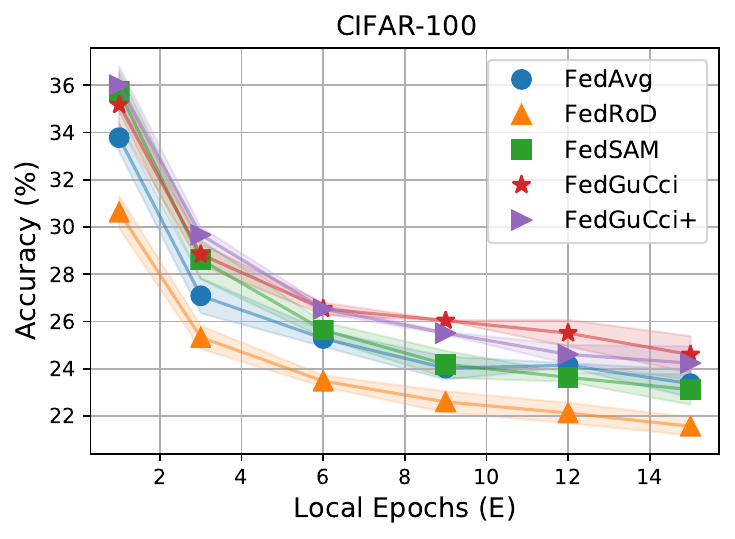}
    \vspace{-0.1cm}
    \caption{\textbf{Results under different local epochs $E$.} $M = 60$ for CIFAR-10, and $M = 20$ for CIFAR-100. $T$ is 200 for both datasets. The non-IID hyperparameter is 0.4.}
    \label{fig:fl_local_epochs}
\end{figure}

\subsection{Main Results}

\textbf{Results under various datasets and models.} In \autoref{table:sota_table}, our methods can reach state-of-the-art results across four datasets under both IID ($\alpha=100$) and heterogeneous ($\alpha=0.5$) settings\footnote{It's important to mention that certain methods might fail in specific settings, exhibiting accuracy levels close to random guessing, e.g., FedProx in Fashion-MNIST.}. 
Generally, FedGuCci can reach the best performances over current FL methods, and FedGuCci+ can strengthen FedGuCci in most cases. Also, the performance gains of our approaches are more dominant under more complicated datasets, like Tiny-ImageNet. While FedSAM stands as the most robust baseline for generalization, our connectivity loss not only yields better results but is also compatible with it (FedGuCci+).

\begin{table}[t]
    \footnotesize
    \centering
    \caption{\textbf{Results of global models under pretrain-finetune vision models.} Non-IID hyper. is 10.}
    \resizebox{1.0\linewidth}{!}{
        \begin{tabular}{l|cc|cc|cc|cc}
        \toprule
        Dataset&\multicolumn{2}{c}{CIFAR-10}&\multicolumn{2}{c}{CIFAR-100}\\
        \cmidrule(lr){1-3}
        \cmidrule(lr){4-5}
        Models &\multicolumn{1}{c}{ResNet-18}&\multicolumn{1}{c}{ViT}&\multicolumn{1}{c}{ResNet-18}&\multicolumn{1}{c}{ViT}\\
        \midrule
    
         Local &65.33±{0.35}  &87.04±{0.43} 
       &31.01±{0.34}  &64.38±{0.47}\\
        \midrule
        FedAvg  &74.89±{0.16} &96.16±{0.19} &45.24±{0.57} &83.61±{0.69}\\
        \midrule
        FedProx   &50.61±{0.81} &96.32±{0.21} &4.29±{0.38}  &78.49±{1.92}\\
        FedRoD &74.91±{0.17}  &96.18±{0.18} &45.19±{0.76} &83.64±{0.35} \\
        FedLC &74.94±{0.13} &96.21±{0.17} &45.18±{0.65} &83.38±{0.64} \\
        FedSAM &74.79±{0.49}  &96.27±0.01 &45.05±{0.44} &83.13±0.82\\
        \midrule
        \rowcolor{limegreen!8}\textbf{FedGuCci} &\textbf{75.22±{0.12}} &\textbf{96.38±{0.11}}  &\textbf{45.62±{0.61}}  &\textbf{83.71±{0.48}} \\
        \rowcolor{limegreen!8}\textbf{FedGuCci+} &\textbf{75.30±{0.53}} &\textbf{96.73±0.13}  &\textbf{46.09±{0.55}}&\textbf{83.96±0.67}\\
        \bottomrule
        \end{tabular}
    }
    \label{table:pretrained_vision}
\end{table}

\begin{figure}
    \centering
    \includegraphics[width=0.49\columnwidth]{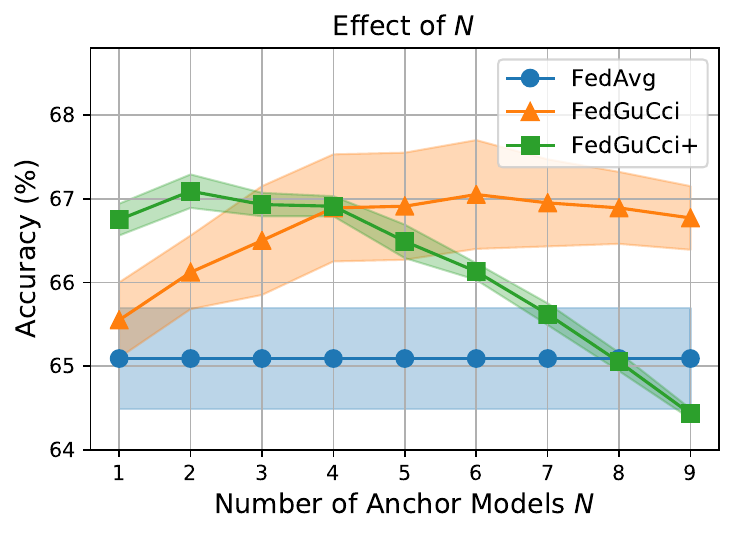}
    \includegraphics[width=0.49\columnwidth]{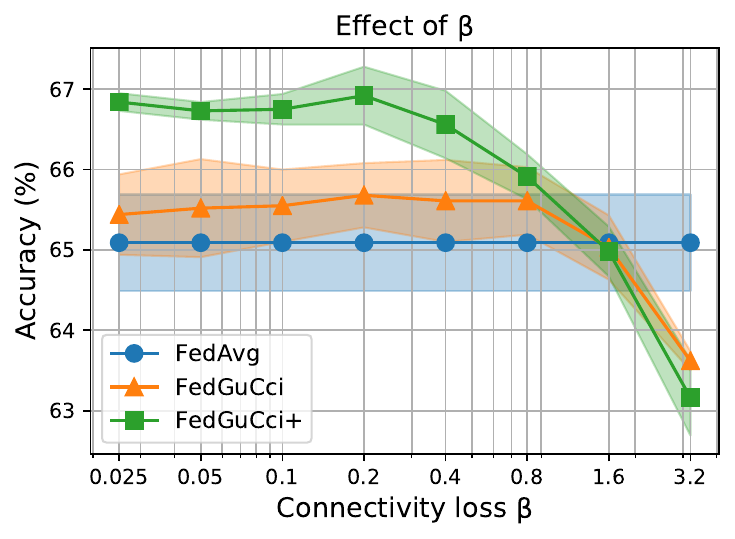}
    \vspace{-0.1cm}
    \caption{\textbf{Sensitivity analysis for hyperparameters $N$ and $\beta$ for FedGuCci(+).} $M=60$ and non-IID hyperparameter is 0.4.}
    \label{fig:fl_hyperparam}
\end{figure}

\textbf{Results on different $M$ and $\rho$.} We conduct experiments by varying the number of clients $M$ and participation ratios of clients $\rho$ in \autoref{table:num_clients}. It demonstrates that FedGuCci and FedGuCci+ can also excel when the number of clients is large and partial participation exists, indicating their great potential under cross-device settings~\cite{charles2021large}. 

\textbf{Results of different local epochs $E$.} In \autoref{fig:fl_local_epochs}, FedGuCci is consistently leading under different $E$, while FedGuCci+ is not robust on CIFAR-10. For CIFAR-100, FedGuCci has a more obvious advantage when $E$ is large, and this is rationale since the connectivity and model drift issues are more severe under large local updates. 

\subsection{Experiments under Pretrained Models}
In this section, we conduct experiments under pretrain-finetune paradigm for both vision and language tasks.

\textbf{Results under pretrained language models.} We use 6 datasets from GLUE~\cite{wang2018glue} benchmark for finetuning pretrained language models. For each dataset, we randomly split the data into several clients and conduct finetuning using low-rank adaption (LoRA), and the pretrained model is RoBERTa-base~\cite{liu2019roberta}. It is notable that some language tasks are not classifications, so FedRoD, FedLC, and FedGuCci+, which rely on classification loss, are not applicable. The results are in \autoref{table:glue}, where our FedGuCci reaches promising performances over existing methods. It is observed that some methods that are superior in \autoref{table:sota_table} have worse performances in pretrained language models, e.g., FedDyn, while our FedGuCci keeps steady advantages. \looseness=-1

\textbf{Results under pretrained vision models.} We conduct experiments under pretrained vision models, namely, ResNet18~\cite{he2016deep} pretrained on ImageNet~\cite{deng2009imagenet} and Vision Transformer (ViT-B/32)~\cite{vit_first_paper} pretrained on CLIP~\cite{pmlr-v139-radford21a}. \autoref{table:pretrained_vision} presents the finetuning results of FL methods on CIFAR-10 and CIFAR-100. It seems that FedAvg is a strong baseline when it comes to pretrained vision backbones, especially for the ViT. However, it is illustrated that FedGuCci is also improving generalization over FedAvg, whereas some methods may have unsatisfactory results, e.g., FedSAM. As a consequence of SAM's negative effect, FedGuCci+ has inferior performance. \looseness=-1

In this subsection, we showcase the applicability of FedGuCci under the pretrain-finetune paradigm, and it reveals FedGuCci's great potential in collaboratively finetuning foundation models, such as large language models~\cite{gpt_first_paper,touvron2023llama}.\looseness=-1

\subsection{Further Analyses and Ablation Studies}

\textbf{Sensitivity analyses of hyperparameters.} As illustrated in \autoref{fig:fl_hyperparam}, we vary the FedGuCci(+)'s hyperparameters $N$ and $\beta$ of \autoref{equ:global_anchors} and \autoref{equ:fedgucc_learn_objective}. It reveals that FedGuCci and FedGuCci+ have a wide range of effective hyperparameters, outperforming FedAvg. We find FedGuCci+ is more sensitive than FedGuCci, that high $N$ and $\beta$ may degrade the performances. For $\beta$, there may exist an optimization-connectivity tradeoff at the clients. If $\beta$ is too high, the connectivity loss may hurt the local optimization steps, causing generalization declines of local models, further detrimental to the fused global model. 

We conduct sensitivity analyses of FedGuCci(+)'s hyperparameters and their ablation study.

\begin{table}[t]
\centering
\caption{\textbf{Ablation study of FedGuCci+.} $M=50$, non-IID: 1.0. }
\begin{tabular}{lcc}
    \toprule
    Methods/Datasets & CIFAR-10 & CIFAR-100\\
    \midrule
     FedAvg   & 64.14{\tiny $\pm$0.38}  & 20.81{\tiny $\pm$0.52} \\
     FedGuCci   & 65.45{\tiny $\pm$0.19}  & 22.74{\tiny $\pm$0.42} \\
     \midrule
   FedGuCci + only logit calibration
    &  65.51{\tiny $\pm$0.15}   
    &   22.99{\tiny $\pm$0.58} \\
    FedGuCci + only SAM 
    & \textbf{65.93{\tiny $\pm$0.38}} 
    &   \textbf{25.81{\tiny $\pm$1.02}} 
    \\
    \midrule
     \rowcolor{limegreen!8}FedGuCci+ (with both)    & \textbf{66.05{\tiny $\pm$0.35}} & \textbf{25.97{\tiny $\pm$0.49}}\\
    \bottomrule
  \end{tabular}
\label{table:ablation}
\end{table}

\textbf{Ablation study.} \autoref{table:ablation} shows that FedGuCci already has obvious generalization gains over FedAvg; further, SAM and the bias reduction method (logit calibration) can reach higher generalization on FedGuCci. SAM has a more dominant improvement on FedGuCci. We note that FedGuCci is general and flexible and may be compatible with more existing FL algorithms~\citep{sun2023dynamic,dai2023fedgamma}, and FedGuCci+ is just one showcase.\looseness=-1

\textbf{Computation analysis.} In \autoref{tab:computation_cost} of \autoref{app_sec:more_results}, we compare the computation costs of methods in terms of reaching a targeted accuracy. It can be seen that FedGuCci requires less computation to reach the target accuracies than the baselines, e.g., FedRoD, MOON, FedSAM, etc. \looseness=-1

\textbf{Results under more heterogeneous data.} \autoref{tab:more_noniid} shows the results under more non-IID data with non-IID hyper. 0.1 and 0.05. It can be seen that our methods still have great advantages over the baselines.

\textbf{Results under smaller participation ratios.} \autoref{tab:smaller_participation_ratio} shows the results with more clients and smaller participation ratios. Our methods are still advantageous when the participation ratios are smaller.

\begin{table}[!t]
    \centering
    \caption{\textbf{Results under more heterogeneous settings.} TinyImageNet, ResNet-18, $T=50, M=50, E=3$.}
    \resizebox{0.9\linewidth}{!}{
    \begin{tabular}{c|c|c}
    \toprule
    \textbf{Methods} & non-IID hyper.=0.1 & non-IID hyper.=0.05 \\ \midrule
    FedAvg & 22.92±0.42 & 20.03±0.87 \\ 
    \midrule
    FedDyn & 21.40±1.13 & 18.28±1.59 \\ 
    \midrule
    FedSAM & 28.53±0.86 & 25.53±0.96 \\ 
    \midrule
    FedGuCci & \textbf{30.33±0.35} & \textbf{26.39±0.41} \\ 
    \midrule
    FedGuCci+ & \textbf{31.26±0.53} & \textbf{27.21±0.56} \\ 
    \bottomrule
    \end{tabular}
    }
    \label{tab:more_noniid}
\end{table}

\begin{table}[!t]
    \centering
    \caption{\textbf{Experiments with smaller participation ratios.} Setting: K=100, CIFAR-10, non-IID $\alpha=0.1$.}
    \resizebox{0.7\linewidth}{!}{
    \begin{tabular}{c|c|c}
    \toprule
    \textbf{Methods} & \textbf{Ratio = 5\%} & \textbf{Ratio = 10\%} \\ \midrule
    Local & 21.80±1.49 & 26.82±0.09 \\ 
    \midrule
    FedAvg & 62.54±0.28 & 64.15±0.11 \\ 
    \midrule
    FedProx & 49.43±0.73 & 50.45±0.56 \\ 
    \midrule
    FedRoD & 62.73±0.27 & 62.38±0.46 \\ 
    \midrule
    FedLC & 62.47±0.57 & 63.57±0.13 \\ 
    \midrule
    FedSAM & 61.92±0.44 & 63.99±0.41 \\ 
    \midrule
    FedGuCci & \textbf{63.12±1.04} & \textbf{65.10±0.46} \\ 
    \midrule
    FedGuCci+ & \textbf{63.61±0.24} & \textbf{64.57±0.44} \\ 
    \bottomrule
    \end{tabular}
    }
    \label{tab:smaller_participation_ratio}
\end{table}

\section{Conclusion} \label{sec:conclusion}

In this paper, we study the transitivity of linear mode connectivity (LMC) and use this property to improve the generalization of federated learning (FL). We first empirically and theoretically verify the transitivity of LMC between two models by leveraging a fixed anchor model, and we extend it to group connectivity among multiple models. Then, we propose FedGuCci and FedGuCci+ in FL. Extensive experiments demonstrate our proposed methods can improve the generalization of FL under various settings.

\section*{Acknowledgements}
This work was supported by the Zhejiang Provincial Key Research and Development Project (2023C01043), and the Academy Of Social Governance Zhejiang University.




\begin{thebibliography}{64}


\ifx \showCODEN    \undefined \def \showCODEN     #1{\unskip}     \fi
\ifx \showISBNx    \undefined \def \showISBNx     #1{\unskip}     \fi
\ifx \showISBNxiii \undefined \def \showISBNxiii  #1{\unskip}     \fi
\ifx \showISSN     \undefined \def \showISSN      #1{\unskip}     \fi
\ifx \showLCCN     \undefined \def \showLCCN      #1{\unskip}     \fi
\ifx \shownote     \undefined \def \shownote      #1{#1}          \fi
\ifx \showarticletitle \undefined \def \showarticletitle #1{#1}   \fi
\ifx \showURL      \undefined \def \showURL       {\relax}        \fi
\providecommand\bibfield[2]{#2}
\providecommand\bibinfo[2]{#2}
\providecommand\natexlab[1]{#1}
\providecommand\showeprint[2][]{arXiv:#2}

\bibitem[Acar et~al\mbox{.}(2020)]%
        {acar2020federated}
\bibfield{author}{\bibinfo{person}{Durmus Alp~Emre Acar}, \bibinfo{person}{Yue Zhao}, \bibinfo{person}{Ramon Matas}, \bibinfo{person}{Matthew Mattina}, \bibinfo{person}{Paul Whatmough}, {and} \bibinfo{person}{Venkatesh Saligrama}.} \bibinfo{year}{2020}\natexlab{}.
\newblock \showarticletitle{Federated Learning Based on Dynamic Regularization}. In \bibinfo{booktitle}{\emph{International Conference on Learning Representations}}.
\newblock


\bibitem[Adilova et~al\mbox{.}(2023)]%
        {adilova2023layerwise}
\bibfield{author}{\bibinfo{person}{Linara Adilova}, \bibinfo{person}{Asja Fischer}, {and} \bibinfo{person}{Martin Jaggi}.} \bibinfo{year}{2023}\natexlab{}.
\newblock \showarticletitle{Layerwise Linear Mode Connectivity}.
\newblock \bibinfo{journal}{\emph{arXiv preprint arXiv:2307.06966}} (\bibinfo{year}{2023}).
\newblock


\bibitem[Ainsworth et~al\mbox{.}(2022)]%
        {ainsworth2022git}
\bibfield{author}{\bibinfo{person}{Samuel Ainsworth}, \bibinfo{person}{Jonathan Hayase}, {and} \bibinfo{person}{Siddhartha Srinivasa}.} \bibinfo{year}{2022}\natexlab{}.
\newblock \showarticletitle{Git Re-Basin: Merging Models modulo Permutation Symmetries}. In \bibinfo{booktitle}{\emph{The Eleventh International Conference on Learning Representations}}.
\newblock


\bibitem[Caldarola et~al\mbox{.}(2022)]%
        {fedsam-eccv}
\bibfield{author}{\bibinfo{person}{Debora Caldarola}, \bibinfo{person}{Barbara Caputo}, {and} \bibinfo{person}{Marco Ciccone}.} \bibinfo{year}{2022}\natexlab{}.
\newblock \showarticletitle{Improving generalization in federated learning by seeking flat minima}. In \bibinfo{booktitle}{\emph{European Conference on Computer Vision}}. Springer, \bibinfo{pages}{654--672}.
\newblock


\bibitem[Cer et~al\mbox{.}(2017)]%
        {cer2017semeval}
\bibfield{author}{\bibinfo{person}{Daniel Cer}, \bibinfo{person}{Mona Diab}, \bibinfo{person}{Eneko Agirre}, \bibinfo{person}{Inigo Lopez-Gazpio}, {and} \bibinfo{person}{Lucia Specia}.} \bibinfo{year}{2017}\natexlab{}.
\newblock \showarticletitle{Semeval-2017 task 1: Semantic textual similarity-multilingual and cross-lingual focused evaluation}.
\newblock \bibinfo{journal}{\emph{arXiv preprint arXiv:1708.00055}} (\bibinfo{year}{2017}).
\newblock


\bibitem[Charles et~al\mbox{.}(2021)]%
        {charles2021large}
\bibfield{author}{\bibinfo{person}{Zachary Charles}, \bibinfo{person}{Zachary Garrett}, \bibinfo{person}{Zhouyuan Huo}, \bibinfo{person}{Sergei Shmulyian}, {and} \bibinfo{person}{Virginia Smith}.} \bibinfo{year}{2021}\natexlab{}.
\newblock \showarticletitle{On large-cohort training for federated learning}.
\newblock \bibinfo{journal}{\emph{Advances in neural information processing systems}}  \bibinfo{volume}{34} (\bibinfo{year}{2021}), \bibinfo{pages}{20461--20475}.
\newblock


\bibitem[Chen and Chao(2022)]%
        {chen2021bridging}
\bibfield{author}{\bibinfo{person}{Hong-You Chen} {and} \bibinfo{person}{Wei-Lun Chao}.} \bibinfo{year}{2022}\natexlab{}.
\newblock \showarticletitle{On bridging generic and personalized federated learning for image classification}. In \bibinfo{booktitle}{\emph{International Conference on Learning Representations}}.
\newblock


\bibitem[Dai et~al\mbox{.}(2023)]%
        {dai2023fedgamma}
\bibfield{author}{\bibinfo{person}{Rong Dai}, \bibinfo{person}{Xun Yang}, \bibinfo{person}{Yan Sun}, \bibinfo{person}{Li Shen}, \bibinfo{person}{Xinmei Tian}, \bibinfo{person}{Meng Wang}, {and} \bibinfo{person}{Yongdong Zhang}.} \bibinfo{year}{2023}\natexlab{}.
\newblock \showarticletitle{FedGAMMA: Federated Learning With Global Sharpness-Aware Minimization}.
\newblock \bibinfo{journal}{\emph{IEEE Transactions on Neural Networks and Learning Systems}} (\bibinfo{year}{2023}).
\newblock


\bibitem[Deng et~al\mbox{.}(2009)]%
        {deng2009imagenet}
\bibfield{author}{\bibinfo{person}{Jia Deng}, \bibinfo{person}{Wei Dong}, \bibinfo{person}{Richard Socher}, \bibinfo{person}{Li-Jia Li}, \bibinfo{person}{Kai Li}, {and} \bibinfo{person}{Li Fei-Fei}.} \bibinfo{year}{2009}\natexlab{}.
\newblock \showarticletitle{Imagenet: A large-scale hierarchical image database}.
\newblock \bibinfo{journal}{\emph{2009 IEEE conference on computer vision and pattern recognition}} (\bibinfo{year}{2009}), \bibinfo{pages}{248--255}.
\newblock


\bibitem[Dolan and Brockett(2005)]%
        {dolan2005automatically}
\bibfield{author}{\bibinfo{person}{Bill Dolan} {and} \bibinfo{person}{Chris Brockett}.} \bibinfo{year}{2005}\natexlab{}.
\newblock \showarticletitle{Automatically constructing a corpus of sentential paraphrases}. In \bibinfo{booktitle}{\emph{Third International Workshop on Paraphrasing (IWP2005)}}.
\newblock


\bibitem[Dosovitskiy et~al\mbox{.}(2021)]%
        {vit_first_paper}
\bibfield{author}{\bibinfo{person}{Alexey Dosovitskiy}, \bibinfo{person}{Lucas Beyer}, \bibinfo{person}{Alexander Kolesnikov}, \bibinfo{person}{Dirk Weissenborn}, \bibinfo{person}{Xiaohua Zhai}, \bibinfo{person}{Thomas Unterthiner}, \bibinfo{person}{Mostafa Dehghani}, \bibinfo{person}{Matthias Minderer}, \bibinfo{person}{Georg Heigold}, \bibinfo{person}{Sylvain Gelly}, {et~al\mbox{.}}} \bibinfo{year}{2021}\natexlab{}.
\newblock \showarticletitle{An Image is Worth 16x16 Words: Transformers for Image Recognition at Scale}. In \bibinfo{booktitle}{\emph{International Conference on Learning Representations}}.
\newblock


\bibitem[Draxler et~al\mbox{.}(2018)]%
        {draxler2018essentially}
\bibfield{author}{\bibinfo{person}{Felix Draxler}, \bibinfo{person}{Kambis Veschgini}, \bibinfo{person}{Manfred Salmhofer}, {and} \bibinfo{person}{Fred Hamprecht}.} \bibinfo{year}{2018}\natexlab{}.
\newblock \showarticletitle{Essentially no barriers in neural network energy landscape}. In \bibinfo{booktitle}{\emph{International conference on machine learning}}. PMLR, \bibinfo{pages}{1309--1318}.
\newblock


\bibitem[Entezari et~al\mbox{.}(2022)]%
        {entezari2021role}
\bibfield{author}{\bibinfo{person}{Rahim Entezari}, \bibinfo{person}{Hanie Sedghi}, \bibinfo{person}{Olga Saukh}, {and} \bibinfo{person}{Behnam Neyshabur}.} \bibinfo{year}{2022}\natexlab{}.
\newblock \showarticletitle{The role of permutation invariance in linear mode connectivity of neural networks}. In \bibinfo{booktitle}{\emph{International Conference on Learning Representations}}.
\newblock


\bibitem[Ferbach et~al\mbox{.}(2023)]%
        {ferbach2023proving}
\bibfield{author}{\bibinfo{person}{Damien Ferbach}, \bibinfo{person}{Baptiste Goujaud}, \bibinfo{person}{Gauthier Gidel}, {and} \bibinfo{person}{Aymeric Dieuleveut}.} \bibinfo{year}{2023}\natexlab{}.
\newblock \showarticletitle{Proving Linear Mode Connectivity of Neural Networks via Optimal Transport}.
\newblock \bibinfo{journal}{\emph{arXiv preprint arXiv:2310.19103}} (\bibinfo{year}{2023}).
\newblock


\bibitem[Foret et~al\mbox{.}(2021)]%
        {foret2020sharpness}
\bibfield{author}{\bibinfo{person}{Pierre Foret}, \bibinfo{person}{Ariel Kleiner}, \bibinfo{person}{Hossein Mobahi}, {and} \bibinfo{person}{Behnam Neyshabur}.} \bibinfo{year}{2021}\natexlab{}.
\newblock \showarticletitle{Sharpness-aware Minimization for Efficiently Improving Generalization}. In \bibinfo{booktitle}{\emph{International Conference on Learning Representations}}.
\newblock
\urldef\tempurl%
\url{https://openreview.net/forum?id=6Tm1mposlrM}
\showURL{%
\tempurl}


\bibitem[Frankle et~al\mbox{.}(2020)]%
        {frankle2020linear}
\bibfield{author}{\bibinfo{person}{Jonathan Frankle}, \bibinfo{person}{Gintare~Karolina Dziugaite}, \bibinfo{person}{Daniel Roy}, {and} \bibinfo{person}{Michael Carbin}.} \bibinfo{year}{2020}\natexlab{}.
\newblock \showarticletitle{Linear mode connectivity and the lottery ticket hypothesis}. In \bibinfo{booktitle}{\emph{International Conference on Machine Learning}}. PMLR, \bibinfo{pages}{3259--3269}.
\newblock


\bibitem[Gao et~al\mbox{.}(2023)]%
        {gao2023fedios}
\bibfield{author}{\bibinfo{person}{Lingzhi Gao}, \bibinfo{person}{Zexi Li}, \bibinfo{person}{Yang Lu}, {and} \bibinfo{person}{Chao Wu}.} \bibinfo{year}{2023}\natexlab{}.
\newblock \showarticletitle{Fedios: Decoupling orthogonal subspaces for personalization in feature-skew federated learning}.
\newblock \bibinfo{journal}{\emph{arXiv preprint arXiv:2311.18559}} (\bibinfo{year}{2023}).
\newblock


\bibitem[Garipov et~al\mbox{.}(2018)]%
        {garipov2018loss}
\bibfield{author}{\bibinfo{person}{Timur Garipov}, \bibinfo{person}{Pavel Izmailov}, \bibinfo{person}{Dmitrii Podoprikhin}, \bibinfo{person}{Dmitry~P Vetrov}, {and} \bibinfo{person}{Andrew~G Wilson}.} \bibinfo{year}{2018}\natexlab{}.
\newblock \showarticletitle{Loss surfaces, mode connectivity, and fast ensembling of dnns}.
\newblock \bibinfo{journal}{\emph{Advances in neural information processing systems}}  \bibinfo{volume}{31} (\bibinfo{year}{2018}).
\newblock


\bibitem[Hahn et~al\mbox{.}(2022)]%
        {hahn2022connecting}
\bibfield{author}{\bibinfo{person}{Seok-Ju Hahn}, \bibinfo{person}{Minwoo Jeong}, {and} \bibinfo{person}{Junghye Lee}.} \bibinfo{year}{2022}\natexlab{}.
\newblock \showarticletitle{Connecting low-loss subspace for personalized federated learning}. In \bibinfo{booktitle}{\emph{Proceedings of the 28th ACM SIGKDD Conference on Knowledge Discovery and Data Mining}}. \bibinfo{pages}{505--515}.
\newblock


\bibitem[He et~al\mbox{.}(2016)]%
        {he2016deep}
\bibfield{author}{\bibinfo{person}{Kaiming He}, \bibinfo{person}{Xiangyu Zhang}, \bibinfo{person}{Shaoqing Ren}, {and} \bibinfo{person}{Jian Sun}.} \bibinfo{year}{2016}\natexlab{}.
\newblock \showarticletitle{Deep residual learning for image recognition}. In \bibinfo{booktitle}{\emph{Proceedings of the IEEE conference on computer vision and pattern recognition}}. \bibinfo{pages}{770--778}.
\newblock


\bibitem[Hu et~al\mbox{.}(2021)]%
        {hu2021lora}
\bibfield{author}{\bibinfo{person}{Edward~J Hu}, \bibinfo{person}{Phillip Wallis}, \bibinfo{person}{Zeyuan Allen-Zhu}, \bibinfo{person}{Yuanzhi Li}, \bibinfo{person}{Shean Wang}, \bibinfo{person}{Lu Wang}, \bibinfo{person}{Weizhu Chen}, {et~al\mbox{.}}} \bibinfo{year}{2021}\natexlab{}.
\newblock \showarticletitle{LoRA: Low-Rank Adaptation of Large Language Models}. In \bibinfo{booktitle}{\emph{International Conference on Learning Representations}}.
\newblock


\bibitem[Karimireddy et~al\mbox{.}(2020)]%
        {karimireddy2020scaffold}
\bibfield{author}{\bibinfo{person}{Sai~Praneeth Karimireddy}, \bibinfo{person}{Satyen Kale}, \bibinfo{person}{Mehryar Mohri}, \bibinfo{person}{Sashank Reddi}, \bibinfo{person}{Sebastian Stich}, {and} \bibinfo{person}{Ananda~Theertha Suresh}.} \bibinfo{year}{2020}\natexlab{}.
\newblock \showarticletitle{Scaffold: Stochastic controlled averaging for federated learning}. In \bibinfo{booktitle}{\emph{International Conference on Machine Learning}}. PMLR, \bibinfo{pages}{5132--5143}.
\newblock


\bibitem[Krizhevsky et~al\mbox{.}(2009)]%
        {krizhevsky2009learning}
\bibfield{author}{\bibinfo{person}{Alex Krizhevsky} {et~al\mbox{.}}} \bibinfo{year}{2009}\natexlab{}.
\newblock \showarticletitle{Learning multiple layers of features from tiny images}.
\newblock \bibinfo{journal}{\emph{Citeseer}} (\bibinfo{year}{2009}).
\newblock


\bibitem[Kwon et~al\mbox{.}(2021)]%
        {kwon2021asam}
\bibfield{author}{\bibinfo{person}{Jungmin Kwon}, \bibinfo{person}{Jeongseop Kim}, \bibinfo{person}{Hyunseo Park}, {and} \bibinfo{person}{In~Kwon Choi}.} \bibinfo{year}{2021}\natexlab{}.
\newblock \showarticletitle{Asam: Adaptive sharpness-aware minimization for scale-invariant learning of deep neural networks}. In \bibinfo{booktitle}{\emph{International Conference on Machine Learning}}. PMLR, \bibinfo{pages}{5905--5914}.
\newblock


\bibitem[Le and Yang(2015)]%
        {le2015tiny}
\bibfield{author}{\bibinfo{person}{Ya Le} {and} \bibinfo{person}{Xuan Yang}.} \bibinfo{year}{2015}\natexlab{}.
\newblock \showarticletitle{Tiny imagenet visual recognition challenge}.
\newblock \bibinfo{journal}{\emph{CS 231N}} \bibinfo{volume}{7}, \bibinfo{number}{7} (\bibinfo{year}{2015}), \bibinfo{pages}{3}.
\newblock


\bibitem[Li et~al\mbox{.}(2018)]%
        {li2018visualizing}
\bibfield{author}{\bibinfo{person}{Hao Li}, \bibinfo{person}{Zheng Xu}, \bibinfo{person}{Gavin Taylor}, \bibinfo{person}{Christoph Studer}, {and} \bibinfo{person}{Tom Goldstein}.} \bibinfo{year}{2018}\natexlab{}.
\newblock \showarticletitle{Visualizing the loss landscape of neural nets}.
\newblock \bibinfo{journal}{\emph{Advances in neural information processing systems}}  \bibinfo{volume}{31} (\bibinfo{year}{2018}).
\newblock


\bibitem[Li et~al\mbox{.}(2021)]%
        {Li_2021_CVPR}
\bibfield{author}{\bibinfo{person}{Qinbin Li}, \bibinfo{person}{Bingsheng He}, {and} \bibinfo{person}{Dawn Song}.} \bibinfo{year}{2021}\natexlab{}.
\newblock \showarticletitle{Model-Contrastive Federated Learning}. In \bibinfo{booktitle}{\emph{Proceedings of the IEEE/CVF Conference on Computer Vision and Pattern Recognition (CVPR)}}. \bibinfo{pages}{10713--10722}.
\newblock


\bibitem[Li et~al\mbox{.}(2020b)]%
        {li2020federated}
\bibfield{author}{\bibinfo{person}{Tian Li}, \bibinfo{person}{Anit~Kumar Sahu}, \bibinfo{person}{Manzil Zaheer}, \bibinfo{person}{Maziar Sanjabi}, \bibinfo{person}{Ameet Talwalkar}, {and} \bibinfo{person}{Virginia Smith}.} \bibinfo{year}{2020}\natexlab{b}.
\newblock \showarticletitle{Federated optimization in heterogeneous networks}.
\newblock \bibinfo{journal}{\emph{Proceedings of Machine Learning and Systems}}  \bibinfo{volume}{2} (\bibinfo{year}{2020}), \bibinfo{pages}{429--450}.
\newblock


\bibitem[Li et~al\mbox{.}(2019)]%
        {li2019convergence}
\bibfield{author}{\bibinfo{person}{Xiang Li}, \bibinfo{person}{Kaixuan Huang}, \bibinfo{person}{Wenhao Yang}, \bibinfo{person}{Shusen Wang}, {and} \bibinfo{person}{Zhihua Zhang}.} \bibinfo{year}{2019}\natexlab{}.
\newblock \showarticletitle{On the convergence of fedavg on non-iid data}.
\newblock \bibinfo{journal}{\emph{arXiv preprint arXiv:1907.02189}} (\bibinfo{year}{2019}).
\newblock


\bibitem[Li et~al\mbox{.}(2020a)]%
        {li2020fedbn}
\bibfield{author}{\bibinfo{person}{Xiaoxiao Li}, \bibinfo{person}{Meirui JIANG}, \bibinfo{person}{Xiaofei Zhang}, \bibinfo{person}{Michael Kamp}, {and} \bibinfo{person}{Qi Dou}.} \bibinfo{year}{2020}\natexlab{a}.
\newblock \showarticletitle{FedBN: Federated Learning on Non-IID Features via Local Batch Normalization}. In \bibinfo{booktitle}{\emph{International Conference on Learning Representations}}.
\newblock


\bibitem[Li et~al\mbox{.}(2023a)]%
        {li2023edge}
\bibfield{author}{\bibinfo{person}{Zexi Li}, \bibinfo{person}{Qunwei Li}, \bibinfo{person}{Yi Zhou}, \bibinfo{person}{Wenliang Zhong}, \bibinfo{person}{Guannan Zhang}, {and} \bibinfo{person}{Chao Wu}.} \bibinfo{year}{2023}\natexlab{a}.
\newblock \showarticletitle{Edge-cloud collaborative learning with federated and centralized features}. In \bibinfo{booktitle}{\emph{Proceedings of the 46th International ACM SIGIR Conference on Research and Development in Information Retrieval}}. \bibinfo{pages}{1949--1953}.
\newblock


\bibitem[Li et~al\mbox{.}(2024)]%
        {li2024training}
\bibfield{author}{\bibinfo{person}{Zexi Li}, \bibinfo{person}{Zhiqi Li}, \bibinfo{person}{Jie Lin}, \bibinfo{person}{Tao Shen}, \bibinfo{person}{Tao Lin}, {and} \bibinfo{person}{Chao Wu}.} \bibinfo{year}{2024}\natexlab{}.
\newblock \showarticletitle{Training-time neuron alignment through permutation subspace for improving linear mode connectivity and model fusion}.
\newblock \bibinfo{journal}{\emph{arXiv preprint arXiv:2402.01342}} (\bibinfo{year}{2024}).
\newblock


\bibitem[Li et~al\mbox{.}(2023b)]%
        {pmlr-v202-li23s}
\bibfield{author}{\bibinfo{person}{Zexi Li}, \bibinfo{person}{Tao Lin}, \bibinfo{person}{Xinyi Shang}, {and} \bibinfo{person}{Chao Wu}.} \bibinfo{year}{2023}\natexlab{b}.
\newblock \showarticletitle{Revisiting Weighted Aggregation in Federated Learning with Neural Networks}. In \bibinfo{booktitle}{\emph{Proceedings of the 40th International Conference on Machine Learning}} \emph{(\bibinfo{series}{Proceedings of Machine Learning Research}, Vol.~\bibinfo{volume}{202})}. \bibinfo{publisher}{PMLR}, \bibinfo{pages}{19767--19788}.
\newblock


\bibitem[Li et~al\mbox{.}(2022a)]%
        {li2022towards}
\bibfield{author}{\bibinfo{person}{Zexi Li}, \bibinfo{person}{Jiaxun Lu}, \bibinfo{person}{Shuang Luo}, \bibinfo{person}{Didi Zhu}, \bibinfo{person}{Yunfeng Shao}, \bibinfo{person}{Yinchuan Li}, \bibinfo{person}{Zhimeng Zhang}, \bibinfo{person}{Yongheng Wang}, {and} \bibinfo{person}{Chao Wu}.} \bibinfo{year}{2022}\natexlab{a}.
\newblock \showarticletitle{Towards effective clustered federated learning: A peer-to-peer framework with adaptive neighbor matching}.
\newblock \bibinfo{journal}{\emph{IEEE Transactions on Big Data}} (\bibinfo{year}{2022}).
\newblock


\bibitem[Li et~al\mbox{.}(2022b)]%
        {li2022can}
\bibfield{author}{\bibinfo{person}{Zexi Li}, \bibinfo{person}{Feng Mao}, {and} \bibinfo{person}{Chao Wu}.} \bibinfo{year}{2022}\natexlab{b}.
\newblock \showarticletitle{Can we share models if sharing data is not an option?}
\newblock \bibinfo{journal}{\emph{Patterns}} \bibinfo{volume}{3}, \bibinfo{number}{11} (\bibinfo{year}{2022}).
\newblock


\bibitem[Li et~al\mbox{.}(2023c)]%
        {fedetf}
\bibfield{author}{\bibinfo{person}{Zexi Li}, \bibinfo{person}{Xinyi Shang}, \bibinfo{person}{Rui He}, \bibinfo{person}{Tao Lin}, {and} \bibinfo{person}{Chao Wu}.} \bibinfo{year}{2023}\natexlab{c}.
\newblock \showarticletitle{No Fear of Classifier Biases: Neural Collapse Inspired Federated Learning with Synthetic and Fixed Classifier}. In \bibinfo{booktitle}{\emph{Proceedings of the IEEE/CVF International Conference on Computer Vision (ICCV)}}. \bibinfo{pages}{5319--5329}.
\newblock


\bibitem[Lin et~al\mbox{.}(2020)]%
        {lin2020ensemble}
\bibfield{author}{\bibinfo{person}{Tao Lin}, \bibinfo{person}{Lingjing Kong}, \bibinfo{person}{Sebastian~U Stich}, {and} \bibinfo{person}{Martin Jaggi}.} \bibinfo{year}{2020}\natexlab{}.
\newblock \showarticletitle{Ensemble distillation for robust model fusion in federated learning}.
\newblock \bibinfo{journal}{\emph{Advances in Neural Information Processing Systems}}  \bibinfo{volume}{33} (\bibinfo{year}{2020}), \bibinfo{pages}{2351--2363}.
\newblock


\bibitem[Liu et~al\mbox{.}(2019)]%
        {liu2019roberta}
\bibfield{author}{\bibinfo{person}{Yinhan Liu}, \bibinfo{person}{Myle Ott}, \bibinfo{person}{Naman Goyal}, \bibinfo{person}{Jingfei Du}, \bibinfo{person}{Mandar Joshi}, \bibinfo{person}{Danqi Chen}, \bibinfo{person}{Omer Levy}, \bibinfo{person}{Mike Lewis}, \bibinfo{person}{Luke Zettlemoyer}, {and} \bibinfo{person}{Veselin Stoyanov}.} \bibinfo{year}{2019}\natexlab{}.
\newblock \showarticletitle{RoBERTa: A Robustly Optimized BERT Pretraining Approach}.
\newblock \bibinfo{journal}{\emph{arXiv preprint arXiv:1907.11692}} (\bibinfo{year}{2019}).
\newblock
\urldef\tempurl%
\url{https://arxiv.org/abs/1907.11692}
\showURL{%
\tempurl}


\bibitem[Lubana et~al\mbox{.}(2023)]%
        {lubana2023mechanistic}
\bibfield{author}{\bibinfo{person}{Ekdeep~Singh Lubana}, \bibinfo{person}{Eric~J Bigelow}, \bibinfo{person}{Robert~P Dick}, \bibinfo{person}{David Krueger}, {and} \bibinfo{person}{Hidenori Tanaka}.} \bibinfo{year}{2023}\natexlab{}.
\newblock \showarticletitle{Mechanistic mode connectivity}. In \bibinfo{booktitle}{\emph{International Conference on Machine Learning}}. PMLR, \bibinfo{pages}{22965--23004}.
\newblock


\bibitem[McMahan et~al\mbox{.}(2017)]%
        {mcmahan2017communication}
\bibfield{author}{\bibinfo{person}{Brendan McMahan}, \bibinfo{person}{Eider Moore}, \bibinfo{person}{Daniel Ramage}, \bibinfo{person}{Seth Hampson}, {and} \bibinfo{person}{Blaise~Aguera y Arcas}.} \bibinfo{year}{2017}\natexlab{}.
\newblock \showarticletitle{Communication-efficient learning of deep networks from decentralized data}. In \bibinfo{booktitle}{\emph{Artificial intelligence and statistics}}. PMLR, \bibinfo{pages}{1273--1282}.
\newblock


\bibitem[Qu et~al\mbox{.}(2022)]%
        {fedsam-icml}
\bibfield{author}{\bibinfo{person}{Zhe Qu}, \bibinfo{person}{Xingyu Li}, \bibinfo{person}{Rui Duan}, \bibinfo{person}{Yao Liu}, \bibinfo{person}{Bo Tang}, {and} \bibinfo{person}{Zhuo Lu}.} \bibinfo{year}{2022}\natexlab{}.
\newblock \showarticletitle{Generalized federated learning via sharpness aware minimization}. In \bibinfo{booktitle}{\emph{International Conference on Machine Learning}}. PMLR, \bibinfo{pages}{18250--18280}.
\newblock


\bibitem[Radford et~al\mbox{.}(2021)]%
        {pmlr-v139-radford21a}
\bibfield{author}{\bibinfo{person}{Alec Radford}, \bibinfo{person}{Jong~Wook Kim}, \bibinfo{person}{Chris Hallacy}, \bibinfo{person}{Aditya Ramesh}, \bibinfo{person}{Gabriel Goh}, \bibinfo{person}{Sandhini Agarwal}, \bibinfo{person}{Girish Sastry}, \bibinfo{person}{Amanda Askell}, \bibinfo{person}{Pamela Mishkin}, \bibinfo{person}{Jack Clark}, \bibinfo{person}{Gretchen Krueger}, {and} \bibinfo{person}{Ilya Sutskever}.} \bibinfo{year}{2021}\natexlab{}.
\newblock \showarticletitle{Learning Transferable Visual Models From Natural Language Supervision}. In \bibinfo{booktitle}{\emph{Proceedings of the 38th International Conference on Machine Learning}} \emph{(\bibinfo{series}{Proceedings of Machine Learning Research}, Vol.~\bibinfo{volume}{139})}, \bibfield{editor}{\bibinfo{person}{Marina Meila} {and} \bibinfo{person}{Tong Zhang}} (Eds.). \bibinfo{publisher}{PMLR}, \bibinfo{pages}{8748--8763}.
\newblock
\urldef\tempurl%
\url{https://proceedings.mlr.press/v139/radford21a.html}
\showURL{%
\tempurl}


\bibitem[Radford et~al\mbox{.}(2018)]%
        {gpt_first_paper}
\bibfield{author}{\bibinfo{person}{Alec Radford}, \bibinfo{person}{Karthik Narasimhan}, \bibinfo{person}{Tim Salimans}, {and} \bibinfo{person}{Ilya Sutskever}.} \bibinfo{year}{2018}\natexlab{}.
\newblock \showarticletitle{Improving Language Understanding by Generative Pre-Training}.
\newblock  (\bibinfo{year}{2018}).
\newblock


\bibitem[Rajpurkar et~al\mbox{.}(2018)]%
        {rajpurkar2018know}
\bibfield{author}{\bibinfo{person}{Pranav Rajpurkar}, \bibinfo{person}{Robin Jia}, {and} \bibinfo{person}{Percy Liang}.} \bibinfo{year}{2018}\natexlab{}.
\newblock \showarticletitle{Know what you don't know: Unanswerable questions for SQuAD}.
\newblock \bibinfo{journal}{\emph{arXiv preprint arXiv:1806.03822}} (\bibinfo{year}{2018}).
\newblock


\bibitem[Simonyan and Zisserman(2015)]%
        {simonyan2014very}
\bibfield{author}{\bibinfo{person}{Karen Simonyan} {and} \bibinfo{person}{Andrew Zisserman}.} \bibinfo{year}{2015}\natexlab{}.
\newblock \showarticletitle{Very deep convolutional networks for large-scale image recognition}. In \bibinfo{booktitle}{\emph{Proceedings of the 3rd International Conference on Learning Representations (ICLR)}}.
\newblock
\urldef\tempurl%
\url{https://arxiv.org/abs/1409.1556}
\showURL{%
\tempurl}


\bibitem[Socher et~al\mbox{.}(2013)]%
        {socher2013recursive}
\bibfield{author}{\bibinfo{person}{Richard Socher}, \bibinfo{person}{Alex Perelygin}, \bibinfo{person}{Jean Wu}, \bibinfo{person}{Jason Chuang}, \bibinfo{person}{Christopher~D Manning}, \bibinfo{person}{Andrew~Y Ng}, {and} \bibinfo{person}{Christopher Potts}.} \bibinfo{year}{2013}\natexlab{}.
\newblock \showarticletitle{Recursive deep models for semantic compositionality over a sentiment treebank}. In \bibinfo{booktitle}{\emph{Proceedings of the 2013 conference on empirical methods in natural language processing}}. \bibinfo{pages}{1631--1642}.
\newblock


\bibitem[Sun et~al\mbox{.}(2023)]%
        {sun2023dynamic}
\bibfield{author}{\bibinfo{person}{Yan Sun}, \bibinfo{person}{Li Shen}, \bibinfo{person}{Shixiang Chen}, \bibinfo{person}{Liang Ding}, {and} \bibinfo{person}{Dacheng Tao}.} \bibinfo{year}{2023}\natexlab{}.
\newblock \showarticletitle{Dynamic regularized sharpness aware minimization in federated learning: Approaching global consistency and smooth landscape}. In \bibinfo{booktitle}{\emph{International Conference on Machine Learning}}. PMLR, \bibinfo{pages}{32991--33013}.
\newblock


\bibitem[Touvron et~al\mbox{.}(2023)]%
        {touvron2023llama}
\bibfield{author}{\bibinfo{person}{Hugo Touvron}, \bibinfo{person}{Thibaut Lavril}, \bibinfo{person}{Gautier Izacard}, \bibinfo{person}{Xavier Martinet}, \bibinfo{person}{Marie-Anne Lachaux}, \bibinfo{person}{Timoth{\'e}e Lacroix}, \bibinfo{person}{Baptiste Rozi{\`e}re}, \bibinfo{person}{Naman Goyal}, \bibinfo{person}{Eric Hambro}, \bibinfo{person}{Faisal Azhar}, {et~al\mbox{.}}} \bibinfo{year}{2023}\natexlab{}.
\newblock \showarticletitle{Llama: Open and efficient foundation language models}.
\newblock \bibinfo{journal}{\emph{arXiv preprint arXiv:2302.13971}} (\bibinfo{year}{2023}).
\newblock


\bibitem[Vlaar and Frankle(2022)]%
        {vlaar2022can}
\bibfield{author}{\bibinfo{person}{Tiffany~J Vlaar} {and} \bibinfo{person}{Jonathan Frankle}.} \bibinfo{year}{2022}\natexlab{}.
\newblock \showarticletitle{What can linear interpolation of neural network loss landscapes tell us?}. In \bibinfo{booktitle}{\emph{International Conference on Machine Learning}}. PMLR, \bibinfo{pages}{22325--22341}.
\newblock


\bibitem[Wang et~al\mbox{.}(2019)]%
        {wang2018glue}
\bibfield{author}{\bibinfo{person}{Alex Wang}, \bibinfo{person}{Amanpreet Singh}, \bibinfo{person}{Julian Michael}, \bibinfo{person}{Felix Hill}, \bibinfo{person}{Omer Levy}, {and} \bibinfo{person}{Samuel~R. Bowman}.} \bibinfo{year}{2019}\natexlab{}.
\newblock \showarticletitle{GLUE: A Multi-Task Benchmark and Analysis Platform for Natural Language Understanding}. In \bibinfo{booktitle}{\emph{Proceedings of the International Conference on Learning Representations (ICLR)}}.
\newblock
\urldef\tempurl%
\url{https://openreview.net/forum?id=rJ4km2R5t7}
\showURL{%
\tempurl}


\bibitem[Wang et~al\mbox{.}(2020)]%
        {wang2020tackling}
\bibfield{author}{\bibinfo{person}{Jianyu Wang}, \bibinfo{person}{Qinghua Liu}, \bibinfo{person}{Hao Liang}, \bibinfo{person}{Gauri Joshi}, {and} \bibinfo{person}{H~Vincent Poor}.} \bibinfo{year}{2020}\natexlab{}.
\newblock \showarticletitle{Tackling the objective inconsistency problem in heterogeneous federated optimization}.
\newblock \bibinfo{journal}{\emph{Advances in neural information processing systems}}  \bibinfo{volume}{33} (\bibinfo{year}{2020}), \bibinfo{pages}{7611--7623}.
\newblock


\bibitem[Warstadt et~al\mbox{.}(2019)]%
        {warstadt2019neural}
\bibfield{author}{\bibinfo{person}{Alex Warstadt}, \bibinfo{person}{Amanpreet Singh}, {and} \bibinfo{person}{Samuel~R Bowman}.} \bibinfo{year}{2019}\natexlab{}.
\newblock \showarticletitle{Neural network acceptability judgments}.
\newblock \bibinfo{journal}{\emph{Transactions of the Association for Computational Linguistics}}  \bibinfo{volume}{7} (\bibinfo{year}{2019}), \bibinfo{pages}{625--641}.
\newblock


\bibitem[Wen et~al\mbox{.}(2023)]%
        {wen2023optimizing}
\bibfield{author}{\bibinfo{person}{Haitao Wen}, \bibinfo{person}{Haoyang Cheng}, \bibinfo{person}{Heqian Qiu}, \bibinfo{person}{Lanxiao Wang}, \bibinfo{person}{Lili Pan}, {and} \bibinfo{person}{Hongliang Li}.} \bibinfo{year}{2023}\natexlab{}.
\newblock \showarticletitle{Optimizing mode connectivity for class incremental learning}. In \bibinfo{booktitle}{\emph{International Conference on Machine Learning}}. PMLR, \bibinfo{pages}{36940--36957}.
\newblock


\bibitem[Williams et~al\mbox{.}(2017)]%
        {williams2017broad}
\bibfield{author}{\bibinfo{person}{Adina Williams}, \bibinfo{person}{Nikita Nangia}, {and} \bibinfo{person}{Samuel~R Bowman}.} \bibinfo{year}{2017}\natexlab{}.
\newblock \showarticletitle{A broad-coverage challenge corpus for sentence understanding through inference}.
\newblock \bibinfo{journal}{\emph{arXiv preprint arXiv:1704.05426}} (\bibinfo{year}{2017}).
\newblock


\bibitem[Wortsman et~al\mbox{.}(2021)]%
        {wortsman2021learning}
\bibfield{author}{\bibinfo{person}{Mitchell Wortsman}, \bibinfo{person}{Maxwell~C Horton}, \bibinfo{person}{Carlos Guestrin}, \bibinfo{person}{Ali Farhadi}, {and} \bibinfo{person}{Mohammad Rastegari}.} \bibinfo{year}{2021}\natexlab{}.
\newblock \showarticletitle{Learning neural network subspaces}. In \bibinfo{booktitle}{\emph{International Conference on Machine Learning}}. PMLR, \bibinfo{pages}{11217--11227}.
\newblock


\bibitem[Wu et~al\mbox{.}(2025)]%
        {wu2025towards}
\bibfield{author}{\bibinfo{person}{Chenrui Wu}, \bibinfo{person}{Zexi Li}, \bibinfo{person}{Fangxin Wang}, \bibinfo{person}{Hongyang Chen}, \bibinfo{person}{Jiajun Bu}, {and} \bibinfo{person}{Haishuai Wang}.} \bibinfo{year}{2025}\natexlab{}.
\newblock \showarticletitle{Towards Universal Personalization in Federated Learning via Collaborative Foundation Generative Models}.
\newblock \bibinfo{journal}{\emph{IEEE Transactions on Mobile Computing}} (\bibinfo{year}{2025}).
\newblock


\bibitem[Wu et~al\mbox{.}(2023)]%
        {wu2023learning}
\bibfield{author}{\bibinfo{person}{Chenrui Wu}, \bibinfo{person}{Zexi Li}, \bibinfo{person}{Fangxin Wang}, {and} \bibinfo{person}{Chao Wu}.} \bibinfo{year}{2023}\natexlab{}.
\newblock \showarticletitle{Learning cautiously in federated learning with noisy and heterogeneous clients}. In \bibinfo{booktitle}{\emph{2023 IEEE International Conference on Multimedia and Expo (ICME)}}. IEEE, \bibinfo{pages}{660--665}.
\newblock


\bibitem[Xiao et~al\mbox{.}(2017)]%
        {xiao2017fashion}
\bibfield{author}{\bibinfo{person}{Han Xiao}, \bibinfo{person}{Kashif Rasul}, {and} \bibinfo{person}{Roland Vollgraf}.} \bibinfo{year}{2017}\natexlab{}.
\newblock \showarticletitle{Fashion-mnist: a novel image dataset for benchmarking machine learning algorithms}.
\newblock \bibinfo{journal}{\emph{arXiv preprint arXiv:1708.07747}} (\bibinfo{year}{2017}).
\newblock


\bibitem[Ye et~al\mbox{.}(2023)]%
        {ye2023feddisco}
\bibfield{author}{\bibinfo{person}{Rui Ye}, \bibinfo{person}{Mingkai Xu}, \bibinfo{person}{Jianyu Wang}, \bibinfo{person}{Chenxin Xu}, \bibinfo{person}{Siheng Chen}, {and} \bibinfo{person}{Yanfeng Wang}.} \bibinfo{year}{2023}\natexlab{}.
\newblock \showarticletitle{FedDisco: Federated Learning with Discrepancy-Aware Collaboration}.
\newblock \bibinfo{journal}{\emph{arXiv preprint arXiv:2305.19229}} (\bibinfo{year}{2023}).
\newblock


\bibitem[Yuan et~al\mbox{.}(2022)]%
        {yuan2021we}
\bibfield{author}{\bibinfo{person}{Honglin Yuan}, \bibinfo{person}{Warren~Richard Morningstar}, \bibinfo{person}{Lin Ning}, {and} \bibinfo{person}{Karan Singhal}.} \bibinfo{year}{2022}\natexlab{}.
\newblock \showarticletitle{What Do We Mean by Generalization in Federated Learning?}. In \bibinfo{booktitle}{\emph{International Conference on Learning Representations}}.
\newblock


\bibitem[Zhang et~al\mbox{.}(2021)]%
        {zhang2021understanding}
\bibfield{author}{\bibinfo{person}{Chiyuan Zhang}, \bibinfo{person}{Samy Bengio}, \bibinfo{person}{Moritz Hardt}, \bibinfo{person}{Benjamin Recht}, {and} \bibinfo{person}{Oriol Vinyals}.} \bibinfo{year}{2021}\natexlab{}.
\newblock \showarticletitle{Understanding deep learning (still) requires rethinking generalization}.
\newblock \bibinfo{journal}{\emph{Commun. ACM}} \bibinfo{volume}{64}, \bibinfo{number}{3} (\bibinfo{year}{2021}), \bibinfo{pages}{107--115}.
\newblock


\bibitem[Zhang et~al\mbox{.}(2022)]%
        {zhang2022federated}
\bibfield{author}{\bibinfo{person}{Jie Zhang}, \bibinfo{person}{Zhiqi Li}, \bibinfo{person}{Bo Li}, \bibinfo{person}{Jianghe Xu}, \bibinfo{person}{Shuang Wu}, \bibinfo{person}{Shouhong Ding}, {and} \bibinfo{person}{Chao Wu}.} \bibinfo{year}{2022}\natexlab{}.
\newblock \showarticletitle{Federated learning with label distribution skew via logits calibration}. In \bibinfo{booktitle}{\emph{International Conference on Machine Learning}}. PMLR, \bibinfo{pages}{26311--26329}.
\newblock


\bibitem[Zhao et~al\mbox{.}(2020)]%
        {zhao2020bridging}
\bibfield{author}{\bibinfo{person}{Pu Zhao}, \bibinfo{person}{Pin-Yu Chen}, \bibinfo{person}{Payel Das}, \bibinfo{person}{Karthikeyan~Natesan Ramamurthy}, {and} \bibinfo{person}{Xue Lin}.} \bibinfo{year}{2020}\natexlab{}.
\newblock \showarticletitle{BRIDGING MODE CONNECTIVITY IN LOSS LANDSCAPES AND ADVERSARIAL ROBUSTNESS}. In \bibinfo{booktitle}{\emph{International Conference on Learning Representations (ICLR 2020)}}.
\newblock


\bibitem[Zhou et~al\mbox{.}(2023)]%
        {zhou2023mode}
\bibfield{author}{\bibinfo{person}{Tailin Zhou}, \bibinfo{person}{Jun Zhang}, {and} \bibinfo{person}{Danny~HK Tsang}.} \bibinfo{year}{2023}\natexlab{}.
\newblock \showarticletitle{Mode Connectivity and Data Heterogeneity of Federated Learning}.
\newblock \bibinfo{journal}{\emph{arXiv preprint arXiv:2309.16923}} (\bibinfo{year}{2023}).
\newblock


\end{thebibliography}

\newpage

\newpage
\appendix
\onecolumn

\begin{center}
\Large
\textbf{Appendix}
\end{center}

In this appendix, we provide the details omitted in the main paper and more analyses and discussions.

\begin{itemize}
    \item \autoref{app_sec:implem_details}: details of experimental setups (cf. \autoref{sec:transitivity} and \autoref{sec:exp} of the main paper).
    \item \autoref{app_sec:proof}: detailed proofs of Lemma \ref{lemma:lemma1}, \autoref{theorem:theorem1}, and \autoref{theorem:theorem2} (cf. \autoref{sec:transitivity} of the main paper).
    \item \autoref{app_sec:more_results}: additional results and analyses.
    \item \autoref{app_sec:more_related}: more discussions about the related works (cf. \autoref{sec:prelim_related_works} of the main paper).
\end{itemize}

\section{Implementation Details}\label{app_sec:implem_details}

\begin{algorithm}[th] 
  \caption{\textbf{FedGuCci}: Federated Learning with Improved Group Connectivity}
  \textbf{Input}: {$M$ clients, communication round $T$, local epoch $E$, participation ratio $\rho=\frac{K}{M}$; number of anchor models $N$; initial global model $\textbf{w}_{g}^{1}$;}\\
  \textbf{Output}: final global model $\textbf{w}_{g}^{T}$;

  \begin{algorithmic}[1]
    \FOR{each round $t=1,\dots, T$}
        \STATE \texttt{\# Client updates}
        \FOR{each client $i, i\in[M]$ \textbf{in parallel}}
        \STATE Set local model $\textbf{w}_i^{t} \leftarrow \textbf{w}_g^{t}$;
        \STATE Replay $N$ historical global models as the anchor models $\mathbf{W}_{\text{anc}^*}^t$ by \autoref{equ:global_anchors};
        \STATE {Compute $E$ epochs of client local training with connectivity loss by \autoref{equ:fedgucc_learn_objective}};
        \ENDFOR
        \STATE \texttt{\# Server updates}
        \STATE The server samples a set $\mathcal{S}^t$ of $K$ clients and receive their models $\{\mathbf{w}_{i}^{t}\}_{i\in \mathcal{S}^t}$;
        \STATE The server obtains the global model $\textbf{w}_g^{t+1}$ via aggregation by \autoref{eqt_FedAvg};
    \ENDFOR
    \STATE Obtain the final global model $\textbf{w}_{g}^{T}$.
  \end{algorithmic}
\label{fedlaw_pseudo-code}
\end{algorithm}

In this section, we present the implementation details omitted from the main paper. 
\subsection{Datasets}
\textbf{CIFAR-10}~\citep{krizhevsky2009learning} consists of 60,000 32x32 color images, evenly distributed among 10 different classes, including airplanes, automobiles, birds, cats, etc., each represented by 6,000 images. The dataset is split into 50,000 training images and 10,000 test images.
\textbf{FashionMNIST}~\citep{xiao2017fashion} is designed as an advanced replacement for the MNIST dataset, suitable for benchmarking machine learning models. It comprises 70,000 images divided into 60,000 training samples and 10,000 test samples. Each image is a 28x28 grayscale representation of fashion items from 10 different classes, such as shirts, trousers, sneakers, etc.
The \textbf{CIFAR-100} dataset~\citep{krizhevsky2009learning} is similar to CIFAR-10 but more challenging, containing 100 different classes grouped into 20 superclasses. It includes 60,000 32x32 color images, with 600 images per class, divided into 50,000 training images and 10,000 test images. This dataset is primarily used for developing and evaluating more sophisticated image classification models.
\textbf{TinyImageNet} TinyImageNet is a reduced-scale version of the renowned ImageNet dataset, which comprises a total of 200 classes. The dataset is structured into training, validation, and test sets, with 200,000 training images, 20,000 validation images, and 20,000 test images.
The \textbf{GLUE} benchmark is a compilation of 9 datasets for evaluating natural language understanding systems. Tasks are framed as either single-sentence classification or sentence-pair classification tasks. GLUE includes MNLI (inference, \cite{williams2017broad}), MRPC (paraphrase detection, \cite{socher2013recursive}), MRPC (paraphrase detection, \cite{dolan2005automatically}), CoLA (linguistic acceptability, \cite{warstadt2019neural}), QNLI (inference, \cite{rajpurkar2018know}), QQP (question-answering), RTE (inference), WNLI (inference), and STS-B (textual similarity, \cite{cer2017semeval}). Due to high computation costs, we only used SST2, MRPC, CoLA, QNLI, RTE, and STS-B for evaluation. For the replication in \autoref{table:glue}, we report results on the development sets after fine-tuning the pretrained models on the corresponding single-task training data. Our fine-tuning approach is LoRA\cite{hu2021lora}.

\subsection{Models}
\textbf{SimpleCNN.} The simple CNN for CIFAR-10 is a convolutional neural network model with ReLU activations, consisting of 3 convolutional layers followed by 2 fully connected layers. The first convolutional layer has a size of (3, 32, 3), followed by a max-pooling layer of size (2, 2). The second and third convolutional layers have sizes of (32, 64, 3) and (64, 64, 3), respectively. The last two fully connected layers have sizes of (6444, 64) and (64, num\_classes), respectively.

\textbf{ResNets.} We followed the model architectures used in \citep{li2018visualizing}. The number in the model names indicates the number of layers in the models, where a larger number indicates a deeper network. We used ResNet18 and ResNet20 for CIFAR-10 and CIFAR-100, respectively. Notably, to mitigate abnormal effects introduced by batch normalization layers \citep{li2020fedbn,lin2020ensemble}, following by \cite{adilova2023layerwise}, we removed all batch normalization layers from the ResNets.

\textbf{VGG.} VGG \cite{simonyan2014very} is a convolutional neural network (CNN) architecture that gained prominence in the field of computer vision. Among its variants, we used VGG11.

\textbf{RoBERTa.} RoBERTa is a natural language processing (NLP) model that builds upon the foundation laid by BERT, which was introduced by \cite{liu2019roberta} to address some limitations and improve the performance of BERT on various NLP tasks. It comes in various sizes, and we used RoBERTa-base considering to high computational costs. 

\subsection{Randomness}
In all experiments, we conducted each experiment three times with different random seeds and reported the averaged results along with standard deviations.

We ensured consistency by setting torch, numpy, and random functions with the same random seed, thereby making the data partitions and other settings identical. To ensure all algorithms started with the same initial model, we saved an initial model for each architecture and loaded it at the beginning of each experiment. Additionally, for experiments involving partial participation, the selection of participating clients in each round significantly influenced the model's performance. To maintain fairness, we saved the sequences of participating clients in each round and loaded these sequences for all experiments. This procedure guaranteed that, given a random seed and participation ratio, every algorithm had the same set of sampled clients in each round.

\subsection{Evaluation} 
\textbf{CIFAR-10, CIFAR-100, FashionMNIST and Tiny-ImageNet.} We evaluate the global model performance on the test dataset of each dataset. The test dataset is mostly class-balanced and can reflect the global learning objective of a federated learning system. Therefore, the performance of the model on the test set can indicate the generalization performance of global models~\citep{pmlr-v202-li23s,lin2020ensemble}. In each experiment, we take the average test accuracy of the last 5 rounds as the final test accuracy. 

\textbf{GLUE.} For GLUE, we used the validation dataset for evaluation. Following by \citet{hu2021lora}, we chose the best accuracy as the final test accuracy.

\subsection{Hyperparameter}
\autoref{table:sota_table}: For Fashion-MNIST, $T$ is 400, batch size is 64 and learning rate is 0.08. For CIFAR-10, $T$ is 150, batch size is 64 and learning rate is 0.04. For CIFAR-100, $T$ is 200, batch size is 64 and learning rate is 0.03. For Tiny-ImageNet, learning rate is 0.01 and $T$ is 50. Optimzier is ADAM for Fashion-MNIST and others are SGD.

\autoref{table:glue}:  Optimizer is Adam for all datasets. For CoLA and STSB, $T$ is 25, batch size is 16 and learning rate is 2e-5. For SST-2, $T$ is 50, batch size is 16, and learning rate is 2e-6. For QNLI, $T$ is 20, batch size is 32 and learning rate is 2e-6. For RTE and MRPC, $T$ is 80, batch size is 16 and learning rate is 2e-5.  

\autoref{table:num_clients}: $T$ is 150, E is 3, batch size is 64 and learning rate is 0.04. 

\autoref{table:pretrained_vision}:  ResNet-18 and MobileViT are pretrained on ImageNet. E is 3 for both models. For ViT, $T$ is 15, batch size is 16 and learning rate is 0.001. For ResNet, $T$ is 50, batch size is 64 and learning rate 1e-4.

\autoref{table:ablation}: For CIFAR-10, $T$ is 150, batch size is 64 and learning rate is 0.04. For CIFAR-100, T is 200, batch size is 64, and learning rate is 0.03. 

\autoref{fig:fl_local_epochs}: $M = 60$ for CIFAR-10, and $M = 20$ for CIFAR-100. $T$ is 200 for both datasets. Learning rate is 0.03 for CIFAR-10, and 0.04 for CIFAR-100. 

\autoref{fig:fl_hyperparam}: $T$ is 150, $E$ is 3, $M$ is 60, learning rate is 0.02, and batch size is 64. 

\section{Proof}\label{app_sec:proof}
In this section, we give the proofs of the lemma and theorem in \autoref{sec:transitivity}.

\begin{lemma}(Lemma \ref{lemma:lemma1})
    Set the uniform and bounded domain for network $\mathbf{w}$ as $\mathcal{E}_\epsilon=\{\mathbf{w}\in \Omega |\mathcal{L}(\mathbf{w})<\epsilon\}$.  Define a random event $D_\epsilon({\mathbf{w}_{\text{anc}}^*})$ as $D_\epsilon({\mathbf{w}_{\text{anc}}^*})=\{\mathbf{w} \in \mathcal{E}_\epsilon |\forall \alpha \in [0,1], \mathcal{L}(\alpha {\mathbf{w}_{\text{anc}}^*}+(1-\alpha)\mathbf{w})\le \epsilon\}$.  Consider an anchor model ${\mathbf{w}_{\text{anc}}^*}$ and an arbitrary network $\mathbf{w}$  and for $\epsilon>0$.  Then for $\Vert \mathbf{w}-{\mathbf{w}_{\text{anc}}^*} \Vert_\infty \le \frac{d}{2}$,
    \begin{equation}
        \begin{aligned}
            P(D_\epsilon({\mathbf{w}_{\text{anc}}^*})) \le (\frac{d_\epsilon}{d})^S,
        \end{aligned}
    \end{equation}
    where $d_\epsilon=\left|\mathcal{E}_\epsilon\right|^\frac{1}{S}$ represents the average diameter of region $\mathcal{E}_\epsilon$, $S$ represents the number of parameters of the neural network and the equality holds if and only if $\mathcal{E}_\epsilon\subset \{\mathbf{w} | \Vert\mathbf{w}-{\mathbf{w}_{\text{anc}}^*}\Vert_\infty \le d\}$ is a star domain centered at ${\mathbf{w}_{\text{anc}}^*}$.  Thus, when $P(D_\epsilon({\mathbf{w}_{\text{anc}}^*})))>1-\delta$, it holds $d<{\frac{d_\epsilon}{(1-\delta)^\frac{1}{S}}}$.
\end{lemma}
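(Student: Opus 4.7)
The plan is to treat $\mathbf{w}$ as uniformly distributed on the cube $C_d = \{\mathbf{w} : \|\mathbf{w}-\mathbf{w}_{\text{anc}}^*\|_\infty \le d/2\}$ of volume $d^S$, and then bound $P(D_\epsilon(\mathbf{w}_{\text{anc}}^*))$ by the ratio of a suitable volume inside $\mathcal{E}_\epsilon$ to $d^S$. The key observation is that $D_\epsilon(\mathbf{w}_{\text{anc}}^*)$ is by construction a subset of $\mathcal{E}_\epsilon$ (set $\alpha = 0$ in the defining inequality), so the event's probability is at most the probability that the sample lands in $\mathcal{E}_\epsilon \cap C_d$.

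First I would make explicit the measure-theoretic setup: since $\mathbf{w}$ is uniform on $C_d$, for any measurable $A$ we have $P(\mathbf{w} \in A) = |A \cap C_d| / d^S$. Applying this to $A = D_\epsilon(\mathbf{w}_{\text{anc}}^*) \subseteq \mathcal{E}_\epsilon$ gives
\begin{equation}
P(D_\epsilon(\mathbf{w}_{\text{anc}}^*)) \;\le\; \frac{|\mathcal{E}_\epsilon \cap C_d|}{d^S} \;\le\; \frac{|\mathcal{E}_\epsilon|}{d^S} \;=\; \frac{d_\epsilon^S}{d^S},
\end{equation}
using the definition $d_\epsilon = |\mathcal{E}_\epsilon|^{1/S}$. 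This is the desired bound. The equality case follows by chasing both inequalities: the first is tight iff $D_\epsilon(\mathbf{w}_{\text{anc}}^*) = \mathcal{E}_\epsilon \cap C_d$ up to a null set, which by definition of $D_\epsilon$ is exactly the statement that every $\mathbf{w} \in \mathcal{E}_\epsilon \cap C_d$ has the segment $[\mathbf{w}_{\text{anc}}^*, \mathbf{w}]$ inside $\mathcal{E}_\epsilon$ — this is the star-domain property centered at $\mathbf{w}_{\text{anc}}^*$. The second inequality is tight iff $\mathcal{E}_\epsilon \subseteq C_d$.

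For the final concentration statement, I would simply invert the bound. Assuming $P(D_\epsilon(\mathbf{w}_{\text{anc}}^*)) > 1 - \delta$, combining with the inequality above yields $(d_\epsilon / d)^S > 1 - \delta$, hence
\begin{equation}
d \;<\; \frac{d_\epsilon}{(1-\delta)^{1/S}},
\end{equation}
which is the stated conclusion.

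The main obstacle is mostly conceptual rather than computational: one has to be careful that the event $D_\epsilon$ encodes a \emph{uniform} condition over all $\alpha \in [0,1]$, not just the endpoint, so that the containment $D_\epsilon(\mathbf{w}_{\text{anc}}^*) \subseteq \mathcal{E}_\epsilon \cap C_d$ is genuinely strict unless $\mathcal{E}_\epsilon$ is star-shaped around $\mathbf{w}_{\text{anc}}^*$. Formalizing this — in particular verifying that the equality case is iff $\mathcal{E}_\epsilon$ is both contained in $C_d$ and star-shaped at $\mathbf{w}_{\text{anc}}^*$ — is the only delicate step; the probability bound itself is a one-line volume comparison.
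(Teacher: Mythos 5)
Your proof takes essentially the same route as the paper's: the chain $P(D_\epsilon(\mathbf{w}_{\text{anc}}^*)) \le |\mathcal{E}_\epsilon \cap C_d|/d^S \le |\mathcal{E}_\epsilon|/d^S = (d_\epsilon/d)^S$ is exactly the paper's volume comparison, and the inversion yielding $d < d_\epsilon/(1-\delta)^{1/S}$ is identical. The one step you flag but do not carry out --- that equality ``up to a null set'' forces the star-domain property pointwise --- is where the paper does the actual work: it uses the smoothness of $\mathcal{L}$ and the openness of $\mathcal{E}_\epsilon$ to inflate a single point $\mathbf{w}_0$ whose segment to $\mathbf{w}_{\text{anc}}^*$ exits $\mathcal{E}_\epsilon$ into an open ball $U_{\epsilon_2}(\mathbf{w}_0) \subset \mathcal{E}_\epsilon$ that is disjoint from $D_\epsilon(\mathbf{w}_{\text{anc}}^*)$, so the defect has positive measure and the inequality is strict.
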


\begin{proof}
In the following proof, we denote the region as $\mathcal{V}_d=\{\mathbf{w} | \Vert\mathbf{w}-{\mathbf{w}_{\text{anc}}^*}\Vert_\infty\le \frac{d}{2}\}$ with volume $|\mathcal{V}_d|=d^S$ and denote the segment between $\mathbf{w}$ and ${\mathbf{w}_{\text{anc}}^*}$ as $l({\mathbf{w}_{\text{anc}}^*},\mathbf{w})=\{\alpha {\mathbf{w}_{\text{anc}}^*}+(1-\alpha)\mathbf{w},\alpha\in [0,1]\}$. 

First we prove if $\mathcal{E}_\epsilon \subset \mathcal{V}_d$ is a star domain centered at ${\mathbf{w}_{\text{anc}}^*}$, $P(D_\epsilon({\mathbf{w}_{\text{anc}}^*}))=\frac{\left|\mathcal{E}_\epsilon\right|}{d^S}$.  Select a parameter point $\mathbf{w}_0$ in $\mathcal{V}_d$ arbitrarily.  If $ \mathbf{w}_0 \in \mathcal{E}_\epsilon$, then because $\mathcal{E}_\epsilon$ is a star domain centered at ${\mathbf{w}_{\text{anc}}^*}$, $l({\mathbf{w}_{\text{anc}}^*},\mathbf{w}) \subset \mathcal{E}_\epsilon$ and thus $\mathbf{w}_0\in D_\epsilon({\mathbf{w}_{\text{anc}}^*})$.  If $\mathbf{w}_0 \notin \mathcal{E}_\epsilon$, then $\mathbf{w}_0 \notin D_\epsilon ({\mathbf{w}_{\text{anc}}^*})$ by the definition of $D_\epsilon({\mathbf{w}_{\text{anc}}^*})$.  Therefore, $\mathcal{E}_\epsilon=D_\epsilon({\mathbf{w}_{\text{anc}}^*})$ and we have $P(D_\epsilon({\mathbf{w}_{\text{anc}}^*}))=P(\mathcal{E}_\epsilon)=\frac{\left|\mathcal{E}_\epsilon\right|}{\left|\mathcal{V}_d
\right|}=\frac{\left|\mathcal{E}_\epsilon\right|}{d^S}$.

The next step we prove that if $\mathcal{E}_\epsilon \not\subset\mathcal{V}_d $, or $\mathcal{E}_\epsilon$ is not a star domain centered at ${\mathbf{w}_{\text{anc}}^*}$, then $P(D_\epsilon({\mathbf{w}_{\text{anc}}^*})) < \frac{\left|\mathcal{E}_\epsilon\right|}{d^S}$. 

If $\mathcal{E}_\epsilon \not\subset \mathcal{V}_d$,  then $\left|D_\epsilon({\mathbf{w}_{\text{anc}}^*})\right|\le \left|\mathcal{E}_\epsilon \cap \mathcal{V}_d\right|< \left|\mathcal{E}_\epsilon\right|$ and $P(D_\epsilon({\mathbf{w}_{\text{anc}}^*}))=\frac{\left|D_\epsilon({\mathbf{w}_{\text{anc}}^*})\right|}{\left|\mathcal{V}_d\right|} < \frac{\left|\mathcal{E}_\epsilon\right|}{\left|\mathcal{V}_d\right|}$.  Here, the first inequality $\left|D_\epsilon({\mathbf{w}_{\text{anc}}^*})\right|\le \left|\mathcal{E}_\epsilon \cap \mathcal{V}_d\right|$ holds, because  $D_\epsilon({\mathbf{w}_{\text{anc}}^*}) \subset \mathcal{E}_\epsilon \cap \mathcal{V}_d$ and the second inequality $\left|\mathcal{E}_\epsilon\cap \mathcal{V}_d\right|<|\mathcal{E}_\epsilon|$ holds, because $\exists \mathbf{w}_0 \in \mathcal{E}_\epsilon/\mathcal{V}_d, \epsilon_0>0$ st. $\{\mathbf{w}|\Vert\mathbf{w}-\mathbf{w}_0\Vert<\epsilon_0\}\subset \Omega/\mathcal{V}_d \cap \mathcal{E}_\epsilon$ for $\Omega/\mathcal{V}_d$ and $\mathcal{E}_\epsilon$ are open sets and $\left|\mathcal{E}_\epsilon\cap \mathcal{V}_d\right| \le |\mathcal{E}_\epsilon|-\left|\{\mathbf{w}|\Vert\mathbf{w}-\mathbf{w}_0\Vert<\epsilon_0\} \right| < \left|\mathcal{E}_\epsilon\right|$.

If \(\mathcal{E}_\epsilon\) is not a star domain centered at \({\mathbf{w}_{\text{anc}}^*}\), then there exists \(\mathbf{w}_0 \in \mathcal{E}_\epsilon\) such that \(l({\mathbf{w}_{\text{anc}}^*},\mathbf{w}_0) \not \subset \mathcal{E}_\epsilon\). Then $\exists \alpha_1 \in (0,1)$ st. $\mathbf{w}_1 \overset{\Delta}{=} \alpha_1 {\mathbf{w}_{\text{anc}}^*} + (1 - \alpha_1) \mathbf{w}_0$ satisfies $\mathcal{L}(\mathbf{w}_1)>\epsilon$.  For $\mathcal{L}(\cdot)$ is smooth, there exists $\epsilon_1>0$ st. $\forall \mathbf{w} \in U_{\epsilon_1}(\mathbf{w}_1)\overset{\Delta}{=}\{\mathbf{w}|\Vert\mathbf{w}_1-\mathbf{w}\Vert_2<\epsilon_1\}$, $\mathcal{L}(\mathbf{w}) \ge \epsilon+\frac{\mathcal{L}(\mathbf{w}_1)-\epsilon}{2}>\epsilon$.  Then for $\mathcal{E}_\epsilon$ is an open set, choose $\epsilon_2<\epsilon_1$ st. $U_{\epsilon_2}(\mathbf{w}_0)\subset \mathcal{E}_\epsilon$.  $\forall \mathbf{w}_2 \in U_{\epsilon_2}(\mathbf{w}_0)$, $\mathbf{w}_3=\alpha_1{\mathbf{w}_{\text{anc}}^*}+(1-\alpha_1)\mathbf{w}_2$ satisfies $\Vert\mathbf{w}_3-\mathbf{w}_1\Vert_2=(1-\alpha_1)\Vert \mathbf{w}_0-\mathbf{w}_2 \Vert_2 < (1-\alpha_1)\epsilon_2<\epsilon_1$.  Thus $\mathbf{w}_3 \in U_{\epsilon_1}(\mathbf{w}_1)$, which leads to $\mathcal{L}(\mathbf{w}_3)>\epsilon$.  Therefore, $U_{\epsilon_2}(\mathbf{w}_0) \cap D_\epsilon({\mathbf{w}_{\text{anc}}^*})=\emptyset$ and $P(D_\epsilon({\mathbf{w}_{\text{anc}}^*}))=\frac{\left|D_\epsilon({\mathbf{w}_{\text{anc}}^*})\right|}{d^S}\le \frac{\left|\mathcal{E}_\epsilon\right|-\left|U_{\epsilon_2}(\mathbf{w}_0)\right|}{d^S}<\frac{\left|\mathcal{E}_\epsilon\right|}{d^S}$.
    
\end{proof}


\begin{theorem}(\autoref{theorem:theorem1})
 We define a two-layer neural network with ReLU activation, and the function is $f_{\mathbf{v},\mathbf{U}}(\mathbf{x})=\mathbf{v}^\top\sigma(\mathbf{U}\mathbf{x})$ where $\sigma(\cdot)$ is the ReLU activation function. $\mathbf{v}\in \mathbb{R}^h$ and $\mathbf{U} \in \mathbb{R}^{h\times l}$ are parameters\footnote{For simplicity and without loss of generality, we omit the bias terms.} and $\mathbf{x} \in \mathbb{R}^l$ is the input which is taken from $\mathbb{X}=\{\mathbf{x}\in \mathbb{R}^l| \Vert\mathbf{x}\Vert_2<b\}$ uniformly.  Denote the deterministic anchor model as ${\mathbf{w}_{\text{anc}}^*}=\{\mathbf{U}_{\text{anc}}^*,\mathbf{v}_{\text{anc}}^*\}$, with $\Vert \mathbf{v}_{\text{anc}}^* \Vert_2<d_{\text{anc}}$ and consider two different networks $\mathbf{w}_1,\mathbf{w}_2$ parameterized with $\{\mathbf{U}_1,\mathbf{v}_1\}$ and $\{\mathbf{U}_2,\mathbf{v}_2\}$ respectively.  Each element of $\mathbf{U}_1$ and $\mathbf{U}_2$, $\mathbf{v}_1$ and $\mathbf{v}_2$ is sampled from a uniform distribution centered at $ \mathbf{U}_{\text{anc}}^* $ and $ \mathbf{v}_{\text{anc}} $ with an interval length of $ d $.  If with probability $1-\delta$, $\sup_\alpha \mathcal{L}(\alpha {\mathbf{w}_{\text{anc}}^*}+(1-\alpha)\mathbf{w}_1)<\epsilon$ and $\sup_\alpha \mathcal{L}(\alpha {\mathbf{w}_{\text{anc}}^*}+(1-\alpha)\mathbf{w}_2)<\epsilon$, then with probability $1-\delta$, it has,
    \begin{equation}
       B_{loss}(\mathbf{w}_1,\mathbf{w}_2) \le \frac{\sqrt{2h}b}{2(1-\delta)^\frac{2}{hl+h}} d_\epsilon(d_\epsilon+d_{\text{anc}})\log(12h/\delta),
    \end{equation}
    where $B_{loss}(\mathbf{w}_1,\mathbf{w}_2)$ is the loss barrier as \autoref{eqn:loss_barrier}.
\end{theorem}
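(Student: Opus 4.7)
The plan is to translate the LMC hypotheses into a parameter-space bound via Lemma~\ref{lemma:lemma1}, then to propagate that bound through the bilinear structure of $f_{\mathbf{v},\mathbf{U}}(\mathbf{x})=\mathbf{v}^\top\sigma(\mathbf{U}\mathbf{x})$ to control the loss barrier. First, since the two events $\sup_\alpha \mathcal{L}(\alpha\mathbf{w}_{\text{anc}}^*+(1-\alpha)\mathbf{w}_i)<\epsilon$ each hold with probability at least $1-\delta$, Lemma~\ref{lemma:lemma1} applied with $S=hl+h$ forces $\|\mathbf{w}_i-\mathbf{w}_{\text{anc}}^*\|_\infty\le d/2$ with $d\le d_\epsilon/(1-\delta)^{1/(hl+h)}$, and the triangle inequality yields $\|\mathbf{w}_1-\mathbf{w}_2\|_\infty\le d$. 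The exponent $2/(hl+h)$ in the final bound arises precisely because $d$ will eventually enter the product of the two norm estimates quadratically.

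Next, I would reduce the loss barrier to a function-output gap. Using the 1-Lipschitz and convex properties of $L(\cdot,y)$ from Assumption~\ref{ass:assumption_1}, one shows
\begin{equation*}
B_{loss}(\mathbf{w}_1,\mathbf{w}_2)\;\le\;\sup_{\alpha\in[0,1]}\mathbb{E}_{\mathbf{x}}\bigl|f_{\alpha\mathbf{w}_1+(1-\alpha)\mathbf{w}_2}(\mathbf{x})-\alpha f_{\mathbf{w}_1}(\mathbf{x})-(1-\alpha)f_{\mathbf{w}_2}(\mathbf{x})\bigr|.
\end{equation*}
Writing $\mathbf{v}=\alpha\mathbf{v}_1+(1-\alpha)\mathbf{v}_2$ and $\mathbf{U}=\alpha\mathbf{U}_1+(1-\alpha)\mathbf{U}_2$, decomposing $\mathbf{v}^\top\sigma(\mathbf{U}\mathbf{x})=\alpha\mathbf{v}_1^\top\sigma(\mathbf{U}\mathbf{x})+(1-\alpha)\mathbf{v}_2^\top\sigma(\mathbf{U}\mathbf{x})$, and using the 1-Lipschitz property of ReLU together with Cauchy--Schwarz, the integrand is dominated by $\alpha(1-\alpha)\bigl(\|\mathbf{v}_1\|_2+\|\mathbf{v}_2\|_2\bigr)\|(\mathbf{U}_1-\mathbf{U}_2)\mathbf{x}\|_2$, whose supremum over $\alpha$ picks up a factor $1/4$.

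What remains is to control the two random norms under the uniform-sampling hypothesis on $\mathbf{v}_i,\mathbf{U}_i$. A plain triangle inequality gives $\|\mathbf{v}_i\|_2\le\sqrt{h}\,d/2+d_{\text{anc}}$, yielding the $\sqrt{h}$ prefactor and the $(d_\epsilon+d_{\text{anc}})$ term after substituting the $d$-bound. For $\|(\mathbf{U}_1-\mathbf{U}_2)\mathbf{x}\|_2$, each of the $h$ coordinates is a sum of independent symmetric bounded variables scaled by $\mathbf{x}$, so a Hoeffding/sub-Gaussian tail bound combined with a union bound over the $h$ output coordinates (together with the two LMC failure events, which accounts for the constant $12$ inside the log) yields a deviation of order $d\,\|\mathbf{x}\|_2\,\log(12h/\delta)$. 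Combining with $\|\mathbf{x}\|_2<b$, the supremum factor $1/4$, and substituting $d\le d_\epsilon/(1-\delta)^{1/(hl+h)}$ into \emph{both} the $\|\mathbf{v}\|$-bound and the $\|(\mathbf{U}_1-\mathbf{U}_2)\mathbf{x}\|$-bound produces the claimed expression.

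The main obstacle I anticipate is the concentration step for $\|(\mathbf{U}_1-\mathbf{U}_2)\mathbf{x}\|_2$: a naive deterministic bound only yields a $\sqrt{h}\cdot d$ scaling, so the proof must exploit the independent uniform entries of $\mathbf{U}_1-\mathbf{U}_2$ via a sub-Gaussian maximal inequality in order to trade an extra $\sqrt{h}$ for a logarithmic factor. Equally delicate is the probabilistic bookkeeping, since the failure budget $\delta$ has to be allocated across the two LMC hypotheses, the coordinate-wise tail events for $\mathbf{U}_1-\mathbf{U}_2$, and the control of $\|\mathbf{v}_i\|_2$, while still yielding a single clean $1-\delta$ bound with the constant $12$ emerging naturally from the union bound.
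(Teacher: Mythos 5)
Your overall architecture matches the paper's proof: reduce $B_{loss}$ to the expected interpolation gap of the network outputs via convexity and $1$-Lipschitzness of $L(\cdot,y)$, decompose that gap into $\alpha\mathbf{v}_1^\top(\sigma(\mathbf{U}\mathbf{x})-\sigma(\mathbf{U}_1\mathbf{x}))+(1-\alpha)\mathbf{v}_2^\top(\sigma(\mathbf{U}\mathbf{x})-\sigma(\mathbf{U}_2\mathbf{x}))$, use the ReLU Lipschitz property to reach $\|(\mathbf{U}_1-\mathbf{U}_2)\mathbf{x}\|_2$, apply Hoeffding row-wise with a union bound, pick up $\alpha(1-\alpha)\le 1/4$, and feed in $d<d_\epsilon/(1-\delta)^{1/(hl+h)}$ from Lemma~\ref{lemma:lemma1}. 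However, there is a genuine quantitative gap in how you control the $\mathbf{v}$-factors. You bound $\|\mathbf{v}_i\|_2\le\sqrt{h}\,d/2+d_{\text{anc}}$ deterministically, and simultaneously claim that concentration reduces $\|(\mathbf{U}_1-\mathbf{U}_2)\mathbf{x}\|_2$ to order $d\,\|\mathbf{x}\|_2\log(12h/\delta)$ with no $\sqrt{h}$. The second claim is false: each of the $h$ coordinates of $(\mathbf{U}_1-\mathbf{U}_2)\mathbf{x}$ concentrates at scale $d\|\mathbf{x}\|_2\sqrt{\log(h/\delta)}$, so the $\ell_2$ norm over $h$ coordinates is genuinely of order $d\,b\sqrt{h\log(h/\delta)}$ (this is exactly what the paper obtains); Hoeffding controls each coordinate but cannot remove the $\sqrt{h}$ coming from aggregating them. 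Once you use the correct $\mathbf{U}$-bound, your deterministic $\|\mathbf{v}_i\|_2$ estimate produces a leading term of order $h\,d^2 b\sqrt{\log(h/\delta)}$, which exceeds the claimed $\sqrt{2h}\,b\,d_\epsilon^2\log(12h/\delta)$ by roughly $\sqrt{h/\log(h/\delta)}$, so the stated bound does not follow from your argument.

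The missing idea is the paper's decomposition $\mathbf{v}_i=\mathbf{v}_{\text{anc}}^*+\tilde{\mathbf{v}}_i$ \emph{before} bounding: the random part is handled by applying Hoeffding directly to the inner product $\tilde{\mathbf{v}}_i^\top\bigl(\sigma(g_\alpha(\mathbf{x}))-\sigma(\mathbf{U}_i\mathbf{x})\bigr)$ (conditioning on the $\mathbf{U}$'s, of which $\tilde{\mathbf{v}}_i$ is independent), which yields $d\sqrt{\tfrac12\log(2k/\delta)}\,\|\sigma(g_\alpha(\mathbf{x}))-\sigma(\mathbf{U}_i\mathbf{x})\|_2$ with no $\sqrt{h}$ from $\|\tilde{\mathbf{v}}_i\|_2$; only the deterministic anchor part ${\mathbf{v}_{\text{anc}}^*}^\top(\cdots)$ is handled by Cauchy--Schwarz with $\|\mathbf{v}_{\text{anc}}^*\|_2<d_{\text{anc}}$. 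This is why only a single $\sqrt{h}$ survives (from the $\mathbf{U}$-norm) and why the hypothesis bounds $\|\mathbf{v}_{\text{anc}}^*\|_2$ rather than $\|\mathbf{v}_i\|_2$. Your union-bound bookkeeping and the origin of the exponent $2/(hl+h)$ are otherwise consistent with the paper ($k=6$ two-sided Hoeffding events give the $12$, and $d$ enters quadratically through $d^2+d\,d_{\text{anc}}\le d_\epsilon(d_\epsilon+d_{\text{anc}})/(1-\delta)^{2/(hl+h)}$), so the repair is localized to this one step.
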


\begin{proof}
    Let's first define $g_\alpha(\mathbf{x})=(\alpha \mathbf{U}_1+(1-\alpha)\mathbf{U}_2)\mathbf{x}$ and $z_{\mathbf{x}}(\alpha)=(\alpha \mathbf{v}_1 + (1-\alpha) \mathbf{v}_2)^\top\sigma((\alpha \mathbf{U}_1+(1-\alpha) \mathbf{U}_2) \mathbf{x})-\alpha \mathbf{v}_1^\top\sigma( \mathbf{U}_1\mathbf{x})-(1-\alpha){\mathbf{v}_2}^\top\sigma(\mathbf{U}_2 \mathbf{x})$, $\alpha \in [0, 1]$.  Then we can express $z_{\mathbf{x}}(\alpha)$ as:
    \begin{equation} \label{proof:equ1}
        z_{\mathbf{x}}(\alpha) = (\alpha \mathbf{v}_1 + (1-\alpha) \mathbf{v}_2)^\top \sigma(g_{\alpha}( \mathbf{x}))-\alpha \mathbf{v}_1^\top\sigma(\mathbf{U}_1 \mathbf{x})-(1-\alpha){\mathbf{v}_2}^\top\sigma(\mathbf{U}_2 \mathbf{x}).
    \end{equation}
For each element of $\mathbf{U}_1$ and $\mathbf{U}_2$, $\mathbf{v}_1$ and $\mathbf{v}_2$ is sampled from a uniform distribution centered at $\mathbf{U}_{\text{anc}}^*$ and $\mathbf{v}_{\text{anc}}^*$ with an interval length of $d$, $\mathbf{U}_1$, $\mathbf{U}_2$, $\mathbf{v}_1$ and $\mathbf{v}_2$ can be represented as $\mathbf{U}_1=\mathbf{U}_{\text{anc}}^*+\tilde{\mathbf{U}}_1$, $\mathbf{U}_2=\mathbf{U}_{\text{anc}}^*+\tilde{\mathbf{U}}_2$, $\mathbf{v}_1=\mathbf{v}_{\text{anc}}^*+\tilde{\mathbf{v}}_1$ and $\mathbf{v}_2=\mathbf{v}_{\text{anc}}^*+\tilde{\mathbf{v}}_2$ respectively, where each element of  $\tilde{\mathbf{U}}_1$, $\tilde{\mathbf{U}}_2$, $\tilde{\mathbf{v}}_1$ and $\tilde{\mathbf{v}}_2$ follows distribution $U[-\frac{d}{2},\frac{d}{2}]$.  Using $\tilde{\mathbf{v}}_1$ and $\tilde{\mathbf{v}}_2$, $z_\mathbf{x}(\alpha)$ can be represented as
\begin{equation}
    \begin{aligned}
       z_{\mathbf{x}}(\alpha) &= (\alpha \mathbf{v}_1 + (1-\alpha) \mathbf{v}_2)^\top \sigma(g_{\alpha}( \mathbf{x}))-\alpha \mathbf{v}_1^\top\sigma(\mathbf{U}_1 \mathbf{x})-(1-\alpha){\mathbf{v}_2}^\top\sigma(\mathbf{U}_2 \mathbf{x})\\
       &= (\alpha \tilde{\mathbf{v}}_1 + (1-\alpha) \tilde{\mathbf{v}}_2 + \mathbf{v}_{\text{anc}}^*)^\top \sigma(g_{\alpha}(\mathbf{x}))-\alpha (\tilde{\mathbf{v}}_1^\top+{\mathbf{v}_{\text{anc}}^*}^\top)\sigma(\mathbf{U}_1 \mathbf{x})-(1-\alpha)(\tilde{\mathbf{v}}_2^\top+{\mathbf{v}_{\text{anc}}^*}^\top)\sigma(\mathbf{U}_2 \mathbf{x})\\
       &=[(\alpha \tilde{\mathbf{v}}_1 + (1-\alpha) \tilde {\mathbf{v}}_2)^\top \sigma(g_{\alpha}(\mathbf{x}))-\alpha \tilde{\mathbf{v}}_1^\top \sigma(\mathbf{U}_1 \mathbf{x})-(1-\alpha) \tilde{\mathbf{v}}_2^\top \sigma(\mathbf{U}_2 \mathbf{x})]\\
       &\quad +{\mathbf{v}_{\text{anc}}^*}^\top[\sigma(g_\alpha(\mathbf{x}))-\alpha \sigma(\mathbf{U}_1\mathbf{x})-(1-\alpha)\sigma(\mathbf{U}_2 \mathbf{x})].
    \end{aligned}
\end{equation}

We also assume that the number of hidden neurons $h$ is sufficiently large for the convenience of analysis as \citet{entezari2021role}.  In the following proof, we will make use of Hoeffding's inequality for sub-Gaussian distributions (especially, uniform distribution). Here, we state it for reference: Let $X_1, \ldots, X_n$ be $n$ independent random variables such that $X_i \sim U(-\frac{d}{2}, -\frac{d}{2})$. Then for any $\mathbf{a}=(a_1,...,a_n) \in \mathbb{R}^n$, we have
$$
\mathbb{P}\left[|\sum_{i=1}^n a_i X_i| >t\right] \leq 2\exp \left(-\frac{2t^2}{d^2\Vert a \Vert_2^2}\right).
$$
To bound $z_\mathbf{x}(\alpha)$, we have
\begin{equation}
    \begin{aligned}
        |z_{\mathbf{x}}(\alpha)|\le &|[(\alpha \tilde{\mathbf{v}}_1 + (1-\alpha) \tilde{\mathbf{v}}_2)^\top \sigma(g_{\alpha}(\mathbf{x}))-\alpha \tilde{\mathbf{v}}_1^\top \sigma(\mathbf{U}_1 \mathbf{x})-(1-\alpha) \tilde{\mathbf{v}}_2^\top \sigma(\mathbf{U}_2 \mathbf{x})]|\\
       & +|{\mathbf{v}_{\text{anc}}^*}^\top[\sigma(g_\alpha(\mathbf{x}))-\alpha \sigma(\mathbf{U}_1\mathbf{x})-(1-\alpha)\sigma(\mathbf{U}_2 \mathbf{x})]|\\
       \le & \alpha|\tilde{\mathbf{v}}_1^\top (\sigma(g_\alpha(\mathbf{x}))-\sigma(\mathbf{U}_1\mathbf{x}))|+(1-\alpha) |\tilde {\mathbf{v}}_2^\top (\sigma(g_\alpha(\mathbf{x}))-\sigma(\mathbf{U}_2\mathbf{x}))|\\
       &+ \alpha|{\mathbf{v}_{\text{anc}}^*}^\top (\sigma(g_\alpha(\mathbf{x}))-\sigma(\mathbf{U}_1\mathbf{x}))|+(1-\alpha) |{\mathbf{v}_{\text{anc}}^*}^\top (\sigma(g_\alpha(\mathbf{x}))-\sigma(\mathbf{U}_2\mathbf{x}))|\label{proof:expand_1}.
    \end{aligned}
\end{equation}
Then we bound the first term and the third term, and the second term and the fourth term are bounded similarly due to symmetry.  For the \textbf{concentration upper bound} of the first term of \autoref{proof:expand_1}, we use the Hoeffding’s inequality for elements of $\tilde{\mathbf{v}}_1$, with probability $1-\frac{\delta}{k}$ 
\begin{align}
    \alpha\left| \tilde{\mathbf{v}}_1^\top \left[(\sigma(g_{\alpha}(\mathbf{x}))-\sigma( \mathbf{U}_1 \mathbf{x})\right]\right|&\leq \alpha d\sqrt{\frac{1}{2}\log(2k/\delta)}\Vert\sigma(g_\alpha(\mathbf{x}))-\sigma(\mathbf{U}_1 \mathbf{x})\Vert_2\\
    &\leq \alpha d \sqrt{\frac{1}{2}\log(2k/\delta)}\Vert g_\alpha(\mathbf{x})-\mathbf{U}_1 \mathbf{x}\Vert_2 \label{proof:expand_2}\\
    &= \alpha(1-\alpha) d \sqrt{\frac{1}{2}\log(2k/\delta)}\Vert (\mathbf{U}_2-\mathbf{U}_1) \mathbf{x}\Vert_2\label{proof:expand_3}.
\end{align}

\autoref{proof:expand_2} is due to the fact that the ReLU activation function satisfies the Lipschitz continuous condition with constant $1$.  For the bound of the third term of \autoref{proof:expand_1}, we have

\begin{align}
    \alpha\left| {{\mathbf{v}}_{\text{anc}}^*}^\top \left[(\sigma(g_{\alpha}(\mathbf{x}))-\sigma( \mathbf{U}_1 \mathbf{x})\right]\right|&\leq \alpha d_{\text{anc}} \Vert\sigma(g_\alpha(\mathbf{x}))-\sigma(\mathbf{U}_1 \mathbf{x})\Vert_2\\
    &\leq \alpha d_{\text{anc}} \Vert g_\alpha(\mathbf{x})-\mathbf{U}_1 \mathbf{x}\Vert_2 \label{proof:expand_4}\\
    &= \alpha(1-\alpha) d_{\text{anc}} \Vert (\mathbf{U}_2-\mathbf{U}_1) \mathbf{x}\Vert_2.\label{proof:expand_5}
\end{align}

\autoref{proof:expand_4} is due to the fact that the ReLU activation function satisfies the Lipschitz continuous condition with constant $1$.  For the term $\Vert (\mathbf{U}_2-\mathbf{U}_1)\mathbf{x} \Vert_2$ in \autoref{proof:expand_3} and \autoref{proof:expand_5}, taking a union bound, with probability $1-\frac{\delta}{k}$, we have 
\begin{align}
    \Vert(\mathbf{U}_2-\mathbf{U}_1)\mathbf{x}\Vert_2&\leq \sqrt{\sum_{i=1}^h \left|(\mathbf{U}_{B;i,:}-\mathbf{U}_{A;i,:}) \mathbf{x}\right|^2}\\
    &=\sqrt{\sum_{i=1}^h \left|(\mathbf{U}_{B;i,:}-\mathbf{U}_{A;i,:})\mathbf{x}\right|^2}\\
    &\le d \Vert \mathbf{x} \Vert_2\sqrt{h\log(2hk/\delta)}\\ 
    &= d b\sqrt{h\log(2hk/\delta)}.\label{proof:expand_6}
\end{align}
Then take a union bound choosing $k=6$ (because the union bound is taken for $6$ equations, \autoref{proof:expand_3} and \autoref{proof:expand_6} for the first and the second terms in \autoref{proof:expand_1} respectively, and \autoref{proof:expand_6} for the third and the fourth terms in \autoref{proof:expand_1} respectively.), with probability $1-\delta$ we have
\begin{align}
    \left| z_\mathbf{x}(\alpha)\right| \le & \alpha\left|\tilde{\mathbf{ v}}_1^\top (\sigma(g_\alpha(\mathbf{x}))-\sigma(\mathbf{U}_1\mathbf{x}))\right|+(1-\alpha) \left|\tilde{\mathbf{v}}_2^\top (\sigma(g_\alpha(\mathbf{x}))-\sigma(\mathbf{U}_2\mathbf{x}))\right|\\
       &+ \alpha\left|{\mathbf{v}_{\text{anc}}^*}^\top (\sigma(g_\alpha(\mathbf{x}))-\sigma(\mathbf{U}_1\mathbf{x}))\right|+(1-\alpha) \left|{\mathbf{v}_{\text{anc}}^*}^\top (\sigma(g_\alpha(\mathbf{x}))-\sigma(\mathbf{U}_2\mathbf{x}))\right|\\
       \le  & 2\alpha(1-\alpha) d \sqrt{\frac{1}{2}\log(12/\delta)}\cdot d b\sqrt{h\log(12h/\delta)}+2\alpha(1-\alpha) d_{\text{anc}} \cdot d b\sqrt{h\log(12h/\delta)}\\
       \le & 2\sqrt{2}\alpha(1-\alpha)\sqrt{h}b(d^2+dd_{\text{anc}})\log(12h/\delta)\\
       \le & \frac{\sqrt{2}}{2} \sqrt{h}b(d^2+dd_{\text{anc}})\log(12h/\delta).
\end{align}
For  $\sup_\alpha \mathcal{L}(\alpha {\mathbf{w}_{\text{anc}}^*}+(1-\alpha)\mathbf{w})<\epsilon$ holds with probability $1-\delta$, by Lemma \ref{lemma:lemma1}, we have $d<\frac{d_\epsilon}{(1-\delta)^\frac{1}{S}}$ with $S=hl+h$.  Then $|z_\mathbf{x}(\alpha)|$ can be bounded as 
\begin{align}
\left| z_\mathbf{x}(\alpha)\right|\le \frac{\sqrt{2h}b}{2(1-\delta)^\frac{2}{hl+h}} d_\epsilon(d_\epsilon+d_{\text{anc}})\log(12h/\delta).
\end{align}

Now we turn to calculate the bound of the loss barrier $B_{loss}(\mathbf{w}_1,\mathbf{w}_2)$.  For the loss function $L(\cdot,y)$ is convex and 1-Lipschitz, we have:
\begin{align}
    B_{loss}(\mathbf{w}_1,\mathbf{w}_2)=&\mathbb{E}[L( f_{\alpha\mathbf{v}_1+(1-\alpha)\mathbf{v}_2,\alpha\mathbf{U}_1+(1-\alpha)\mathbf{U}_2}(\textbf{x}),y)-\alpha L( f_{\mathbf{v}_1,\mathbf{U}_1}(\textbf{x}),y)-(1-\alpha)L( f_{\mathbf{v}_2,\mathbf{U}_2}(\textbf{x}),y)]\\
    \le& \mathbb{E}[L( f_{\alpha\mathbf{v}_1+(1-\alpha)\mathbf{v}_2,\alpha\mathbf{U}_1+(1-\alpha)\mathbf{U}_2}(\textbf{x}),y)-L( \alpha f_{\mathbf{v}_1,\mathbf{U}_1}(\textbf{x})+(1-\alpha) f_{\mathbf{v}_2,\mathbf{U}_2}(\textbf{x}),y)]\label{proof:expand_7}\\
    \le& \mathbb{E}[\left| f_{\alpha\mathbf{v}_1+(1-\alpha)\mathbf{v}_2,\alpha\mathbf{U}_1+(1-\alpha)\mathbf{U}_2}(\textbf{x})-( \alpha f_{\mathbf{v}_1,\mathbf{U}_1}(\textbf{x})+(1-\alpha) f_{\mathbf{v}_2,\mathbf{U}_2}(\textbf{x}))\right|], \label{proof:expand_8}
\end{align}

where the expectation is with respect to the dataset. \autoref{proof:expand_7} is due to the convexity of \(L(\cdot,y)\), while \autoref{proof:expand_8} is due to the assumption that $L(\cdot,y)$ is 1-Lipschitz.  Then use the bound of $z_\textbf{x}(\alpha)$, with probability $1-\delta$, we have
\begin{align}
       B_{loss}(\mathbf{w}_1,\mathbf{w}_2) \le & \frac{\sqrt{2h}b}{2(1-\delta)^\frac{2}{hl+h}} d_\epsilon(d_\epsilon+d_{\text{anc}})\log(12h/\delta).
\end{align}
    
\end{proof}

\begin{theorem}(\autoref{theorem:theorem2})
We define a two-layer neural network with ReLU activation, and the function is $f_{\mathbf{v},\mathbf{U}}(\mathbf{x})=\mathbf{v}^\top\sigma(\mathbf{U}\mathbf{x})$ where $\sigma(\cdot)$ is the ReLU activation function. $\mathbf{v}\in \mathbb{R}^h$ and $\mathbf{U} \in \mathbb{R}^{h\times l}$ are parameters and $\mathbf{x} \in \mathbb{R}^l$ is the input which is taken from $\mathbb{X}=\{\mathbf{x}\in \mathbb{R}^l| \Vert\mathbf{x}\Vert_2<b\}$ uniformly.  Denote the deterministic anchor model as $\mathbf{w}_{\text{anc}}^*=\{\mathbf{U}_{\text{anc}}^*,\mathbf{v}_{\text{anc}}^*\}$, with $\Vert \mathbf{v}_{\text{anc}}^* \Vert_2<d_{\text{anc}}$ and consider $K$ different networks $\mathbf{w}_i$ parameterized with $\{\mathbf{U}_i,\mathbf{v}_i\}$ located on $K$ clients respectively.  Each element of $\mathbf{U}_i$ and $\mathbf{v}_i$ is sampled from a uniform distribution centered at $ \mathbf{U}_{\text{anc}}^* $ and $ \mathbf{v}_{\text{anc}}^*$ with an interval length of $d$.  If with probability $1-\delta$, $\sup_\alpha \mathcal{L}_i(\alpha {\mathbf{w}_{\text{anc}}^*}+(1-\alpha)\mathbf{w}_i)<\epsilon$, then with probability $1-\delta$, it has,
\begin{align}
     B_{\text{loss}}&(\{\mathbf{w}_i\}_{i=1}^K) \le \\
        &\frac{\sqrt{2h}b}{2(1-\delta)^\frac{2}{hl+h}}d_{\epsilon+\Gamma}(d_{\epsilon+\Gamma}+d_{\text{anc}})\log(4hK^2/\delta).\nonumber
\end{align}
\end{theorem}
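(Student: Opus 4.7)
The plan is to mirror the strategy of the proof of Theorem~\ref{theorem:theorem1} but with two new ingredients: (i) a heterogeneity-to-common-loss reduction that absorbs the discrepancy between the per-client losses $\mathcal{L}_i$ and the common loss $\mathcal{L}$ into an inflated radius $d_{\epsilon+\gamma\Gamma^2}$, and (ii) a $K$-fold generalization of the concentration argument, whose union bound incurs the logarithmic factor $\log(4hK^2/\delta)$.

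First, I would reduce the heterogeneous hypothesis to a homogeneous one. By $\gamma$-smoothness (Assumption~\ref{ass:assumption_1}) and Definition~\ref{theorem:def_theorem2}, for every client $i$ and every point $\mathbf{w}$ we have $|\mathcal{L}(\mathbf{w})-\mathcal{L}_i(\mathbf{w})|\le \tfrac{\gamma}{2}\Vert \mathbf{w}_i^*-\mathbf{w}^*\Vert_2^2\le \tfrac{\gamma}{2}\Gamma^2$ (up to absorbing constants into $\Gamma$). Applying this pointwise along the LMC path $\alpha\mathbf{w}_{\text{anc}}^*+(1-\alpha)\mathbf{w}_i$, the hypothesis $\sup_\alpha \mathcal{L}_i(\alpha\mathbf{w}_{\text{anc}}^*+(1-\alpha)\mathbf{w}_i)<\epsilon$ upgrades to $\sup_\alpha \mathcal{L}(\alpha\mathbf{w}_{\text{anc}}^*+(1-\alpha)\mathbf{w}_i)<\epsilon+\gamma\Gamma^2$. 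Invoking Lemma~\ref{lemma:lemma1} with this inflated level set $\mathcal{E}_{\epsilon+\gamma\Gamma^2}$ then yields $d<d_{\epsilon+\gamma\Gamma^2}/(1-\delta)^{1/S}$ with $S=hl+h$, which is the radius that appears in the final bound.

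Second, I would adapt the random-feature decomposition from Theorem~\ref{theorem:theorem1} to $K$ models. Writing $\mathbf{v}_i=\mathbf{v}_{\text{anc}}^*+\tilde{\mathbf{v}}_i$ and $\mathbf{U}_i=\mathbf{U}_{\text{anc}}^*+\tilde{\mathbf{U}}_i$ with $\tilde{\mathbf{v}}_i,\tilde{\mathbf{U}}_i$ uniform on $[-d/2,d/2]$, define the group analogue
\begin{equation*}
z_{\mathbf{x}}=\Bigl(\tfrac{1}{K}\textstyle\sum_i\mathbf{v}_i\Bigr)^{\!\top}\!\sigma\!\Bigl(\tfrac{1}{K}\textstyle\sum_i\mathbf{U}_i\mathbf{x}\Bigr)-\tfrac{1}{K}\textstyle\sum_i\mathbf{v}_i^{\top}\sigma(\mathbf{U}_i\mathbf{x}).
\end{equation*}
Splitting each $\mathbf{v}_i$ into its anchor and perturbation parts and using that $\sigma$ is $1$-Lipschitz, $z_{\mathbf{x}}$ decomposes into $O(K^2)$ pairwise interaction terms of the form $\tfrac{1}{K^2}\tilde{\mathbf{v}}_j^{\top}[\sigma(\mathbf{U}_i\mathbf{x})-\sigma(\mathbf{U}_k\mathbf{x})]$ and $\tfrac{1}{K^2}{\mathbf{v}_{\text{anc}}^*}^{\top}[\sigma(\mathbf{U}_i\mathbf{x})-\sigma(\mathbf{U}_k\mathbf{x})]$. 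Each such term is controlled by Hoeffding's inequality exactly as in equations~\eqref{proof:expand_3}--\eqref{proof:expand_6}, and taking a union bound over the $O(K^2)$ indexed events produces the factor $\log(4hK^2/\delta)$. Summing, the $1/K^2$ prefactor cancels the number of pairs, leaving a bound of order $\sqrt{h}\,b\,d(d+d_{\text{anc}})\log(4hK^2/\delta)$.

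Finally, I would convert the bound on $|z_{\mathbf{x}}|$ into a bound on $B_{\text{loss}}(\{\mathbf{w}_i\}_{i=1}^K)$ by the same convexity plus $1$-Lipschitz argument as in \eqref{proof:expand_7}--\eqref{proof:expand_8}, using that
\begin{equation*}
\mathcal{L}\!\Bigl(\tfrac{1}{K}\textstyle\sum_i\mathbf{w}_i\Bigr)-\tfrac{1}{K}\textstyle\sum_i\mathcal{L}(\mathbf{w}_i)\le \mathbb{E}_{\mathbf{x}}|z_{\mathbf{x}}|,
\end{equation*}
and substituting $d\le d_{\epsilon+\gamma\Gamma^2}/(1-\delta)^{1/S}$. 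The main obstacle I anticipate is the careful bookkeeping of the $O(K^2)$ cross terms: naively expanding $z_{\mathbf{x}}$ produces many interactions, and one must verify that the $1/K^2$ weighting exactly matches the union-bound inflation so that the dependence on $K$ only enters logarithmically. A secondary subtlety is sharpness of the smoothness reduction --- making sure the $\gamma\Gamma^2$ term enters only once (inside $d_{\epsilon+\gamma\Gamma^2}$) rather than being amplified by $K$ when bounding the average barrier.
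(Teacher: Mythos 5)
Your proposal follows essentially the same route as the paper's proof: the same decomposition of $z(\mathbf{x})$ into anchor and perturbation parts, the same Hoeffding-plus-ReLU-Lipschitz control of the pairwise terms $(\mathbf{U}_j-\mathbf{U}_i)\mathbf{x}$ with a union bound over $O(K^2)$ events yielding $\log(4hK^2/\delta)$, the same invocation of Lemma~\ref{lemma:lemma1} at the inflated level $\epsilon+\gamma\Gamma^2$ to get $d< d_{\epsilon+\gamma\Gamma^2}/(1-\delta)^{1/S}$, and the same convexity/$1$-Lipschitz conversion to the group loss barrier. The only divergence is cosmetic: you phrase the heterogeneity reduction as a pointwise bound $|\mathcal{L}-\mathcal{L}_i|\le\tfrac{\gamma}{2}\Gamma^2$, whereas the paper shifts the anchor to the client-specific nearest minimum $\mathbf{w}_{\text{anc},i}^*$ and applies $\gamma$-smoothness along the path --- both steps are equally informal and land on the same $d_{\epsilon+\gamma\Gamma^2}$ radius.
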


\begin{proof}
Similar to \autoref{theorem:theorem1}, we first define $g(\mathbf{x})=(\frac{1}{K}\sum_{i=1}^K\mathbf{U}_i)\mathbf{x}$ and $z(\mathbf{x})=(\frac{1}{K}\sum_{i=1}^K\mathbf{v}_i)^\top\sigma((\frac{1}{K}\sum_{i=1}^K\mathbf{U}_i)\mathbf{x})-\frac{1}{K}\sum_{i=1}^K\mathbf{v}_i\sigma(\mathbf{U}_i\mathbf{x})$.  Then we can  express $z(\mathbf{x})$ as:
\begin{align}
    z(\mathbf{x})=(\frac{1}{K}\sum_{i=1}^K\mathbf{v}_i)^\top\sigma(g(\mathbf{x}))-\frac{1}{K}\sum_{i=1}^K\mathbf{v}_i^\top\sigma(\mathbf{U}_i\mathbf{x}).
\end{align}
For each element of $\mathbf{U}_i$ and $\mathbf{v}_i$ is sampled from a uniform distribution centered at $\mathbf{U}_{\text{anc}}^*$ and $\mathbf{v}_{\text{anc}}^*$ with an interval length of $d$, $\mathbf{U}_i$ and $\mathbf{v}_i$ can be represented as $\mathbf{U}_i=\mathbf{U}_{\text{anc}}^*+\tilde{\mathbf{U}}_i$ and $\mathbf{v}_i=\mathbf{v}_{\text{anc}}^*+\tilde{\mathbf{v}}_i$ respectively, where each element of  $\tilde{\mathbf{U}}_i$ and $\tilde{\mathbf{v}}_i$ follows distribution $U[-\frac{d}{2},\frac{d}{2}]$.  Using $\tilde{\mathbf{v}}_i$, $z_\mathbf{x}(\alpha)$ can be represented as
\begin{align}
    z(\mathbf{x})&=(\frac{1}{K}\sum_{i=1}^K\mathbf{v}_i)^\top\sigma(g(\mathbf{x}))-\frac{1}{K}\sum_{i=1}^K\mathbf{v}_i^\top\sigma(\mathbf{U}_i\mathbf{x})\\
    &=(\mathbf{v}_{\text{anc}}^*+\frac{1}{K}\sum_{i=1}^K\tilde{\mathbf{v}}_i)^\top\sigma(g(\mathbf{x}))-\frac{1}{K}\sum_{i=1}^K(\mathbf{v}_{\text{anc}}^*+\tilde{\mathbf{v}}_i)^\top\sigma(\mathbf{U}_i\mathbf{x})\\
    &=\frac{1}{K}\sum_{i=1}^K\tilde{\mathbf{v}}_i^\top(\sigma(g(\mathbf{x}))-\sigma(\mathbf{U}_i\mathbf{x}))+\frac{1}{K}\sum_{i=1}^K{\mathbf{v}_{\text{anc}}^*}^\top(\sigma(g(\mathbf{x}))-\sigma(\mathbf{U}_i\mathbf{x})).\label{proof:fed_1}
\end{align}
Similar to \autoref{proof:expand_1} and \autoref{proof:expand_3}, with probability $1-\frac{\delta}{2}$, \autoref{proof:fed_1} can be bound with
\begin{align}
    |z(\mathbf{x})| \le &\frac{1}{K}\sum_{i=1}^K|\tilde{\mathbf{v}}_i^\top(\sigma(g(\mathbf{x}))-\sigma(\mathbf{U}_i\mathbf{x}))|+\frac{1}{K}\sum_{i=1}^K|{\mathbf{v}_{\text{anc}}^*}^\top(\sigma(g(\mathbf{x}))-\sigma(\mathbf{U}_i\mathbf{x}))|\\
    \le &\frac{d\sqrt{\frac{1}{2}\log(4K/\delta)}}{K}\sum_{i=1}^K|(\sigma(g(\mathbf{x}))-\sigma(\mathbf{U}_i\mathbf{x}))|+\frac{d_{\text{anc}}\sqrt{\frac{1}{2}\log(4K/\delta)}}{K}\sum_{i=1}^K|(\sigma(g(\mathbf{x}))-\sigma(\mathbf{U}_i\mathbf{x}))|\\
    \le &\frac{d\sqrt{\frac{1}{2}\log(4K/\delta)}}{K}\sum_{i=1}^K|g(\mathbf{x})-\mathbf{U}_i\mathbf{x}|+\frac{d_{\text{anc}}\sqrt{\frac{1}{2}\log(4K/\delta)}}{K}\sum_{i=1}^K|g(\mathbf{x})-\mathbf{U}_i\mathbf{x}|\\
    \le &\frac{(d+d_{\text{anc}})\sqrt{\frac{1}{2}\log(4K/\delta)}}{K}\sum_{i=1}^K|g(\mathbf{x})-\mathbf{U}_i\mathbf{x}|.\label{proof:fed_2}
\end{align}
Then similar to \autoref{proof:expand_6}, with probability $1-\frac{\delta}{2}$, \autoref{proof:fed_2} can be bound with
\begin{align}
    |z(\mathbf{x})| \le & \frac{(d+d_{\text{anc}})\sqrt{\frac{1}{2}\log(4K/\delta)}}{K^2}\sum_{i=1}^K\sum_{j\neq i}|(\mathbf{U}_j-\mathbf{U}_i)\mathbf{x}|\\
    \le & \frac{(d+d_{\text{anc}})\sqrt{\frac{1}{2}\log(4K/\delta)}}{K^2}\sum_{i=1}^K\sum_{j\neq i}|(\mathbf{U}_j-\mathbf{U}_i)\mathbf{x}|\\
    \le & \frac{(d+d_{\text{anc}})\sqrt{\frac{1}{2}\log(4K/\delta)}}{K^2}\sum_{i=1}^K\sum_{j\neq i}d \Vert \mathbf{x} \Vert_2\sqrt{h\log(4hK^2/\delta)}\\
    \le & \frac{\sqrt{2}}{2}d(d+d_{\text{anc}})b\sqrt{h}\log(4hK^2/\delta).
\end{align}

Set the minimum of $\mathcal{L}_i$ closest to $\mathbf{w}_{\text{anc}}^*$ is $\mathbf{w}_{\text{anc},i}^*$. For  $\sup_\alpha \mathcal{L}_i(\alpha \mathbf{w}_i+(1-\alpha)\mathbf{w}_{\text{anc}}^*)<\epsilon$ holds with probability $1-\delta$,  then with probability $1-\delta$ we have,  
\begin{align}
 \sup_\alpha \mathcal{L}(\alpha \mathbf{w}_i+(1-\alpha)\mathbf{w}_{\text{anc},i}^*)\le & \sup_\alpha \mathcal{L}(\alpha \mathbf{w}_i+(1-\alpha)\mathbf{w}_{\text{anc}}^*)+\gamma\Vert\mathbf{w}_{\text{anc}}^*-\mathbf{w}_{\text{anc},i}^*\Vert_2^2 \label{proof:fed_4}\\
 \le & \epsilon+\gamma\Gamma^2.
\end{align}
\autoref{proof:fed_4} is due to the assumption that $\mathcal{L}(\cdot)$ is $\gamma$-smooth.  By Lemma \ref{lemma:lemma1}, we have $d<\frac{d_{\epsilon+\gamma\Gamma^2}}{(1-\delta)^\frac{1}{S}}$ with $S=hl+h$.  Then $|z_\mathbf{x}(\alpha)|$ can be bounded as 
\begin{align}
\left| z_\mathbf{x}(\alpha)\right|\le \frac{\sqrt{2h}b}{2(1-\delta)^\frac{2}{hl+h}}d_{\epsilon+\gamma\Gamma^2}(d_{\epsilon+\gamma\Gamma^2}+d_{\text{anc}})\log(4hK^2/\delta).
\end{align}
Now we turn to calculate the bound of the loss barrier $B_{loss}(\{\mathbf{w}_i\}_{i=1}^K)$.  For the loss function $L(\cdot,y)$ is convex and 1-Lipschitz, similar to \autoref{proof:expand_8}, we have:
\begin{align}
    B_{\text{loss}}(\{\mathbf{w}_i\}_{i=1}^K)=&\mathcal{L}(\frac{1}{K}\sum_{i=1}^K\mathbf{w}_i)-\frac{1}{K}\sum_{i=1}^K\mathcal{L}(\mathbf{w}_i)\\
    =& \mathbb{E}[L(f_{\frac{1}{K}\sum_{i=1}^K\mathbf{v}_i,\frac{1}{K}\sum_{i=1}^K\mathbf{U}_i}(\mathbf{x}),y)-\frac{1}{K}\sum_{i=1}^K L(f_{\mathbf{v}_i,\mathbf{U}_i}(\mathbf{x}),y)]\\
    \le & \mathbb{E}[L(f_{\frac{1}{K}\sum_{i=1}^K\mathbf{v}_i,\frac{1}{K}\sum_{i=1}^K\mathbf{U}_i}(\mathbf{x}),y)-L(\frac{1}{K}\sum_{i=1}^Kf_{\mathbf{v}_i,\mathbf{U}_i}(\mathbf{x}),y)]\\
    \le & \mathbb{E}[|f_{\frac{1}{K}\sum_{i=1}^K\mathbf{v}_i,\frac{1}{K}\sum_{i=1}^K\mathbf{U}_i}(\mathbf{x})-\frac{1}{K}\sum_{i=1}^Kf_{\mathbf{v}_i,\mathbf{U}_i}(\mathbf{x})|],
\end{align}
where the expectation is with respect to the server dataset.  Then use the bound of $z(\alpha)$, with probability $1-\delta$, we have
\begin{align}
     B_{\text{loss}}(\{\mathbf{w}_i\}_{i=1}^K) \le \frac{\sqrt{2h}b}{2(1-\delta)^\frac{2}{hl+h}}d_{\epsilon+\gamma\Gamma^2}(d_{\epsilon+\gamma\Gamma^2}+d_{\text{anc}})\log(4hK^2/\delta).
\end{align}
    
\end{proof}

\section{More Results}\label{app_sec:more_results}

\begin{table}[t]
    \centering
    \caption{\textbf{Verification of transitivity of linear mode connectivity with less performed anchor models.} CIFAR-10. "Random init. Anchors" refers to that anchor models are randomly initialized models whose initializations are also different from the trained models. "Semi-trained Anchors" refers to that anchor models are trained for one epoch with less performed accuracy. It can be seen that when the anchor models are less performed ($\mathcal{A}(w_{anc})$s are low), the transitivity still holds that connectivity loss to the same anchor model can reduce connectivity barrier.}
    \resizebox{\linewidth}{!}{
    \begin{tabular}{c|c|c|c|c}
    \toprule
    \textbf{Models} & \textbf{Metrics} & \textbf{Vanilla CE Loss} & \textbf{Connectivity Loss w/ Random Init. Anchors} & \textbf{Connectivity Loss w/ Semi-trained Anchors} \\ \midrule
    CNN & $\frac{\mathcal{A}(w_1)+\mathcal{A}(w_2)}{2}$ & 64.0±0.5 & 63.0±0.8 & 63.8±0.9 \\ \midrule
    CNN & $\mathcal{A}(w_{anc})$ & \ & 9.9±0.0 & 45.6±0.0 \\ \midrule
    CNN & $\mathcal{A}(\frac{w_{anc}+w_1}{2})$ & \ & 56.0±2.7 & 54.2±0.8 \\ \midrule
    CNN & $\mathcal{A}(\frac{w_1+w_2}{2})$ & 11.5±0.9 & 23.5±5.4 & 19.0±4.4 \\ \midrule
    CNN & Acc. Barrier & 0.821 & 0.626 (23.8\%↓) & 0.702 (14.5\%↓) \\
    \toprule
    ResNet20 & $\frac{\mathcal{A}(w_1)+\mathcal{A}(w_2)}{2}$ & 66.7±0.9 & 67.4±1.3 & 69.0±0.2\\ \midrule
    ResNet20 & $\mathcal{A}(w_{anc})$ & \ & 7.1±0.0 & 29.9±0.0 \\ \midrule
    ResNet20 & $\mathcal{A}(\frac{w_{anc}+w_1}{2})$ & \ & 38.3±4.1 & 42.4±1.2 \\ \midrule
    ResNet20 & $\mathcal{A}(\frac{w_1+w_2}{2})$ & 13.0±3.8 & 19.5±0.7 & 21.0±5.4 \\ \midrule
    ResNet20 & Acc. Barrier & 0.805 & 0.710 (11.8\%↓) & 0.696 (13.5\%↓) \\ \bottomrule
    \end{tabular}
    }
    \label{tab:transitivity_less_performed_anchors}
\end{table}

\begin{table}[t]
    \centering
    \caption{\textbf{Comparison of computation cost to reach the target accuracies.} The computation cost is measured by the wall-clock time (minutes) during the implementation, and the less time, the less computation overhead. Settings: Tiny-ImageNet, non-IID hyper.=0.5, $M=50$ , $E=3$. It can be seen that FedGuCci require less computation to reach the target accuracies.}
    \resizebox{\linewidth}{!}{
    \begin{tabular}{c|c|c|c|c|c|c|c|c}
    \toprule
    \textbf{Methods} & \textbf{FedAvg} & \textbf{FedProx} & \textbf{FedDyn} & \textbf{FedRoD} & \textbf{MOON} & \textbf{FedLC} & \textbf{FedSAM} & \textbf{FedGuCci} \\ 
    \midrule
    Target Acc: 20\% & 798m (×1.00) & 872m (×1.09) & 1091m (×1.37) & 759m (×0.95) & 848m (×1.06) & 652m (×0.82) & 748m (×0.94) & \textbf{578m (×0.72)} \\ 
    \midrule
    Target Acc: 23\% & / & 1173m (×1.00) & 1181m (×1.01) & 1337m (×1.14) & 2376m (×2.0267m (×1.08) & \textbf{752m (×0.64)} \\ 
    \midrule
    Target Acc: 25\% & / & 1413m (×1.00) & 1363m (×0.96) & / & 3649m (×2.58) & / & 1497m (×1.06) & \textbf{926m (×0.66)} \\ 
    \bottomrule
    \end{tabular}
    }
    \label{tab:computation_cost}
\end{table}

In \autoref{tab:transitivity_less_performed_anchors}, we verify the transitivity of LMC under less performed anchor models, such as random initialization and semi-trained models. It can be seen that the transitivity stills holds regardless of the properties of anchor models. Though a better trained anchor model may lead to better transtivitiy. 


In \autoref{tab:computation_cost}, we compare the computation costs of methods in terms of reaching a targeted accuracy.

\section{More Related Works}\label{app_sec:more_related}

\textbf{Linear Mode Connectivity.} Linear mode connectivity (LMC) refers to the phenomenon that there exists a loss (energy) barrier along the linear interpolation path of two networks, in the cases where i) the two networks have the same initialization and are trained on the same dataset but with different random seeds (data shuffles) or augmentations~\citep{ainsworth2022git}; ii) the two networks are with different initializations but are trained on the same dataset~\citep{entezari2021role}; iii) the two networks are the initial network and the final trained network~\citep{vlaar2022can}. 
In our paper, the transitivity of LMC can be applied to i), ii), and iii), and especially, the two trained models can have different initializations. 
Specifically, \citet{adilova2023layerwise} examines layer-wise LMC, and finds that there may be no barriers in the layer-wise manner. \citet{frankle2020linear} connects linear mode connectivity with the lottery ticket hypothesis and finds better connectivity can result in better pruning performances. \citet{vlaar2022can} studies the relationship between generalization and the initial-to-final linear mode connectivity. \citet{zhao2020bridging} bridges mode connectivity and adversarial robustness. Some works try to extend mode connectivity beyond ``linear'', e.g., searching for a non-linear low-loss path~\citep{draxler2018essentially} or studying mode connectivity under spurious attributes~\citep{lubana2023mechanistic}. 

Studying the barriers in LMC is an important direction of LMC. Previous works find that there may be no barriers between different modes, but the connected regions may be non-linear~\cite{draxler2018essentially,garipov2018loss}. In \citet{garipov2018loss}, the authors propose to find paths along modes by learning Polygonal chain and Bezier curve. Also, Nudged Elastic Band can also be used to find that connected paths~\cite{draxler2018essentially}. In \citet{wortsman2021learning}, the authors propose to learn connected but diverse low-loss subspaces for efficient ensembling. Our work about the transitivity of LMC is inspired by the previous works of learning connected paths. However, instead of learning diverse modes for ensembling, we aim to use the anchor model to improve the linear connectivity between two independent modes. 

\textbf{Generalization of Federated Learning.} Generalization and personalization are two important goals of federated learning systems~\cite{chen2021bridging,pmlr-v202-li23s,fedetf,yuan2021we}. Previous works study and understand the property and nature of generalization in FL. In \citet{yuan2021we}, the authors rethink the previous definition of generalization by considering the data distributions of non-participated clients as the participation gap and propose a new data split method based on the insight. In the paper of FedRoD~\cite{chen2021bridging}, the authors claim that generalization and personalization are not conflicted; instead, improving generalization is the basis for better personalization. 

Some works aim to improve generalization from both the server and client sides. For the clients, sharpness-aware minimization methods are introduced at the local to find a flatter minimum of local solvers for better generalization~\cite{fedsam-eccv,fedsam-icml}. Global sharpness-aware minimization is also considered~\cite{dai2023fedgamma}. In addition, previous literature seeks to tackle local heterogeneity to improve generalization, and methods like proximal terms~\cite{li2020federated}, dynamic regularization~\cite{acar2020federated}, variance reduction~\cite{karimireddy2020scaffold}, logit calibration~\cite{zhang2022federated}, fixed classifier~\cite{fedetf}, and balanced loss~\cite{chen2021bridging} are devised. For the server, weighted aggregation approaches to de-bias local updates~\cite{wang2020tackling} or heterogeneity~\cite{ye2023feddisco} can improve generalization. Recently, global weight shrinking that sets smaller aggregation weights has been studied for unleashing the potential of weight regularization in boosting the generalization of FL~\cite{pmlr-v202-li23s}.


\end{document}